\documentclass{article}

\usepackage[preprint]{neurips_2026}

\usepackage[utf8]{inputenc} 
\usepackage[T1]{fontenc}    
\usepackage{hyperref}       
\usepackage{url}            
\usepackage{booktabs}       
\usepackage{amsfonts}       
\usepackage{nicefrac}       
\usepackage{microtype}      
\usepackage{xcolor}         

\usepackage[font=footnotesize,skip=0pt]{caption}

\usepackage[utf8]{inputenc} 
\usepackage[T1]{fontenc}    
\usepackage{hyperref}       
\usepackage{url}            
\usepackage{booktabs}       
\usepackage{amsfonts}       
\usepackage{nicefrac}       
\usepackage{microtype}      
\usepackage{xcolor}         
\usepackage{adjustbox} 
\usepackage{tabularx}
\usepackage{pifont}

\usepackage{epsfig}
\usepackage{graphicx}
\usepackage{amsmath}
\usepackage{amsthm}
\usepackage{amssymb}
\usepackage{multirow}
\usepackage{booktabs}
\usepackage{subcaption}
\usepackage{algorithm}
\usepackage{algorithmic}
\usepackage{colortbl}
\definecolor{ForestGreen}{RGB}{34,139,34}
\definecolor{Crimson}{RGB}{220,20,60}

 \newtheorem{lemma}{Lemma}
  \newtheorem{proposition}{Proposition}

\newcommand{\dlt}[1]{{\footnotesize\color{gray}$#1$}}

\definecolor{LightCyan}{rgb}{0.88,1,1}
\definecolor{Gray}{gray}{0.85}
\definecolor{LightGray}{gray}{0.5}
\definecolor{LightBlue}{rgb}{0.678,0.847,0.902} 

\usepackage[font=footnotesize,skip=1pt]{caption}

\title{Diffusion Meta-Prompting and Steering for Generalizable Foundation Model Adaptation}

\author{
Deepak Sridhar$^{1}$ \And
Yi Li$^{2}$ \And
Kartikeya Bhardwaj$^{2}$ \And
Shuangjun Liu$^{2}$ \And
Taotao Jing$^{2}$ \And
Yuan Li$^{2}$ \And
Shuai Zhang$^{2}$ \And
Jiancheng Lyu$^{2}$ \And
Dashan Gao$^{2}$ \And
Nuno Vasconcelos$^{1}$\And
$^{1}$University of California, San Diego\quad
$^{2}$Qualcomm AI Research\thanks{Qualcomm AI Research is an initiative of Qualcomm Technologies, Inc.}\\
\small{\texttt{desridha@ucsd.edu, nvasconcelos@ucsd.edu}}
}

\begin{document}

\maketitle

\vspace{-8mm}
\begin{abstract}
  Prompt learning is a popular method for adapting foundation models, but learned prompts are typically task-specific and fail to generalize to new classes, domains, or compositions of tasks. In this paper, we introduce a \textbf{Diffusion Meta-Prompt (DMP)} model , a framework that models the distribution of learned prompts using diffusion models. Given a repository of previously learned prompts, DMP is trained and sampled without access to the original task examples or task losses, and synthesizes new prompts conditioned on natural language task descriptions. To improve the sampling stability, we introduce a test-time steering strategy for DMP, which uses the best training-selected prompt in the repository as a latent anchor during diffusion sampling, without retraining the DMP or accessing test classes. DMP improves generalization across classification, retrieval and text-to-image generation tasks, supports concept composition and negative prompting without explicit training. It reduces storage and inference costs by over 90\% compared to prompt retrieval methods. For composite classification, DMP achieves upto \textbf{2.0\%} average gain over prior meta-learning methods across 55 pairs of datasets with gains as high as \textbf{8.5\%} on specific pairs such as Eurosat and Flowers.  DMP also enhances cross-task generalization with $\sim$\textbf{2-9\%} improvement for hierarchical classification task. We further provide a theoretical guarantee bounding the expected task loss of prompts sampled from a DMP. Code is available here: \href{https://github.com/DeepakSridhar/dmp}{DMP}

\end{abstract}

\section{Introduction}
\label{sec:intro}

\begin{figure*}[t]
\centering
\begin{minipage}[t]{0.98\textwidth}
  \centering
  \includegraphics[width=\linewidth, trim=50 150 50 120, clip]{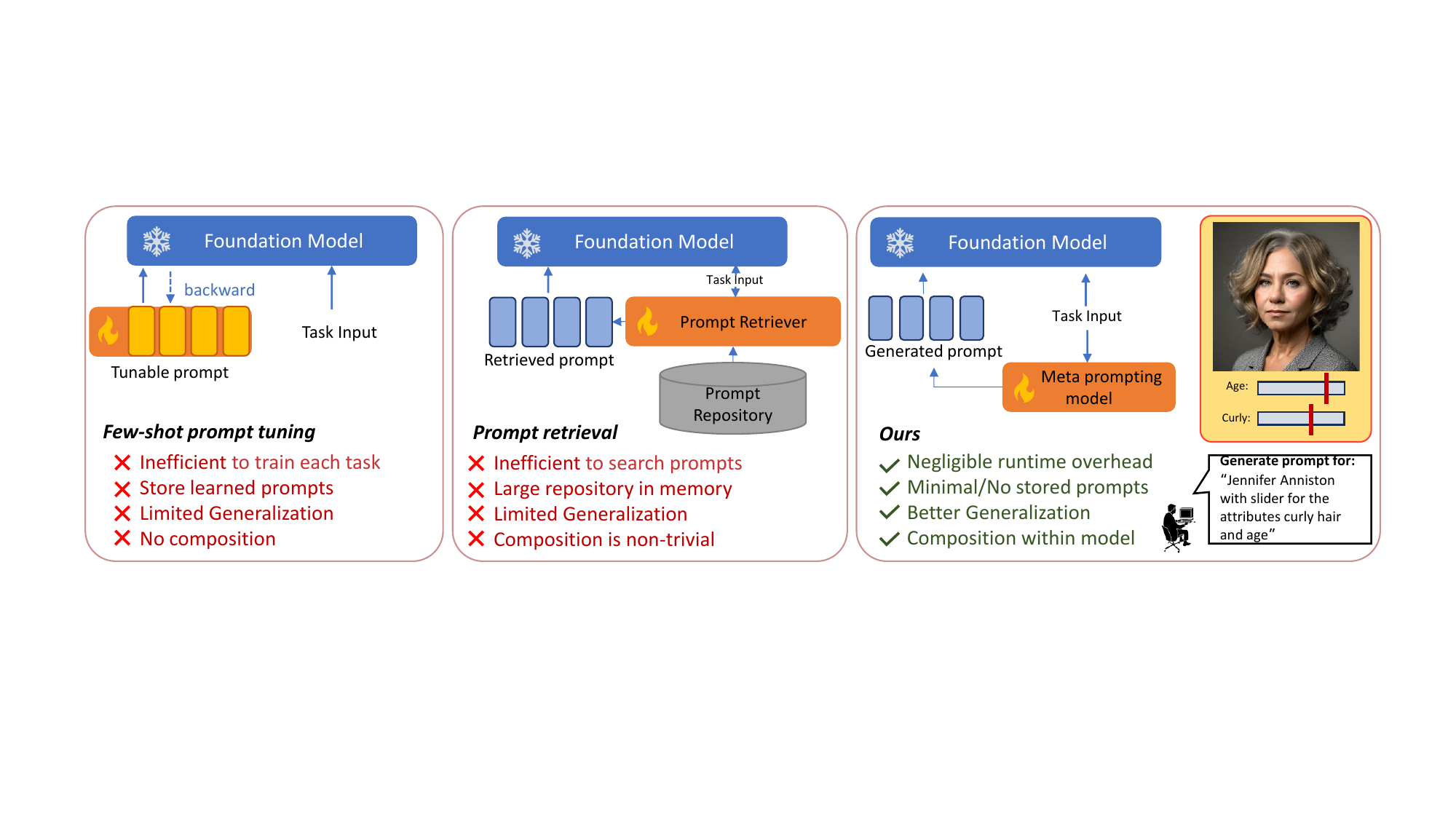}
  \caption{\label{fig:teaser_comparison} \textbf{DMP versus existing approaches}: DMP models provide a natural language interface for the adaptation of foundation models, while eliminating the complexities of large prompt repositories, offering better generalization and composition with a minimal runtime overhead due to the feasibility of using small DMP models and fast sampling.}
\end{minipage}
\hfill
\begin{minipage}[t]{0.98\textwidth}
\centering
\scriptsize
\captionof{table}{\textbf{Conceptual differences between DMP and prior meta-learning methods.} 
}
\setlength{\tabcolsep}{2pt}
\resizebox{\linewidth}{!}{
\begin{tabular}{lcccl}
\toprule
\textbf{Method} & \textbf{Task Data/} & \textbf{Supports } &
\textbf{Learns } & \textbf{Core Idea} \\
\textbf{} & \textbf{Loss Free} & \textbf{Multiple Tasks} &
\textbf{across Tasks} &  \\
\midrule
Bayesian Prompt Learning & {\color{red}\ding{55}} & {\color{red}\ding{55}} & {\color{red}\ding{55}} & Bayesian uncertainty modeling 
\\
(BPL)&&&&over task-specific prompts. \\
\hline
Prompt Learning via & {\color{red}\ding{55}} & {\color{green}\checkmark} & {\color{green}\checkmark} & Meta-regularization to improve prompt 
    \\
 Meta-Regularization (ProMetaR)&&&&generalization across tasks; requires training data. \\
 \hline
Gradient-Regulated Meta-Prompt & {\color{red}\ding{55}} & {\color{green}\checkmark} & {\color{red}\ding{55}} & Meta-learned prompt init + gradient 
\\
(GRAM)&&&&regulator for few-shot cross-domain generalization. \\
\hline
PRewrite (Prompt Rewriting w/ RL) & {\color{red}\ding{55}} & {\color{green}\checkmark} & {\color{red}\ding{55}} & LLM-based prompt rewriter trained with RL  \\
&&&&to improve downstream task performance. \\
\hline
    \midrule
    \textbf{DMP (Ours)}                & {\color{green}\checkmark} & {\color{green}\checkmark} & {\color{green}\checkmark} & \textbf{Learns a prompt distribution; one-shot  } 
    \\
&&&&\textbf{guided sampling, composition, negative prompts.} \\
    \bottomrule
\end{tabular}
}
\label{tab:difftablepriorintro}
\end{minipage}
\end{figure*}

Foundation models \citep{rombach2022high, radford2021learning} generalize to diverse tasks owing to their scale, and can be customized to downstream tasks with parameter-efficient methods~\citep{hu2022lora, zhang2023adding} such as prompt learning (PL)~\citep{zhou2022coop}. In vision-language models, prompts are learned in the visual~\citep{jia2022vpt}, textual~\citep{zhou2022cocoop}, or both spaces~\citep{CoPrompt,Hao_2025_CVPR_TAC} to improve fine-grained~\citep{eurosat} and taxonomic classification~\citep{protect}, class discrimination~\citep{zhou2022coop,zhou2022cocoop}, and robustness to domain shift~\citep{ge2022domainadapt}. In generative models, prompts customize synthesis to specific concepts, people, or objects~\citep{gal2022textual,ruiz2022dreambooth}, or control the strength of attributes such as age, emotion, or style~\citep{eccvsridhar24}.

Despite their power, learned prompts have key limitations (Figure~\ref{fig:teaser_comparison}, left). They are inherently task-specific, requiring task-specific data and losses: classification prompts generalize poorly beyond the few-shot base classes and must be retuned for each label set, while generative prompts for attributes such as age or smiling must be learned separately per entity or concept. Since different applications could share the same prompts, practical systems rely on large repositories of prompts or adaptation weights, such as HuggingFace or Civitai~\citep{civitai}, where prompts must be stored, searched, and manually composed~\citep{luo2024stylus}, limiting reuse.

In this work, we propose a different perspective: instead of treating prompts as isolated optimization outcomes, they should be considered as samples from a structured distribution. Across tasks and domains, learned prompts exhibit regularities that admit smooth interpolation, and support meaningful composition and negation. Capturing this structure would enable the reuse of prior prompt learning effort and improve generalization to new tasks. Rather than learning one set of prompts at a time, there is a need for meta-prompting techniques capable of learning to generate prompts across many tasks and models. However, to be scalable, the learning should be feasible \textit{without revisiting the original task data or losses}. Instead, one would like to rely on the large prompt repositories that are already available, whose prompts were themselves obtained by standard prompt optimization on their respective source tasks. This rules out classical meta-learning techniques~\citep{finn2017model,
nichol2018first,
li2017metasgd,
andrychowicz2016learning}, based on a double optimization loop, where a prompt generator is trained jointly with the prompted models across several tasks. 

To accomplish this goal we propose an approach, \emph{Diffusion Meta-Prompting (DMP)}, where a generative model is learned over prompt embeddings, using diffusion models. A DMP is trained on a repository of previously learned prompts,  and learns to synthesize new prompts conditioned on natural language task descriptions. As illustrated on the right of Figure~\ref{fig:teaser_comparison}, at inference, prompts are simply sampled with the DMP model, enabling one-shot adaptation of foundation models to unseen classes, composite tasks, and novel concept combinations. Table~\ref{tab:difftablepriorintro} compares DMP against prior meta-learning approaches, showing its uniqueness in terms of not requiring access to the original task data or losses once a prompt repository is available, while learning joint prompt structure over tasks, and requiring a minimal amount of prompt storage. The combination of these properties makes DMP much more scalable in the number of tasks and capable of prompt interpolation  and generalization.

Although the ability of diffusion models to synthesize prompts is not surprising, different diffusion trajectories can produce prompts with different downstream generalization behavior. To mitigate these effects, we introduce a simple \emph{test-time steering} mechanism, which allows DMP to reuse the prompt repository at inference time without any additional training. For each dataset, we select the prompt seed with the highest training accuracy, and use this as an anchor during DDIM sampling. At each denoising step, the predicted clean prompt is interpolated with this anchor. This biases the sampling trajectory toward a known low-risk region of the learned prompt manifold while still allowing the diffusion model to synthesize new prompts conditioned on the task specification. Importantly, the anchor is selected once, using the source-task training performance already associated with the repository, and does not use target-task examples or novel test classes. This allows task generalization.
As shown in the rightmost panel of
Figure~\ref{fig:teaser_comparison}, DMPs eliminate the complexities of dealing with prompt repositories, offering better generalization and compositionality with a minimal runtime overhead. For example, given a subject name (``Jennifer Aniston'') and attributes (``hair'' and ``age''), a DMP can generate a set of prompts to personalize a foundation diffusion model like Stable Diffusion XL (SDXL) to the generation of images with this combination of concepts.

We choose diffusion as the generative model for prompts since it offers better generalization than alternatives like autoregressive methods~\citep{radford2019language} (shown later in Table ~\ref{tab:supp_copromptcomparison}, Appendix ~\ref{app:ablation-autoregressive}). This is demonstrated in two ways. First, we show that {\it DMPs learn to produce a range of sophisticated prompt operations\/}, such as concept composition, guided sampling, novel subject generation,
editing and negative prompting
{\it without explicit training\/}. Second, we show that {\it DMPs can learn multiple tasks,} by showing that a single model can be trained to produce prompts for both subject personalization and slider attributes. This replaces the combinatorial complexities of searching separate prompt repositories for the two (and potentially more) tasks into a single DMP model. DMPs are also shown applicable to a diversity of fundamentally different tasks, ranging from image synthesis to composite classification. For the latter, we show that learning the distribution of prompts across datasets and label sets generalizes to unseen classes better than prior meta-learning approaches and the standard approach of individual prompt learning per dataset. The fact that this happens {\it even though the DMP model is trained on the individual prompts of the baseline approach} shows that there is structure in prompt space, which DMPs learn for improved generalization. 

Beyond prompt embeddings, the same principle can be applied to foundation-model
representations themselves. We therefore evaluate DMP in a sparse multi-view retrieval
setting, where a query view of an object must retrieve the remaining views of the same
object from a gallery in which many views are missing. In this setting, DMP is trained on objects
disjoint from the gallery and synthesizes embeddings for missing views. Retrieval is then
performed with an aggregated gallery containing both observed and synthesized
embeddings. We demonstrate that DMP can also model such structured distributions
beyond learned prompts, and can improve retrieval for a different representation space
and downstream objective without accessing the gallery objects during training (Table~\ref{tab:mvimgnet_retrieval}). 

Overall, the paper makes the following key contributions

\begin{itemize}
    \vspace{-.1in} 
   
    \item We introduce \emph{Diffusion Meta-Prompting}, a meta-learning framework that treats learned prompts as samples from a shared distribution, which is learned with diffusion models. 
    Unlike prompt learning, retrieval, or refinement methods, DMP is trained once on a repository of previously learned prompts and synthesizes new task-specific prompts at inference \textbf{without revisiting the original task data or task losses, and without additional prompt optimization.}
    \vspace{-.05in} 
    \item We introduce a test-time steering mechanism at inference  without any additional training that produces prompts with better generalization (upto \textbf{2.0\%}) over standard  sampling.
    \vspace{-.05in} 
    \item We provide a theoretical guarantee bounding the expected task loss of prompts sampled from a DMP in terms of the empirical loss of the prompt repository and the denoising error of the diffusion model.
    \vspace{-.05in} 
    \item Through extensive experiments, we show that DMP improves generalization in the most challenging regimes where prompt learning typically fails, including cross-task, and composite classification settings (\textbf{up to 2.0\% accuracy gains}), while also \textbf{reducing storage and inference costs by over 90\%} compared to prompt retrieval approaches.
    \vspace{-.05in} 
    \item We also demonstrate that DMP is representation-agnostic by applying it to sparse
multi-view retrieval experiment which
improves retrieval by \textbf{+3.44 mAP} over direct DINOv2
retrieval.
    \vspace{-.1in} 
\end{itemize}

\vspace{-2mm}
\section{Related work}

\textbf{Foundation models.} 
We focus on two classes of foundation models: vision-language representation models~\citep{radford2021learning} and text-to-image (T2I) generation models~\citep{rombach2022high}. Contrastively trained vision-language models, such as CLIP, are widely used for open-set classification, detection, and segmentation, with prompting techniques enabling adaptation to fine-grained tasks~\citep{zhou2022coop}. In T2I models, personalization methods like Textual Inversion~\citep{gal2022textual} allow generation of images of custom concepts using only a few examples. Prompt Sliders~\citep{eccvsridhar24} and~\citep{baumann2024attributecontrol} learn concepts or attributes in textual space, either globally or locally. In this work, we propose to unify the prompt learning for foundation models by training a diffusion model to synthesize the prompts conditioned on natural language across tasks.

 \noindent{\bf Prompt learning.}
Prompt tuning learns textual~\citep{zhou2022cocoop}, visual~\citep{jia2022vpt}, or multi-modal~\citep{Khattak_2023_ICCV_promptsrc, yang2024mma, li2025atprompt, li2025dpc} prompts for CLIP~\citep{radford2021learning}. Textual prompt learning, pioneered by CoOp~\citep{zhou2022coop} and CoCoOp~\citep{zhou2022cocoop}, fine-tunes a CLIP model for few-shot transfer by optimizing continuous prompt vectors in the language branch. Visual prompt tuning~\citep{jia2022vpt} introduces task-specific learnable prompts in the visual encoder while keeping the backbone fixed. Bayesian prompt learning~\citep{derakhshani2023variationalbayesian} formulated prompt learning as a variational inference problem and demonstrated its ability to generalize to unseen classes at the expense of base class accuracy. Multi-modal prompt learning~\citep{CoPrompt,Hao_2025_CVPR_TAC} optimizes prompts in both vision and language encoders to improve cross-modal alignment. Table~\ref{tab:difftablepriorintro} and ~\ref{tab:difftableprior} in Appendix shows how DMP differs from prior prompt-learning approaches across key axes. Unlike prior approaches, which refine prompts for a single task or single example, DMP uniquely enables \textit{multi-task prompt learning and generation, without access to task specific data or model training, simply requiring access to a repository of pre-computed prompts}.

\noindent\textbf{Meta learning.}  
Meta-
learning methods aim to amortize learning across tasks,
enabling rapid adaptation to new problems~\citep{ha2017hypernetworks,hospedales2021metalearning}. These methods typically have a double-loop optimization, e,g, iterating between leaning of the prompt generator and of the downstream model. Unlike DMP, this is usually not scalable to large numbers of tasks. Recently, diffusion
models have been explored as generative priors
for weights and representations~\citep{zhang2024metadiffmetalearningconditionaldiffusion,du2024promptdiffusionrobustifiesanymodality}. ~\citep{du2024promptdiffusionrobustifiesanymodality} proposed a diffusion model for \textit{refining} CLIP prompts for classification. It is trained on a \textit{specific example and classification task} to improve CLIP prompts for that task.  Unlike
this type of refinement approach, DMP is a generalist method, trained to sample prompts across tasks and downstream functionalities.

\section{Diffusion Meta-Prompting}\label{sec:dmp}
We introduce the \emph{Diffusion Meta-Prompting} (DMP) framework for synthesizing prompts conditioned on task descriptions.
A community of users first learn a set of prompts $x(c)$ for each task $c$ in a set $\mathcal{C}$, using standard prompt learning techniques.  The prompts $x(c)$ may cover multiple tasks, e.g. SDXL image synthesis or CLIP-based image classification. The learned prompts $x(c) \in\mathbb{R}^d$ are stored in a {\it prompt repository} $\mathcal{R}$, together with textual descriptions $y(c)$ of the task, e.g. 
``SDXL prompt for tattoo synthesis".
The repository prompts 
$x(c)$, are then used to train the DMP model. 
This is a conditional prompt generator $p_\theta(x \mid y)$, parameterized by a diffusion model. Since the repository prompts $x(c)$ are already learned with a downstream task loss $\mathcal{L}_{\text{task}}(S;c)$, a DMP trained with the standard MSE loss suffices to synthesize prompts of low downstream risk for task $c$ when conditioned on $y(c)$, as shown theoretically in Proposition  \ref{thm:combined_main}.

\noindent\textbf{Training.}
DMPs are diffusion models~\citep{nethermo-sohl-dickstein15,ddpm_neurips_20} that synthesize prompts by iteratively denoising  a noise seed. DMP training is based on a pair of forward and backward Markov chains. Given a task or concept $c$, an associated prompt $x(c)$ is retrieved from  $\cal R$ and noised according to  a forward process that progressively adds noise to $x_0 = x(c)$, according to  
\begin{equation}
    x_t = \sqrt{{\alpha}_t} x_0 + \sqrt{1 - {\alpha}_t} \, \epsilon_t, \label{eq:forward}
\end{equation}
where $t$ is a timestep, \( {\alpha}_t := \prod_{s=1}^t (1 - \beta_s) \), \( \{ \beta_t \}_{t=1}^T \) is a variance schedule, $\epsilon_t \sim N(0, {\bf I})$ and $N$ is a Gaussian distribution.
In the reverse process, a neural network $\epsilon_{\theta}$ recurrently denoises $x_t$ to recover $x_0$. 
The network
$\epsilon_\theta(x_t, t)$ is a U-Net~\citep{ronneberger2015u} with self and cross-attention layers~\citep{vaswani2017attention}. The latter are conditioned by a text prompt $y(c)$ that specifies the task $c$, e.g. ``a personalization prompt for Jennifer Anniston'', in the example of Figure~\ref{fig:teaser_comparison}.  A text embedding \( \tau_{\theta}\) maps $y(c)$ into a conditioning vector \( \tau_{\theta}(y) \), where we omit the argument $c$ for brevity. The denoising network $\epsilon_{\theta}(x_t, \tau_{\theta}(y), t)$ is trained to predict noise $\epsilon_t$, by minimizing the risk  
    \begin{equation}
\mathcal{L}_{\mathrm{denoise}}(\theta) = \mathbb{E}_{t, x_0, y, \epsilon} \left[ \left\| \epsilon_t - \epsilon_{\theta}(x_t, \tau_{\theta}(y), t) \right\|^2 \right]. \label{eq:loss}
\end{equation}

In our implementation, the DMP model is trained with classifier-free guidance, where an empty text or null prompt is used 20\% of the time, to allow for better guidance during sampling. After training, a user simply specifies the task text prompt $y(c)$. The diffusion model samples a prompt $x(c)$ 
with 
\begin{equation}
x_{t-1}
=
\sqrt{\alpha_{t-1}} \hat x_0
+
\sqrt{1-\alpha_{t-1}-\sigma_t^2}\,
\epsilon_\theta(x_t, \tau_{\theta}(y), t)
+
\sigma_t \epsilon_t
\label{eq:step}
\end{equation}
where $\hat x_0 =
(x_t - \sqrt{1-\alpha_t}\,\epsilon_\theta(x_t, \tau_{\theta}(y), t))/\sqrt{\alpha_t}$ is a prediction of the denoised prompt according to (\ref{eq:forward}), $x_T \sim N(0, {\bf I})$ is a noise seed, and $x(c) = x_0$. 
From a meta-learning perspective, $\theta$ are meta-parameters amortizing the production of prompts across tasks. Unlike per-task optimization, DMP enables \emph{one-shot prompt synthesis} by sampling from the learned generator. 

\textbf{Theoretical Guarantee.} 
The following result provides a theoretical guarantee for DMP performance, by bounding the distance between the task loss of the prompts learned by traditional prompt learning and those sampled with a DMP. The proof is given in Appendix~\ref{sec:theory_analysis}. 
\begin{proposition}[DMP performance guarantee]
\label{thm:combined_main}
Let $\theta$ be a DMP such that the the empirical version of the denoising loss of (\ref{eq:loss}) over a repository of $n$ i.i.d.\ prompts 
satisfies
$\mathcal{L}_{\mathrm{denoise}}(\theta)\le\tau$ 
for condition $y$. Then with probability at least $1-\delta$,
\begin{equation}
\left|
\mathbb{E}_{x\sim p_\theta(x\mid y)}\!\left[\mathcal{L}_{\mathrm{task}}(x)\right]
-
\mathbb{E}_{x\sim \widehat p_{\mathrm{data}}(x\mid y)}\!\left[\mathcal{L}_{\mathrm{task}}(x)\right]
\right|
\le
L_{\max}\sqrt{2 C\,\tau}
+
L_{\max}\sqrt{\frac{\log(2/\delta)}{2n}}
+
o(1).
\end{equation}
where $p_{\mathrm{\theta}}(x\mid y)$ is probability distribution of the DMP model conditioned on $y$, $\hat{p}_{data}(x\mid y)$ is the empirical distribution of the data conditioned on $y$, $\mathcal L_{\mathrm{task}}(x)$ is a task loss bounded by $[0,L_{\max}]$, and $L_{max}$, $C$ are constants.
\label{eq:combined_bound_main}
\end{proposition}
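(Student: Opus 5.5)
The plan is to split the gap with the triangle inequality into a \emph{model-vs-population} term and a \emph{population-vs-empirical} term, and to control each by a different tool. Let $p_{\mathrm{data}}(x\mid y)$ denote the true conditional distribution from which the $n$ repository prompts are drawn i.i.d. Then
\begin{equation*}
\bigl|\mathbb{E}_{p_\theta}[\mathcal{L}_{\mathrm{task}}]-\mathbb{E}_{\widehat p_{\mathrm{data}}}[\mathcal{L}_{\mathrm{task}}]\bigr|
\le
\bigl|\mathbb{E}_{p_\theta}[\mathcal{L}_{\mathrm{task}}]-\mathbb{E}_{p_{\mathrm{data}}}[\mathcal{L}_{\mathrm{task}}]\bigr|
+
\bigl|\mathbb{E}_{p_{\mathrm{data}}}[\mathcal{L}_{\mathrm{task}}]-\mathbb{E}_{\widehat p_{\mathrm{data}}}[\mathcal{L}_{\mathrm{task}}]\bigr|.
\end{equation*}

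\textbf{The concentration term.} The second term is handled directly by Hoeffding's inequality. The sample $\mathcal{L}_{\mathrm{task}}(x_i)$, $i=1,\dots,n$, consists of i.i.d.\ variables in $[0,L_{\max}]$. A two-sided Hoeffding bound therefore gives, with probability at least $1-\delta$, a deviation of at most $L_{\max}\sqrt{\log(2/\delta)/(2n)}$.

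\textbf{The model term.} For the first term, I use the fact that $\mathcal{L}_{\mathrm{task}}$ is bounded. For any two distributions, $|\mathbb{E}_p f-\mathbb{E}_q f|\le L_{\max}\,\mathrm{TV}(p,q)$ whenever $f\in[0,L_{\max}]$: shift $f$ by $L_{\max}/2$ and use $\sup|f-L_{\max}/2|=L_{\max}/2$ together with $\|p-q\|_1=2\,\mathrm{TV}$. Pinsker's inequality then gives $\mathrm{TV}\le\sqrt{\tfrac12\mathrm{KL}(p_{\mathrm{data}}\|p_\theta)}$. The remaining task is to bound the KL divergence by the denoising loss. For this I invoke the standard diffusion ELBO, or equivalently Girsanov in continuous time. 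With a weighting of the noise-prediction loss, the KL between the data distribution and the model's reverse process is at most a constant times the $\epsilon$-prediction error plus a prior-mismatch term $\mathrm{KL}(q(x_T\mid x_0)\,\|\,N(0,\mathbf{I}))$. The prior-mismatch term vanishes as $\alpha_T\to 0$, and together with the discretization error it is absorbed into the $o(1)$. Concretely, I aim for $\mathrm{KL}\le 4C\tau+o(1)$, where $C=\max_t \beta_t^2/(2\sigma_t^2\alpha_t'(1-\alpha_t))$-type schedule weights. This yields $L_{\max}\sqrt{\tfrac12\cdot 4C\tau}=L_{\max}\sqrt{2C\tau}$. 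The normalization of $C$ is a free constant, so it can be chosen to match the stated form exactly.

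\textbf{Main obstacle.} The hardest step is reconciling the hypothesis, which controls the \emph{empirical} denoising loss, with an ELBO that controls KL to a distribution. The cleanest route is to apply the ELBO argument with the empirical distribution $\widehat p_{\mathrm{data}}$ as the data distribution. The ELBO bound holds for any data distribution, and the empirical loss is exactly the ELBO-weighted loss under $\widehat p_{\mathrm{data}}$. This gives $\mathrm{TV}(\widehat p_{\mathrm{data}},p_\theta)\le\sqrt{2C\tau}+o(1)$ directly.

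Once the model term is tied to $\widehat p_{\mathrm{data}}$ in this way, the Hoeffding term is no longer needed to pass between distributions in the triangle inequality. It can instead be stated as a bound relating either quantity to the population risk. Both steps hold simultaneously with probability $1-\delta$, so summing them gives a bound that is at least as strong as the displayed one, and the stated inequality follows. I would also note the caveats implicit in the $o(1)$: the ELBO weights must be bounded, which is where $C$ arises, and the conditioning on $y$ must be handled pointwise for the fixed $y$.
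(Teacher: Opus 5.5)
Your first two steps match the paper's proof. You use a triangle inequality through the population $p_{\mathrm{data}}$, Hoeffding for the population-vs-empirical term, and bounded loss $\to$ TV $\to$ Pinsker $\to$ denoising-to-KL lemma for the model term. Your centred estimate $L_{\max}\,\mathrm{TV}$ is sharper than the paper's $2L_{\max}\,\mathrm{TV}$, and rescaling $C$ by $4$ recovers the same $L_{\max}\sqrt{2C\tau}$.

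The gap is in how you resolve the ``main obstacle'': you cannot run the ELBO/Pinsker argument with $\widehat p_{\mathrm{data}}$ as the data distribution. $\widehat p_{\mathrm{data}}$ is a sum of $n$ point masses, while $p_\theta$ is atomless, being a smooth push-forward of Gaussian noise (through the VAE decoder in the classification setting). Two consequences follow:
\begin{itemize}
\item $\widehat p_{\mathrm{data}}\not\ll p_\theta$, so $\mathrm{KL}(\widehat p_{\mathrm{data}}\|p_\theta)=+\infty$.
\item $\mathrm{TV}(\widehat p_{\mathrm{data}},p_\theta)=1$, because the finite support $S$ of $\widehat p_{\mathrm{data}}$ has $\widehat p_{\mathrm{data}}(S)=1$ and $p_\theta(S)=0$.
\end{itemize}
Your claimed $\mathrm{TV}(\widehat p_{\mathrm{data}},p_\theta)\le\sqrt{2C\tau}+o(1)$ is therefore false for small $\tau$.

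The precise error is the assertion that the variational bound controls the KL by the weighted $\epsilon$-loss plus prior mismatch. What it actually controls is the cross-entropy $-\mathbb{E}_{q_0}[\log p_\theta(x_0)]$. This differs from $\mathrm{KL}(q_0\|p_\theta)$ by the entropy of $q_0$, and you also dropped the decoder term $L_0$. That entropy is finite for a continuous $p_{\mathrm{data}}$ but degenerate for an atomic one, so it cannot be pushed into the $o(1)$. A minor point: $\mathcal{L}_{\mathrm{denoise}}$ averages over $t$ while the ELBO sums over $t$, so your $C$ is also missing a factor $T$.

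The paper never attempts this route. It applies its explicitly informal lemma to the population $p_{\mathrm{data}}$, in effect reading the hypothesis $\mathcal{L}_{\mathrm{denoise}}(\theta)\le\tau$ as control of $\mathrm{KL}(p_{\mathrm{data}}\|p_\theta)$. The Hoeffding term then carries the comparison from $p_{\mathrm{data}}$ to $\widehat p_{\mathrm{data}}$, which is the only reason $L_{\max}\sqrt{\log(2/\delta)/(2n)}$ appears in the bound. If your direct bound held, the inequality would hold deterministically and that term would be superfluous; that should have been a warning sign.

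So drop your third paragraph. Keep your original triangle-inequality route, and state the denoising-to-KL link for $p_{\mathrm{data}}$ as the assumed lemma. If you want to genuinely bridge the empirical and population denoising losses, a fixed-function Hoeffding bound will not do, because $\theta$ is fitted on the same $n$ prompts. You would need a uniform-convergence or generalization bound over the denoiser class, which neither your proposal nor the paper supplies.
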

Hence, given large enough $n$ and small enough $\tau$, DMP is guaranteed to meet the performance of prompt learning.

\noindent\textbf{Architecture.} To implement the DMP model, we develop a 1D variant of the popular Stable Diffusion model, where 2D operations are replaced by their 1D counterpart (e.g., 1D-conv). Since the size of the prompt embeddings ($d$) is relatively small, we have found that, for most applications, the model can operate directly in the space ${\cal P}$ of prompts  $x(c)\in \mathbb{R}^d$. However, for applications involving unusually long prompts ($d\geq2048$), we have also trained a variational autoencoder that maps prompts from $\cal P$ into a space of lower dimensionality, for faster training and convergence. This is a 1D variant of stable diffusion autoencoder with less than 1M parameters and a latent dimension of 128. See appendix section \ref{app:implementation} for more details.

\noindent\textbf{Concept Composition.}
By framing diffusion models as Energy Based Models,~\citep{liu2022compositional} showed that it is possible to compose multiple tasks by conjunction or negation. 
Given a diffusion model $\epsilon_{\theta}(x_t, t)$, $n$ tasks $c_i$ are combined by implementing the denoising chain with 
\begin{equation}
    \hat{\epsilon}(x_t, t) = \epsilon_{\theta}(x_t, t) + \sum_{i=1}^{n} \eta_i \left(\epsilon_{\theta}(x_t, \tau_{\theta}(y(c_i)), t) - \epsilon_{\theta}(x_t, t) \right)
    \label{eq:7}
\end{equation}
where $\eta_i$ is a hyperparameter corresponding to a temperature scaling of task $c_i$. 
If the conditioning is just empty text, this reduces to classifier-free guidance.  The standard implementation of task composition is to run the {\it downstream} diffusion model $n$ times (once per task) and average noise predictions with (\ref{eq:7})~\citep{liu2022compositional}. This is significantly more complex than performing the task composition of (\ref{eq:7}) in the (much less complex) DMP, which enables the sampling of {\it single\/} prompt $x_0$ for the downstream model that combine {\it all} tasks $c_1, \ldots, c_n$.

\noindent\textbf{Test-Time Steering.}
While DMP can synthesize prompts in one shot, different samples from
$p_{\theta}(x \mid y(i))$ may lie in prompt regions with different downstream
generalization behavior. To address this, we introduce a simple test-time steering
mechanism that biases diffusion sampling toward a high-performing prompt already
available in the repository, without retraining the DMP model.

For each task dataset $i$, let $\{x_{i,m}\}_{m=1}^{M}$ denote the prompts obtained
from $M$ different prompt-learning initializations. We evaluate prompt $x$, by measuring the accuracy $\mathrm{Acc}_{\mathrm{train}}(x)$ of the downstream model prompted with $x$  on the training split, and select the best
\begin{equation}
    x_i^\star = \arg\max_m \; \mathrm{Acc}_{\mathrm{train}}(x_{i,m}).
    \label{eq:xistar}
\end{equation}
This prompt is then used as an anchor for DMP sampling by replacing, in eq. (\ref{eq:step}),  $\hat x_0$ with 
an estimate of the denoised prompt steered  by $x_i^\star$
\begin{equation}
    \tilde x_0(x_t, y(i), t)
    =
    (1-\lambda_t)\hat x_0(x_t, y(i), t)
    +
    \lambda_t x_i^\star ,
    \label{eq:best_latent_steering}
\end{equation}
where $\lambda_t \in [0,1]$ controls the steering strength. In practice, we use a
constant value $\lambda_t=\lambda$ for all denoising steps. The DDIM
transition is then computed with $\tilde x_0$ in place of $\hat x_0$ in (\ref{eq:step}) with $\sigma_t=0$. 
This procedure can be interpreted as imposing a test-time prior over the prompt manifold.
The diffusion model still conditions on the dataset text $y(i)$ and samples from
a random noise seed, but the synthesized prompt is gently pulled toward a
known low-risk region of the repository. Note that the generated prompt is not a copy
of $x_i^\star$. Instead, $x_i^\star$ acts as an anchor that stabilizes sampling
around a prompt basin of empirically strong performance on the training data.

For composite tasks, e.g. classification problems where the conditioning string contains class names from
$K$ datasets of anchors $\{x_{i_j}^\star\}_{j=1}^{K}$, we use the average anchor 
$
        x_{\mathrm{comp}}^\star =
    \frac{1}{K}\sum_{j=1}^{K} x_{i_j}^\star,
    \label{eq:avganchor}
    $
as $x_i^\star$ in Eq.~\ref{eq:best_latent_steering}.
Importantly, the prompt anchor is selected only from training-set performance and
does not require access to novel test data. The method is therefore a
lightweight test-time scaling strategy: it reuses the repository to identify a
reliable anchor and improves the stability of DMP sampling while
preserving the ability to synthesize new prompts conditioned on task specifications.

\vspace{-4mm}

\section{Experiments}
\vspace{-2mm}

\textbf{Implementation Details.}\label{sec:implementation}
We conduct all experiments on 24GB (NVIDIA-A10 or 3090-RTX) GPUs using pytorch. DMP models are trained with the standard hyperparameters of~\citep{rombach2022high}, a learning rate of $1e^{-6}$, and batch size of $320$, for $2,000$ epochs. This takes about a day to train on 4 GPUs for 3,000 SDXL personalization prompts. For classification prompts, the DMP autoencoder and diffusion is trained for $10,000$ epochs, which takes about 10 hours\footnote{All datasets used in the paper were solely downloaded and evaluated by the university}. DMP uses 50 DDIM timesteps to sample one prompt which takes about  1 sec and it can be sped up with faster sampling methods~\citep{liu2022pseudo}.  For fair comparisons, we use the codebase of respective prompt learning methods and run all our experiments under this setup. For test-time steering, we use $\lambda=0.2$. More details on the architecture and models are in the appendix \ref{app:implementation}.

\vspace{-4mm}
\subsection{DMP for classification}\label{sec:dmpgeneralization}\label{sec:coop-coprompt}
\vspace{-2mm}
We trained a DMP model, DMPClass, for prompting a CLIP-style classifier. The goal is not to beat the SOTA in prompt based classifiers, but to show that DMP can synthesize prompts that generalize to multiple tasks, outperforming other meta-prompting approaches. We utilize CoOp prompts to build the prompt repository with imagenet and the 10 fine-grained datasets commonly used in the prompt learning literature~\citep{zhou2022coop,zhou2022cocoop}.  We note that DMP is agnostic to the prompt repository and can be applied to other prompting methods. However, all methods other than CoOp use additional weights, projection matrices, or architectural enhancements that involve high dimensional parameter matrices. Modeling these parameters  requires much larger diffusion models and compute than we used, which is sufficient to learn prompts. More importantly, most of the meta-prompting baselines that we compare to simply do not scale to the generation of such parameters. Appendix section~\ref{sec:appdmpclass} discusses the application of DMP to other prompting methods, where it is only used to sample prompts, maintaining the additional parameters from the original methods. The appendix shows that DMP still achieves consistent gains over the original prompts, although these are smaller than for CoOp, which has no additional parameters.

\noindent\textbf{Prompt Repository.} Given a classification problem, a prompt set is trained for $N$ epochs (where $N$ follows the original paper settings), using a context vector of size $K$. The CoOp paper reports full results only for context length 16. We use the context length of 4 ($K=4$) suggested in CoCoOp~\citep{zhou2022cocoop}.  All prompt learning methods are trained under the Base2New setting, which assesses generalization to unseen (New) classes by only using half of dataset (Base) classes for training. DMP is trained with the prompts originally learned on the Base set. We evaluate the DMP-sampled prompts on training (Base), remaining (New), and all  (All) classes. We use $M = 40$ different initializations per dataset and save the corresponding prompt embeddings to produce a training repository $\mathcal{R} \in \mathbb{R}^{11 \times 40 \times K \times d}$. The best performing prompt of (\ref{eq:xistar}) ($x_{\mathrm{comp}}^\star$ for composite tasks) is used as anchor for DMP sampling and as the baseline CoOp prompt, against which DMP prompts are compared. 

\noindent\textbf{Training.} Due to the high dimensionality of the prompt embeddings ($K\times d=2048$), DMPClass models are trained with an autoencoder. To obtain the text inputs, we concatenate all the class labels $c_k, k \in \{1,\ldots C_i\},$ of  dataset $i$ into a text string $y(i) = \{c_1, \ldots, c_{C_i}\}$ whose text embedding $\tau(y(i))$ is used to condition the DMPClass model. This setup allows us to flexibly mix and match class names across datasets to enable straightforward compositionality (Table~\ref{table:combinedclasscompareBPL}) without requiring handcrafted or semantically enriched descriptions. Once the model is trained, it suffices to sample conditioned on a string of mixed class labels from any of the datasets. Since the CLIP text encoder has a limit of 77 tokens (around 50 words), these strings can be too long for many classes (e.g. the 1,000 Imagenet classes). To overcome this, we consider each class name independently and obtain the CLIP embedding for each class, resulting in $C$ embeddings for $C$ classes, which are concatenated into a vector that is used to prompt the DMPClass model. 
See~\ref{app:ablationtextinputs} for ablation on text inputs.

\begin{figure*}[t]
\centering
\scriptsize

\begin{minipage}{0.22\columnwidth}
\centering
\scriptsize
\captionof{table}{Zero-shot accuracy comparison on combined dataset classes. }
\label{table:combined11classes}
\setlength{\tabcolsep}{2pt}
\begin{tabular}{lccccc}
    \toprule
    \textbf{Method} & \textbf{All} &\textbf{Base} & \textbf{New} \\
    \midrule
CoOp  & 69.6& 73.6 & 76.7   \\ 
BPL  & 72.0 & \textbf{77.5} & 78.4  \\ 
ProMetaR   & 71.2 & 75.8 & 77.6 \\
\textbf{DMP } & \textbf{73.6} & 76.0 & \textbf{78.8} \\
\midrule
\textbf{$\Delta_{\mathrm{CoOp}}$} & \textcolor{ForestGreen}{\textbf{+4.0}} & \textcolor{ForestGreen}{\textbf{+2.4}} & \textcolor{ForestGreen}{\textbf{+2.1}} \\
    \bottomrule
\end{tabular}

\end{minipage}
\hfill
\begin{minipage}{0.24\columnwidth}
\centering
\scriptsize
\captionof{table}{Accuracy comparison \\on combined EuroSAT and Flowers dataset pair. }
\label{table:combinedclasscompareBPL}
\setlength{\tabcolsep}{1.5pt}
\begin{tabular}{lccccc}
    \toprule
    \textbf{Method} & \textbf{All} &\textbf{Base} & \textbf{New} \\
    \midrule
CoOp (ZS)& 35.5& 50.2 & 59.4   \\ 
BPL (ZS) & 52.6& 61.3 &  73.4  \\ 
ProMetaR (ZS)  & 48.0 & 59.9 & 67.1 \\
\textbf{DMP (ZS)} & \textbf{61.1} & \textbf{74.5} & \textbf{77.8} \\
\midrule
BPL (Trained)& 56.2& 76.1 & 75.1   \\
    \bottomrule
\end{tabular}

\end{minipage}
\hfill
\begin{minipage}{0.52\columnwidth}
\centering
\setlength{\tabcolsep}{1pt}
\captionof{table}{Ablation of test-time steering on 55 composite dataset-pairs classification.  
$\Delta_{\mathrm{steer}}$, $\Delta_{\mathrm{CoOp}}$ denotes the absolute accuracy gain over respective baselines.}
\begin{tabular}{lccccccc}
\toprule
Split & CoOp & BPL & ProMetaR & DMP & DMP & $\Delta_{\mathrm{steer}}$ & $\Delta_{\mathrm{CoOp}}$ \\
 &  &  &  & w/o Steering & w/ Steering &  & \\
\midrule
All  & 58.7 & 60.8 & 61.2 & 61.4 & {\bf 63.1} & \textcolor{ForestGreen}{\textbf{+1.7}} & \textcolor{ForestGreen}{\textbf{+4.4}} \\
Base & 65.1 & 66.3 & 67.4 & 67.3 & {\bf 69.4} & \textcolor{ForestGreen}{\textbf{+2.1}} & \textcolor{ForestGreen}{\textbf{+4.3}} \\
New  & 65.1 & 68.7 & 68.9 & 68.6 & {\bf 70.8} & \textcolor{ForestGreen}{\textbf{+2.2}} & \textcolor{ForestGreen}{\textbf{+5.7}} \\
\midrule
Avg. & 63.0 & 65.3 & 65.8 & 65.8 & {\bf 67.8} & \textcolor{ForestGreen}{\textbf{+2.0}} & \textcolor{ForestGreen}{\textbf{+4.8}} \\
\bottomrule
\end{tabular}
\label{tab:ablation_test_time_steering}
\label{ref:coopdatasetpairs1}
\end{minipage}

\end{figure*}

\begin{figure}[t]
\centering
\begin{minipage}{0.22\columnwidth}
\centering
\scriptsize
\captionof{table}{Memory and complexity requirements for DMP against retrieval.}
\label{tab:effcomparison}
\setlength{\tabcolsep}{1pt}
\begin{tabular}{ l| l | l } 
\toprule
 Method & Mem & Time \\ 
 & (GB) $\downarrow$ & (s) $\downarrow$ \\
\midrule
Stylus &   1.30  & 12.1 \\ 
DMP & {\textbf{0.12}}  & {\textbf{1}}  \\ 
$\Delta$& (\textcolor{ForestGreen}{\textbf{91\%}$\downarrow$}) & (\textcolor{ForestGreen}{\textbf{92\%}$\downarrow$}) \\
\bottomrule
\end{tabular}

\end{minipage}
\hfill
    \begin{minipage}{0.26\columnwidth}
        \scriptsize
        \centering
        \captionof{table}{Comparison of DMPVariation with Textual Inversion (TI) for variation synthesis.}
        \label{tab:dmpvariation_faceid}
        \setlength{\tabcolsep}{1pt}
        \begin{tabular}{llll}
            \toprule
            \textbf{Method } & \textbf{Face-ID} & \textbf{CLIP-T} & \textbf{CSD} \\
            (SD-RV)& $\downarrow$ & $\uparrow$ &$\uparrow$\\
            \midrule
            TI & 0.428 & 27.0& 50.4 \\
            DMP & {\textbf{0.299}} & {\textbf{28.4}}& \textbf{61.8} \\ 
            \textbf{$\Delta$} 
& \textcolor{ForestGreen}{\textbf{-0.13}} 
& \textcolor{ForestGreen}{\textbf{+1.4}}
& \textcolor{ForestGreen}{\textbf{+11.4}}\\
            \bottomrule
        \end{tabular}
    \end{minipage}
\hfill
\begin{minipage}{0.48\linewidth}
\centering
\scriptsize
\captionof{table}{Sparse multi-view retrieval on MVImgNet. The gallery contains 5,447 objects,
with only one to four available views per object. DMP is trained on objects disjoint from
the gallery and synthesizes embeddings for missing views.}
\label{tab:mvimgnet_retrieval}
\begin{tabular}{lcccc}
\toprule
Method & NDCG & MRR & Top-1 & mAP \\
\midrule
DINOv2 & 91.64 & 94.36 & 91.44 & 86.20 \\
DMP        & \textbf{93.47} & \textbf{94.42} & \textbf{91.74} & \textbf{89.64} \\
$\Delta$          & \textcolor{ForestGreen}{\textbf{+1.83}} & \textcolor{ForestGreen}{\textbf{+0.06}} & \textcolor{ForestGreen}{\textbf{+0.30}} & \textcolor{ForestGreen}{\textbf{+3.44}} \\
\bottomrule
\end{tabular}
\end{minipage}

\end{figure}

{\textbf{Composite classification.} 
To evaluate the  generalization ability of DMP, we introduce a composite classification task. We consider two settings under this task: (1)  classify over the set of all classes of all datasets and (2) classify over pairs of class label sets from the fine-grained datasets we considered.  
For setting (2), we
 created 55 datasets ${\cal T}_i, i \in \{1, \ldots, 55\}$ containing pairs of All, Base and New classes from the original 11 datasets.  These settings test out-of-domain generalization, since prompts trained on one dataset are used to classify a mix of their classes and classes from  other datasets.  
 For example, while Oxford Flowers classes require modeling flower attributes, the composition with FGVC Aircrafts requires the modeling of both flower {\it and} airplane attributes, on which the original prompts were never trained {\it simultaneously}. For each dataset ${\cal T}_i$, we performed classification with  
(1) the prompt sampled from DMP (trained on  the repository of individual dataset prompts), and
(2) the average of two anchor prompts $x^\star$ learned  from the two datasets that compose ${\cal T}_i$. 

For the first setting, we compare DMP against the baseline CoOp prompts as well as prior meta-prompt learning methods such as BPL~\citep{derakhshani2023variationalbayesian} and ProMetaR~\citep{Prometar_Park_2024_CVPR}. Table~\ref{table:combined11classes} shows the results under All, Base and New Classes. For evaluation over all classes, DMP improves over CoOp by 4\% and outperforms the best meta-prompting baseline (BPL) by 1.6\%. Table~\ref{table:combinedclasscompareBPL} shows, that, in setting (2), the DMP gains can be more even pronounced for specific dataset class pairs, such as  EuroSAT and Flowers. For this pair, DMP exceeds BPL’s zero-shot baseline by \textbf{+8.5/+4.4\%} and  CoOp by \textbf{+25.6/18.4\%}for all/new classes, respectively.  We further compare DMP with BPL (Trained),  which is trained on the entire dataset pair. As shown in Table~\ref{table:combinedclasscompareBPL}, DMP without any task-specific training even outperforms this approach by \textbf{+2.7\%} on new classes. We also note that BPL is computationally expensive, requiring approximately 10 hours on a 40GB GPU even for a single dataset pair, which limits its scalability to composite tasks as compared to DMP. 
 
Table~\ref{ref:coopdatasetpairs1} (left) compares  the performance of the DMP prompts to others, showing  
 the average accuracy across all 55 dataset combination pairs. DMP obtains gains of \textbf{+2.0\%} for Base,\textbf{+1.9\%} for New, and \textbf{+1.9\%} for All classes over ProMetaR, the best baseline for the pairwise setting. DMP further outperforms  CoOp by \textbf{+4.3\%} for Base,\textbf{+5.7\%} for New, and \textbf{+4.4\%} for All classes.

\textbf{Effect of test-time steering.}
Table~\ref{tab:ablation_test_time_steering} also isolates the contribution of the test-time steering mechanism of (\ref{eq:best_latent_steering}). 
Compared with DMP sampling by the same model without steering, test-time steering consistently improves accuracy across all evaluation splits. 
The gain is {\bf +1.7} points on the All split, {\bf +2.1} on the Base split, and {\bf +2.2} on the New split, yielding an average improvement of {\bf +2.0} points. 
This suggests that steering the DDIM sampling trajectory toward a training-selected high-performing prompt is a useful test-time regularization towards high-performing regions of the prompt manifold.  Importantly, the steering target is selected using only training-split performance and does not require access to base or novel test classes, so the improvement is obtained without additional prompt training or test-label supervision. Ablation on the steering strength is shown in Appendix~\ref{app:steeringstrength}

\textbf{Taxonomic.} We next consider the more  challenging cross-task generalization setting of taxonomic classification~\citep{protect}. This tests the ability of the downstream models to classify images with respect to different class subsets in a class hierarchy, using the metric of Mean Treecut Accuracy (MTA) over 25 treecuts. Since the training classes are the leaves of taxonomy, hierarchical classification requires generalization from finer to coarser classes. For example, while `cats' and `dolphins' are two very different classes, they both belong to `mammals'. \citep{protect} showed that standard prompt learning methods perform poorly on this task, where effective prompt learning requires hierarchical training and sampling of label sets across the entire taxonomy. 

Figure~\ref{fig:mtacomparison} shows the MTA performance of CoOp, BPL, and DMP prompts. Since only ImageNet and SUN provide labels with respect to the entire class taxonomy, we used these two datasets, plus the OOD variants of ImageNet on this experiment.
Table ~\ref{tab:mta_hca} presents the full results of the experiment.
DMP prompts obtain an average gain between  \textbf{2.4} and \textbf{9.4\%} over BPL and a larger gain between \textbf{5.8} and \textbf{11.7\%} over CoOp, even though {\it no prompts are ever trained for hierarchical classification}. These results are consistent with the previous findings and show that DMP excels as the generalization challenge increases, learning to produce prompts that are much more robust than those of the CoOp prompt learning method in which it was trained.

\begin{figure*}[t]
\centering
\begin{minipage}[t]{0.4\linewidth}
\centering
\includegraphics[width=\columnwidth]{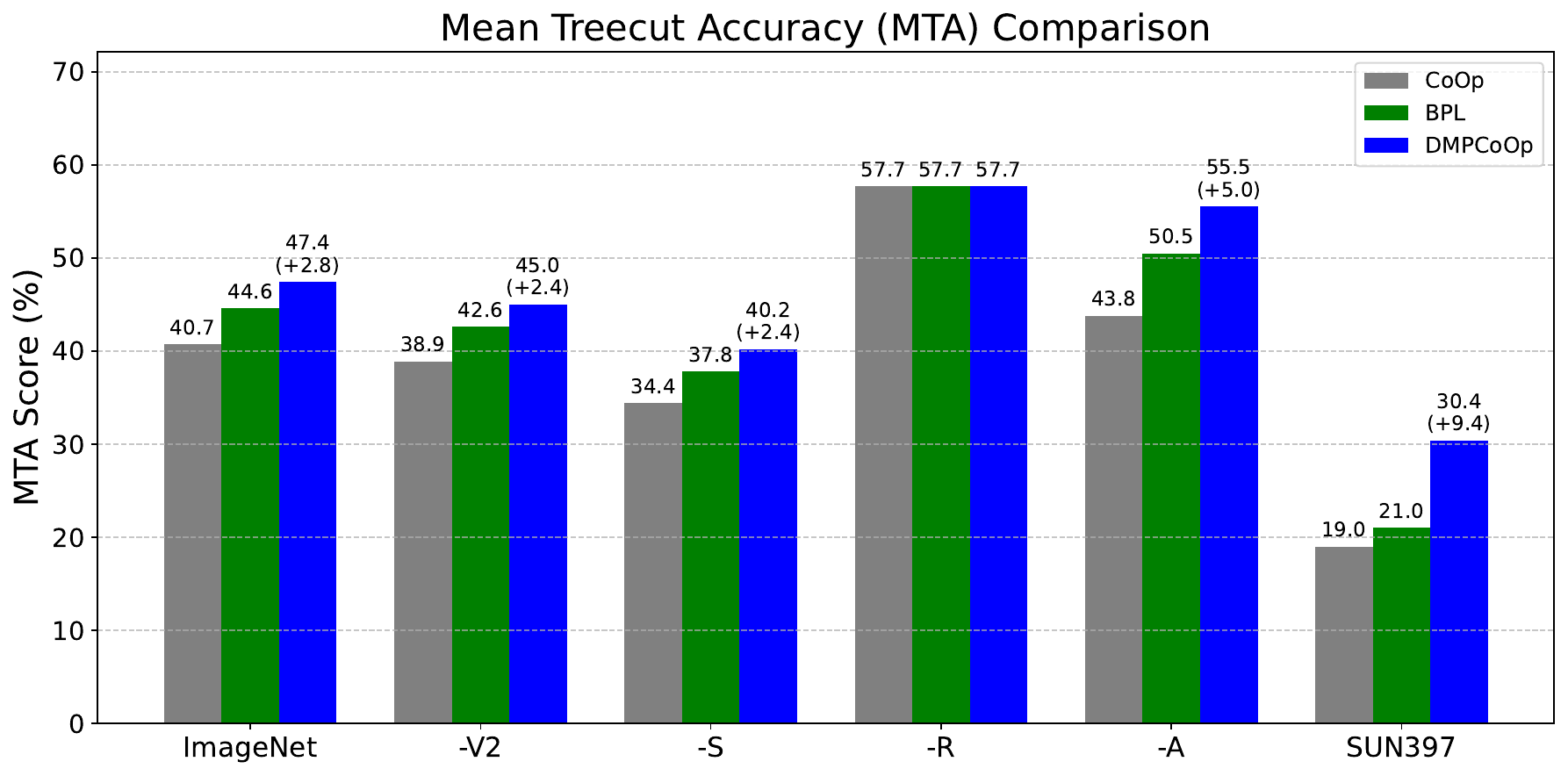}
\caption{\label{fig:mtacomparison} \textbf{Taxonomic classification generalization}: Comparison of DMP against CoOp and BPL prompts for hierarchical classification. DMP prompts generalize better than BPL prompts with $\approx$ \textbf{2-9\%} accuracy gains.}
\end{minipage}
\begin{minipage}[t]{0.58\textwidth}
    \centering
\includegraphics[width=\columnwidth, trim= 15 100 20 80, clip]{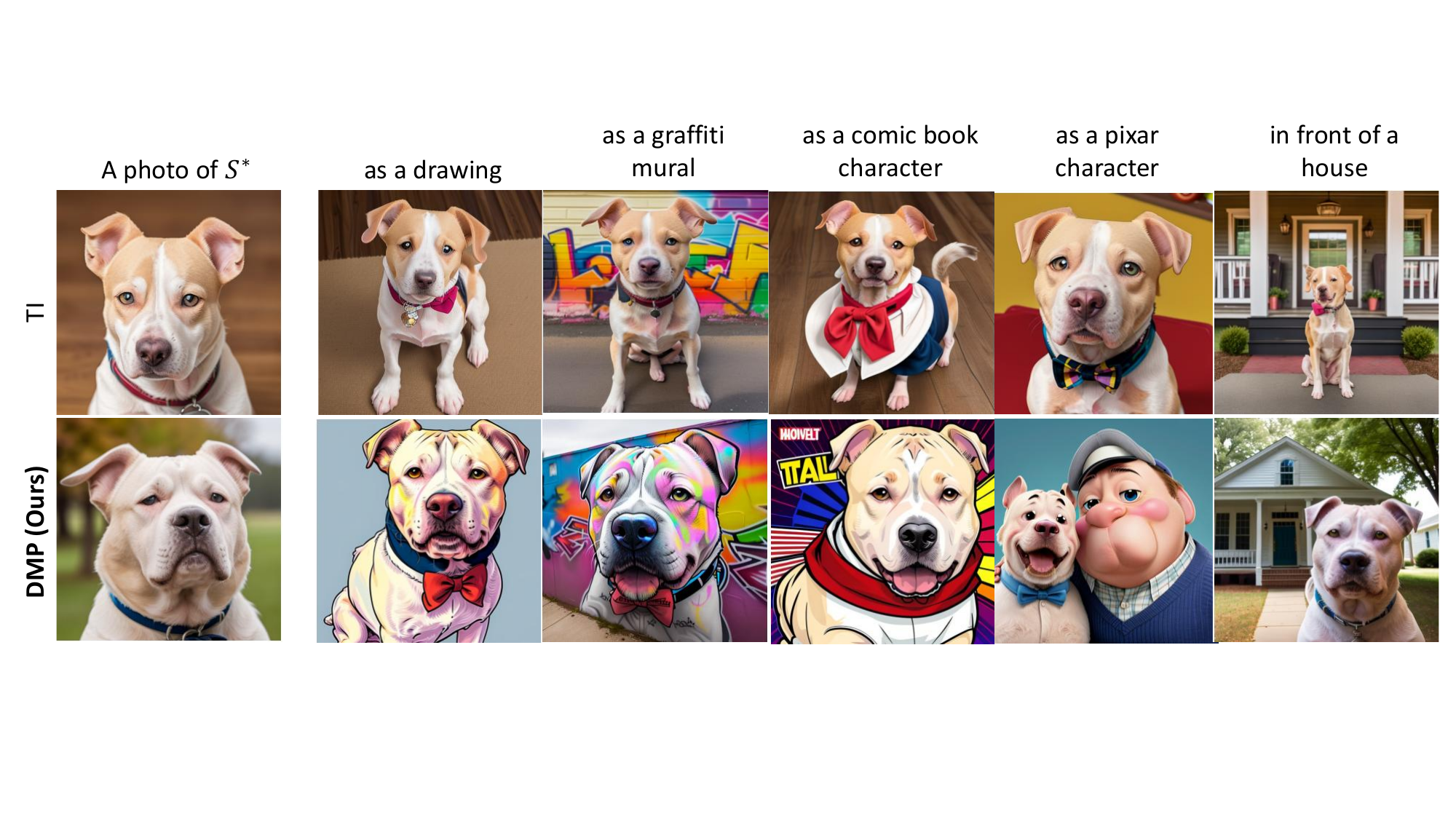}
\caption{\label{fig:comparison_mdp_ti_images} \textbf{Generalization of DMPVariation model}. Images generated using prompts synthesized by TI (top) and DMPVariation (bottom) for various downstream model text prompts (shown on top). DMPVariation prompts are more robust than TI prompts, which impair the ability of the downstream model to generalize. See Fig.~\ref{fig:supp_comparison_mdp_ti_images} for additional results.}
\end{minipage}
\end{figure*}

\vspace{-4mm}

\subsection{DMPVariation}
\vspace{-2mm}

Prompt learning is an approach to personalize diffusion models to the synthesis of images of specific concepts.
DMPVariation is a DMP model that synthesizes personalization prompts learned via textual inversion (TI)~\citep{gal2022textual}, to prompt a diffusion model to synthesize variations of a subject. 

\noindent\textbf{Prompt Repository.}
To produce a prompt repository $\cal R$, we split the CelebA dataset into unique identities using the groundtruth labels, and use 3,000 of these as concepts $c$, for which we train personalized prompts $x(c)$ with Textual Inversion using 1,000 steps of gradient descent. To obtain prompts for variations, we note that earlier optimization steps do not fully encode face attributes, as compared to steps later in the optimization. We save the prompts $x(c)$ from 40 gradient steps (between $200-400$ steps in intervals of $5$), per identity. This produces a repository $\cal R$ of 40 prompt variations per identity to obtain a training repository $\mathcal{R} \in \mathbb{R}^{3000 \times 40 \times d}$. The text string $y(c)$ associated with task $c$ is ``a photo of <identity-c>".

\noindent\textbf{Training.} The DMPVariation model was trained directly in prompt space $\cal P$, i.e. without autoencoder. Each personalized prompt produced by TI is used as conditional input to DMPVariation, by concatenating it with the noise vector. During training, the model learns to denoise the 40 different variations of the conditional prompt input. Hence, sampling from the DMP produces a diversity of prompts that induce the downstream model to synthesize variations of the subject face (see Fig.~\ref{fig:teaser_variations} for an example). At inference, prompts that induce variations of novel subjects can be sampled with DMP by simply conditioning on the TI prompt for the new subject. 

DMPVariation demonstrates the effectiveness of DMP for generative tasks, namely the task of synthesizing images with variations of a subject or concept.
To evaluate prompt generalization in this context, we compare DMPVariation prompts to the prompts originally learned by TI on the CelebA dataset. Performance is measured with Face-ID (VGGFace2), CLIP-Score across 28 diverse prompts (listed on Table~\ref{tab:prompt_variations_eval}), and the HPSv2 metric on 100 prompt concepts downloaded from the Huggingface repository. Note that none of these concepts are on the CelebA dataset. 
Table~\ref{tab:dmpvariation_faceid} shows that DMPVariation achieves lower Face-ID similarity, indicating diverse yet semantically similar prompts, and improves CLIP-Text similarity by \textbf{1.4\%} and the CSD style metric by \textbf{+11.4\%}, demonstrating better generalization to novel prompts compared to TI embeddings.
However, as is typical for generative modeling, these metrics do not capture the substantial qualitative gap between the two approaches. While TI prompts produce images that {\it do not fully comply\/} with the instruction, DMP prompts consistently produce images that satisfy the instructions. 

Figure~\ref{fig:comparison_mdp_ti_images}  illustrates this gap by presenting images synthesized with DMPVariation and Textual Inversion (TI) prompts. The downstream model, personalized to a subject with prompts produced by TI or DMPVariation, is asked to generate an image of the subject in a new context, specified as a text prompt atop each image. The personalization prompts produced by DMPVariation demonstrate greater robustness and generalization, effectively adapting to different contexts. This is unlike TI prompts, which tend to overfit to the subject, leading to poor generalization across contexts. In result, TI prompts fail to generate suitable images of the dog for all contexts other than \textit{``in front of a house"}. The figure also shows that DMPVariation produces successful variation prompts for general objects like dogs {\it despite being trained only on CelebA-face identities\/}. Fig.~\ref{fig:supp-mdpvariations-generalization} presents images for other objects, such as bird and statue. Fig.~\ref{fig:supp_comparison_mdp_ti_images} shows additional results where TI completely fails as compared to DMPVariation. 

\textbf{DMP for sparse multi-view retrieval.}
We next evaluate whether DMP can generalize beyond prompt synthesis by applying it to
a representation-level retrieval task. We consider sparse multi-view retrieval, where the
query is one view of an object and the goal is to retrieve all other views of the same
object from a gallery. Unlike densely populated multi-view retrieval settings, we assume
that gallery coverage is incomplete: each object has only between one and four available
gallery views. This creates a missing-view problem, where direct retrieval from the
observed gallery may fail when the query view differs substantially from the available
views of the same object.

We construct this task from a subset of MVImgNet~\citep{yu2023mvimgnet} containing 5,447 gallery objects. The baseline is a  retrieval operation using DINOv2 embeddings of the
available gallery views. DMP
is trained to synthesize missing views on a set of objects disjoint from those in the gallery, to guarantee that \textit{all objects
are novel at test time}. DMP-based retrieval complements the observed DINOv2
embeddings with DMP-synthesized embeddings of the missing views of
each object. Retrieval is then performed against an aggregated gallery containing both
the observed and DMP-synthesized embeddings.

Table~\ref{tab:mvimgnet_retrieval} shows that DMP consistently improves retrieval
performance over direct DINOv2 embeddings. The largest gain is observed for mAP
($+3.44$), indicating that synthesized missing-view embeddings improve the ranking of
all relevant views, rather than only the top prediction. DMP also improves NDCG by
$+1.83$ and Top-1 accuracy by $+0.30$, while maintaining comparable MRR. These results
show that DMP learns useful structure in the space of foundation-model embeddings and
can synthesize representations that compensate for sparse gallery coverage. Importantly,
this experiment demonstrates that the benefits of DMP are not limited to learned prompt
spaces: the same meta-generative framework can provide gains for different
representations and downstream tasks.

\textbf{Storage and Runtime Efficiency.}
 One of the benefits of the DMP framework is its high efficiency. DMP eliminates the need to manage a repository of prompts and associated \textit{metadata}, such as text descriptions. This contrasts with methods like Stylus~\citep{luo2024stylus}, that automatically search, retrieve, and compose LoRAs from a repository. Table \ref{tab:effcomparison} compares the storage and processing requirements of DMP and Stylus, when used for the tasks that we consider in this work. DMP is significantly more efficient, reducing storage needs by \textbf{91\%} and improving inference speed by \textbf{92\%}.

\vspace{-4mm}
\section{Ablation Studies}
\vspace{-4mm}


\textbf{Robustness to anchor perturbation.} To assess whether the gains of
test-time steering depend on the exact anchor $x^\star_i$ of (6), we perturb
it in random directions at increasing $L_2$ distances before using it in (7).
Table~\ref{tab:noisy_best_prompt} shows that accuracy remains essentially
unchanged under moderate to large perturbations, varying by at most 0.27
points across splits. This suggests that steering does not rely on a single
privileged prompt, but instead guides sampling toward a broad
high-performing neighborhood of the prompt manifold. Since the anchor is
fixed throughout sampling, however, these results alone do not distinguish
task-specific guidance from the stabilizing effect of a fixed target.

To separate the two, we replace the training-selected anchor with a fixed,
randomly selected training prompt. Accuracy decreases from 63.1/69.4/70.8
to 62.4/69.1/70.2 on the All/Base/New splits, i.e., by 0.7/0.3/0.6 points.
Since this alternative anchor also remains fixed during sampling, the
degradation cannot be attributed to a loss of stabilization alone,
indicating that the training-selected anchor provides task-relevant guidance
beyond the text condition.

Finally, we resample the perturbation direction at every denoising step,
which removes the consistency of the steering target across the trajectory.
This leads to a larger reduction, 1.8/1.9/2.2 points
below the unperturbed anchor, showing that a consistent steering target is
also important. Overall, the anchor plays two complementary roles: its
location in prompt space guides sampling toward a task-relevant region, and
its consistency across denoising steps stabilizes the sampling trajectory.

\begin{table}[t]
\centering
\small
\caption{Robustness of test-time steering to perturbations of the anchor
prompt. Results are averaged over dataset pairs,
and over repeated noise samples within each pair. $\Delta$ is relative to the
unperturbed anchor. \emph{Fixed anchor}: a randomly selected training prompt
is used as the anchor. \emph{Resampled}: the perturbation direction is
resampled at every denoising step.}
\label{tab:noisy_best_prompt}
\begin{tabular}{@{}l r| rr rr rr@{}}
\toprule
& & \multicolumn{2}{c}{All} & \multicolumn{2}{c}{Base} & \multicolumn{2}{c}{New} \\
\cmidrule(lr){3-4} \cmidrule(lr){5-6} \cmidrule(l){7-8}
\multicolumn{2}{@{}l}{Setting} & Acc. & $\Delta$ & Acc. & $\Delta$ & Acc. & $\Delta$ \\
\midrule
$L_2$ & 0.00 & \textbf{63.10} & \dlt{\text{--}} & \textbf{69.40} & \dlt{\text{--}} & 70.80 & \dlt{\text{--}} \\
$L_2$ & 0.25 & 63.01 & \dlt{-0.09} & 69.13 & \dlt{-0.27} & 70.77 & \dlt{-0.03} \\
$L_2$ & 0.50 & 62.94 & \dlt{-0.16} & 69.13 & \dlt{-0.27} & 70.73 & \dlt{-0.07} \\
$L_2$ & 1.00 & 63.09 & \dlt{-0.01} & 69.28 & \dlt{-0.12} & \textbf{70.90} & \dlt{+0.10} \\
\midrule
\multicolumn{2}{@{}l}{Fixed anchor} & 62.36 & \dlt{-0.74} & 69.12 & \dlt{-0.28} & 70.21 & \dlt{-0.59} \\
\multicolumn{2}{@{}l}{Resampled}    & 61.27 & \dlt{-1.83} & 67.47 & \dlt{-1.93} & 68.56 & \dlt{-2.24} \\
\bottomrule
\end{tabular}
\end{table}

\textbf{Sensitivity to the class-name condition.} We next examine how DMP
responds to changes in its text condition, with and without steering.
Starting from the Caltech101 class-name condition, we progressively replace
40--80\% of the class names with distinct ImageNet class names.
Table~\ref{tab:replaced-names-steering-comparison} reports the $L_2$ distance
of the resulting condition embedding and of the sampled prompt from those of
the original condition, together with the downstream accuracy. As more names
are replaced, the condition embedding moves away from the
original, with its $L_2$ distance increasing from 17.4 to 34.0.

Without steering, these changes propagate to the sampled prompts, whose
$L_2$ distance from the original prompt reaches 1.42--1.62, and replacing
40\% of the names changes ImageNet accuracy by 0.3 points. Hence, neither
the text encoder nor DMP collapses related class vocabularies into the same
representation or prompt. With steering, the prompt distances decrease to
0.025--0.049, and accuracy remains stable at 93.0--93.2\% on Caltech101 and
64.5--64.9\% on ImageNet. These results separate the two effects: DMP remains
sensitive to the class-name condition, while the training-selected anchor
limits how strongly this variation changes the sampled prompt and its
downstream behavior, stabilizing generation within a locally
high-performing region of prompt space.

\begin{table}[t]
\centering
\caption{Sensitivity of DMP to the class-name condition, with and without
test-time steering. A fraction of the Caltech101 class names in the condition
is replaced with ImageNet class names. $L_2$ distances of the condition
embedding and of the sampled prompt are measured with respect to the
original (0\%) condition.}
\label{tab:replaced-names-steering-comparison}
\small
\resizebox{\textwidth}{!}{%
\begin{tabular}{ccccc|ccc}
\toprule
\textbf{\shortstack{Replaced\\names}}
& \textbf{\shortstack{Condition\\$L_2$}}
& \multicolumn{3}{c|}{\textbf{w/o Steering}}
& \multicolumn{3}{c}{\textbf{w/ Steering}} \\
\cmidrule(lr){3-5}
\cmidrule(lr){6-8}
&
& \textbf{\shortstack{Prompt\\$L_2$}}
& \textbf{Caltech101}
& \textbf{ImageNet}
& \textbf{\shortstack{Prompt\\$L_2$}}
& \textbf{Caltech101}
& \textbf{ImageNet} \\
\midrule
0\%  & 0.0  & 0.000 & 92.0 & 64.7 & 0.000 & 93.0 & 64.9 \\
40\% & 24.2 & 1.623 & 92.1 & 64.4 & 0.025 & 93.0 & 64.8 \\
60\% & 29.8 & 1.606 & 92.6 & 64.3 & 0.049 & 93.1 & 64.5 \\
80\% & 34.0 & 1.600 & 92.5 & 64.5 & 0.049 & 93.2 & 64.5 \\
\bottomrule
\end{tabular}%
}
\end{table}

See Appendix~\ref{app:dmpmulti} for application of DMP for personalization and image editing, Sec.~\ref{sec:ablation_appendix} for additional ablations and Sec.~\ref{app:limitations} for a discussion on the limitations and scope for future works.

\vspace{-4mm}
\section{Conclusion}
\vspace{-3mm}
We introduced a \emph{Diffusion Meta-Prompting} (DMP) framework that learns a distribution over prompts and enables synthesis of task-specific prompts directly from natural language. Across recognition, retrieval, and generation settings, DMP produces representations and
prompts that generalize beyond the tasks, label sets, views, and concepts seen during
training, while avoiding the overhead of storing and searching large repositories. Empirically, DMP yields consistent gains in the most challenging generalization regimes improving cross-task and composite classification, supports compositional control and semantic negation, and improves prompt compliance for subject variations. Finally, we provided a theoretical bound linking the expected downstream task loss of DMP-sampled prompts to repository quality and diffusion denoising error, offering a principled perspective on when meta-prompt generation succeeds.

\subsection*{Acknowledgements}
This work was partially funded by Qualcomm Innovation Fellowship 2025, NSF awards IIS-2303153 and NAIRR-240300. We also acknowledge and thank the use of the Nautilus platform for some of the experiments.

\subsection*{Broader Impact}\label{broader-impact}
We introduce a new meta-learning framework for generating prompts for foundation models. It carries the risks associated with generative foundation models. While the proposed method offers benefits of better generalization with storage and runtime efficiency, it uses existing pretrained models which are shown to contain harmful biases that maybe elicited by the prompts. It can also be potentially misused to propagate harmful, unlawful or unethical information with the personalization of celebrities. Since, the framework is meta-learning, any harmful prompts can be identified before the image generation step where the embeddings can be inspected with nearest neighbor tokens in the text space. Additionally, advancements in image watermarking \citep{luo2022leca,watermarksurvey,synthid,gowal2025synthidimage} can help to identify generated image contents to protect against these risks.

\bibliography{neurips_2026}

\newpage
\appendix
\section{Appendix}

\subsection{Theoretical Analysis}\label{sec:theory_analysis}
\setcounter{proposition}{0}
Let $\mathcal{C}$ denote a distribution over tasks or concepts $c \sim \mathcal{C}$. For each concept $c$, we assume a textual description $y(c)$ (e.g., ``a personalization prompt for Jennifer Aniston''), a downstream task with loss $\mathcal{L}_{\text{task}}(S;c)$ that evaluates the performance of a candidate prompt $S$ in prompt space $\mathcal{P}$ when applied to a frozen foundation model $F$, and a repository $\mathcal{R}$ of exemplar prompts $x(c)$ obtained from existing prompt-learning techniques.

Our objective in this section is to bound the \emph{expected downstream loss} when sampling prompts from the pretrained generator:
\[
\mathcal{J}_{\mathrm{pre}}(\theta)
:= \mathbb{E}_{c\sim\mathcal{C}}\Big[ \mathbb{E}_{S\sim p_\theta(\cdot\mid y(c))} \big[ \mathcal{L}_{\mathrm{task}}(S;c) \big] \Big].
\]

\textbf{High-level intuition.}
When the trained diffusion model $p_\theta(\cdot\mid y)$ closely matches the repository/data distribution $p_{\mathrm{data}}(\cdot\mid y)$ in distributional distance, the expected task loss under $p_\theta$ is close to the expected \emph{repository} task loss under $p_{\mathrm{data}}$. If the repository was constructed to contain useful prompts for downstream tasks (i.e., $p_{\mathrm{data}}$ has low expected task loss), then a small distributional discrepancy implies low expected task loss for $p_\theta$ as well. We make this statement precise with the following bounds.

\subsubsection{Distributional discrepancy bound}
We begin with a straightforward decomposition and use standard total-variation and Pinsker inequalities~\citep{cover2006elements,tsybakov2008introduction}.

\begin{proposition}[Distributional discrepancy bound]
\label{prop:tv_bound}
Let $p_{\mathrm{data}}(\cdot\mid y)$ and $p_\theta(\cdot\mid y)$ be two distributions on prompt space $\mathcal P$ for a fixed condition $y$. Assume the task loss $\mathcal L_{\mathrm{task}}(x)$ is bounded in $[0,L_{\max}]$. Then
\begin{equation}
\begin{aligned}
\Big|
\mathbb{E}_{S\sim p_\theta}
\big[\mathcal{L}_{\mathrm{task}}(x)\big]
&-
\mathbb{E}_{S\sim p_{\mathrm{data}}}
\big[\mathcal{L}_{\mathrm{task}}(x)\big]
\Big| \\
&\le
2 L_{\max}\,
\mathrm{TV}\!\big(p_\theta, p_{\mathrm{data}}\big) .
\end{aligned}
\label{eq:tv_ineq_prop}
\end{equation}

and by Pinsker's inequality,
\begin{equation}
\mathrm{TV}(p_\theta,p_{\mathrm{data}}) \le \sqrt{\tfrac{1}{2}\,\mathrm{KL}(p_{\mathrm{data}}\|p_\theta)}.
\label{eq:pinsker_prop}
\end{equation}
Consequently,
\begin{equation}
\begin{aligned}
\mathbb{E}_{S\sim p_\theta}
\big[\mathcal{L}_{\mathrm{task}}(x)\big]
&\le
\mathbb{E}_{S\sim p_{\mathrm{data}}}
\big[\mathcal{L}_{\mathrm{task}}(x)\big] \\
&\quad
+ L_{\max}
\sqrt{2\,\mathrm{KL}\!\left(p_{\mathrm{data}}\|p_\theta\right)} .
\end{aligned}
\label{eq:final_kl_bound_prop}
\end{equation}

\end{proposition}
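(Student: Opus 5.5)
The plan is a direct coupling-free argument: write both expectations as integrals against a common dominating measure, bound the difference of integrands pointwise using the boundedness of $\mathcal L_{\mathrm{task}}$, and then chain with Pinsker. No earlier result of the paper is needed beyond the setup; the only external ingredient is Pinsker's inequality~\citep{cover2006elements,tsybakov2008introduction}.

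\textbf{Step 1 (densities).} Fix $y$. Set $\mu := p_\theta(\cdot\mid y)+p_{\mathrm{data}}(\cdot\mid y)$. Both distributions are absolutely continuous with respect to $\mu$, so by Radon--Nikodym they have densities $p_\theta$ and $p_{\mathrm{data}}$ with respect to $\mu$. This sidesteps any issue with the empirical (atomic) repository distribution versus a continuous diffusion output. We use the convention
\[
\mathrm{TV}(p_\theta,p_{\mathrm{data}})=\sup_A|p_\theta(A)-p_{\mathrm{data}}(A)|=\tfrac12\int|p_\theta-p_{\mathrm{data}}|\,d\mu .
\]

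\textbf{Step 2 (the TV bound).} Write the difference of expectations as $\int \mathcal L_{\mathrm{task}}(x)\,(p_\theta(x)-p_{\mathrm{data}}(x))\,d\mu(x)$. Since $0\le\mathcal L_{\mathrm{task}}\le L_{\max}$, its absolute value is at most $L_{\max}\int|p_\theta-p_{\mathrm{data}}|\,d\mu = 2L_{\max}\,\mathrm{TV}$, which is the first claimed inequality. (A sharper constant $L_{\max}$ follows by first subtracting $L_{\max}/2$ from the loss, which leaves the difference unchanged since both measures have mass one. I will remark on this but keep the stated constant.)

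\textbf{Step 3 (Pinsker and combination).} Pinsker gives $\mathrm{TV}\le\sqrt{\tfrac12\mathrm{KL}(p_{\mathrm{data}}\|p_\theta)}$. If the KL is infinite, the bound is vacuous and holds trivially; otherwise $p_{\mathrm{data}}\ll p_\theta$ and the standard inequality applies. Substituting into Step 2 gives
\[
2L_{\max}\sqrt{\tfrac12\mathrm{KL}} = L_{\max}\sqrt{2\,\mathrm{KL}(p_{\mathrm{data}}\|p_\theta)} .
\]
Dropping the absolute value on one side yields the one-sided bound (\ref{eq:final_kl_bound_prop}).

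\textbf{Main obstacle.} The argument is elementary, so the main obstacle is only bookkeeping. One point is making the measure-theoretic setup legitimate when one distribution is empirical; the common dominating measure $\mu$ resolves this. The other is keeping the TV normalization consistent: the factor $\tfrac12$ in the definition is exactly what turns the constant $2L_{\max}$ into $\sqrt{2\,\mathrm{KL}}$.
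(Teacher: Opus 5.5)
Your proposal is correct and follows the paper's proof essentially step for step. The paper likewise bounds $\bigl|\int \ell\,(p_\theta-p_{\mathrm{data}})\bigr|$ by $L_{\max}\int|p_\theta-p_{\mathrm{data}}| = 2L_{\max}\,\mathrm{TV}$, applies Pinsker, and simplifies $2L_{\max}\sqrt{\mathrm{KL}/2}=L_{\max}\sqrt{2\,\mathrm{KL}}$. Your dominating-measure setup, your handling of infinite KL, and your remark that the constant sharpens to $L_{\max}\,\mathrm{TV}$ are all valid refinements that the paper leaves implicit; it works directly with the signed measure $(p_\theta-p_{\mathrm{data}})(dx)$.
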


\begin{proof}
Let $\ell(x)=\mathcal L_{\mathrm{task}}(x)$ and denote $\Delta := \mathbb{E}_{p_\theta}[\ell]-\mathbb{E}_{p_{\mathrm{data}}}[\ell]$.
By the definition of total variation (and the fact that $0\le\ell\le L_{\max}$),
\begin{equation}
\begin{aligned}
|\Delta|
&= \Big| \int \ell(x)\, (p_\theta - p_{\mathrm{data}})(dx) \Big| \\
&\le \int |\ell(x)|\, |p_\theta - p_{\mathrm{data}}|(dx) \\
&\le L_{\max} \int |p_\theta - p_{\mathrm{data}}|(dx).
\end{aligned}
\end{equation}

By definition $\mathrm{TV}(p_\theta,p_{\mathrm{data}})=\tfrac{1}{2}\int|p_\theta-p_{\mathrm{data}}|$, hence~\eqref{eq:tv_ineq_prop} holds.

Pinsker's inequality (see e.g.\ \citep{cover2006elements}) gives~\eqref{eq:pinsker_prop}. Combining the two inequalities yields~\eqref{eq:final_kl_bound_prop}.
\end{proof}

\noindent\textbf{Remarks.} Inequality~\eqref{eq:final_kl_bound_prop} reduces expected downstream loss under the model to two terms: the expected loss of the repository distribution (which is a function of data collection quality) and the KL divergence between the dataset distribution and the pretrained generator. The latter is controlled by how well the denoiser is trained (see next subsection).

\subsubsection{Relating denoising loss to model-data KL}
We now recall a standard link between the denoising objective used in DDPMs and a divergence between the model and data distributions (see, e.g.,~\citep{vincent2011connection, song2020score}). Under standard DDPM assumptions and appropriate variance schedule, minimizing the simplified denoising loss~\eqref{eq:loss} is equivalent (up to constants and time discretization effects) to score matching / denoising score-matching which estimates the score function $\nabla_x \log p_{\mathrm{data}}(x)$. A well-trained denoiser implies an accurate score estimator, which in turn implies a small KL divergence between the model and data distributions in $x_0$-space (the space of clean prompts). Formally, one can show:

\begin{lemma}[Denoising loss controls KL (informal)]
\label{lem:denoise_to_kl}
Under the standard DDPM/score-matching correspondence and mild regularity conditions, if the denoising risk satisfies
\[
\mathcal{L}_{\mathrm{denoise}}(\theta) \le \varepsilon,
\]
then the KL divergence between $p_{\mathrm{data}}(x_0\mid y)$ and $p_\theta(x_0\mid y)$ admits the bound
\[
\mathrm{KL}\big( p_{\mathrm{data}}(\cdot\mid y) \big\| p_\theta(\cdot\mid y) \big) \le C\,\varepsilon + o(1),
\]
where $C>0$ is a constant that depends on the variance schedule, the discretization, and model parametrization; the $o(1)$ term vanishes as the diffusion discretization becomes finer.
\end{lemma}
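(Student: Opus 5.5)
The plan is to pass through the path-space KL between the forward (noising) process started from $p_{\mathrm{data}}(\cdot\mid y)$ and the learned reverse process $p_\theta(\cdot\mid y)$, and then push this down to the marginal at $t=0$ using the data-processing inequality. Concretely, write $q(x_{0:T}\mid y)$ for the forward Markov chain of (\ref{eq:forward}) with $x_0\sim p_{\mathrm{data}}(\cdot\mid y)$, and $p_\theta(x_{0:T}\mid y)=p(x_T)\prod_t p_\theta(x_{t-1}\mid x_t,y)$ for the reverse chain with Gaussian transitions whose means are parametrized through $\epsilon_\theta$. The $x_0$-marginals are $\mathrm{KL}(p_{\mathrm{data}}(\cdot\mid y)\,\|\,p_\theta(\cdot\mid y))\le \mathrm{KL}(q(x_{0:T}\mid y)\,\|\,p_\theta(x_{0:T}\mid y))$, since marginalization is a Markov kernel.

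Next, I would decompose the path KL with the chain rule into the standard variational-bound terms:
\[
\mathrm{KL}(q(x_T\mid y)\,\|\,p(x_T)) + \sum_{t=2}^T \mathbb{E}_q\,\mathrm{KL}\big(q(x_{t-1}\mid x_t,x_0)\,\|\,p_\theta(x_{t-1}\mid x_t,y)\big) + \mathbb{E}_q[-\log p_\theta(x_0\mid x_1,y)] - H\text{-type constant}.
\]
With the DDPM choice of reverse variances equal to the posterior variances $\tilde\beta_t$, each middle term is a KL between Gaussians of equal covariance. Hence it equals $\frac{1}{2\tilde\beta_t}\|\tilde\mu_t-\mu_\theta\|^2$, which under the $\epsilon$-parametrization becomes $w_t\,\mathbb{E}\|\epsilon_t-\epsilon_\theta(x_t,\tau_\theta(y),t)\|^2$ with $w_t=\beta_t^2/(2\tilde\beta_t(1-\beta_t)(1-\alpha_t))$. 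Taking $C:=T\max_t w_t$ (or a weighted version if $t$ is sampled nonuniformly in (\ref{eq:loss})) bounds the sum by $C\,\mathcal{L}_{\mathrm{denoise}}(\theta)\le C\varepsilon$. The prior term $\mathrm{KL}(q(x_T\mid y)\|N(0,\mathbf I))$ decays like $\alpha_T(\cdot)$ and goes to zero as $T\to\infty$ under mild moment assumptions on $p_{\mathrm{data}}$. The decoder/$t=1$ term is absorbed as $\beta_1\to 0$. Together these two give the $o(1)$ term, which also covers the time-discretization effects mentioned in the statement.

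I expect the main obstacle to be the conditioning and the empirical-versus-population distinction. The loss in (\ref{eq:loss}) is averaged over $y$, while the lemma is stated for a fixed $y$. I would either assume the bound holds conditionally on $y$, or replace $\varepsilon$ by the conditional risk $\mathcal{L}_{\mathrm{denoise}}(\theta\mid y)$ and note that the result then holds for $y$-almost all conditions via Markov's inequality. A second delicate point is the decoder term when $p_{\mathrm{data}}$ is empirical, i.e.\ discrete: the KL to a continuous $p_\theta$ is infinite unless one smooths. I would handle this by stating the bound for $p_{\mathrm{data}}$ convolved with the small Gaussian at $t=1$ (early stopping), whose contribution vanishes as $\beta_1\to0$. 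This is precisely what the ``mild regularity'' and $o(1)$ are meant to hide, so I would make that hypothesis explicit.
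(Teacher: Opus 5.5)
\paragraph{Main gap: the bound is in terms of the raw loss.}
The paper does not actually prove this lemma. It only sketches ``denoising loss $\Rightarrow$ accurate score (denoising score matching, up to constants) $\Rightarrow$ small KL'' and defers to Ho et al.\ and Song et al. Your data-processing/path-KL skeleton is a sensible way to make this precise. Your remarks on conditioning on $y$ and on early stopping for an empirical $p_{\mathrm{data}}$ are correct and go beyond the paper.

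The gap is the decomposition through the $x_0$-conditioned posteriors $q(x_{t-1}\mid x_t,x_0)$. It bounds the KL by the \emph{raw} per-step losses $\mathcal{L}_t(\theta):=\mathbb{E}\|\epsilon_t-\epsilon_\theta(x_t,\tau_\theta(y),t)\|^2$. These have a strictly positive floor $\mathcal{L}_t^\ast:=\mathbb{E}\|\epsilon_t-\mathbb{E}[\epsilon_t\mid x_t,y]\|^2$, so $\varepsilon$ can never be small. Quantitatively, $\mathbb{E}\,\mathrm{KL}(q(x_{t-1}\mid x_t,x_0)\|p_\theta)=\mathbb{E}\,\mathrm{KL}(q(x_{t-1}\mid x_t)\|p_\theta)+I(x_0;x_{t-1}\mid x_t)$, and the forward chain is Markov. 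Hence, with your $C=T\max_t w_t$, every $\theta$ and every $\varepsilon\ge\mathcal{L}_{\mathrm{denoise}}(\theta)$ satisfy (conditionally on $y$)
\[
C\varepsilon\;\ge\;\sum_{t=2}^{T} w_t\,\mathcal{L}_t(\theta)\;\ge\;\sum_{t=2}^{T} I(x_0;x_{t-1}\mid x_t)\;=\;I(x_0;x_1)-I(x_0;x_T).
\]
The right-hand side grows like $\tfrac d2\log(1/\beta_1)$ when $p_{\mathrm{data}}$ has a density of finite entropy, and is about $\log n$ for an $n$-point repository. So your route only yields a bound that stays large. It cannot support ``small enough $\tau$ suffices'', which is how the paper uses the lemma after Proposition~\ref{thm:combined_main}.

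\paragraph{The $t=1$ step, and the fix.}
The same issue is why your $t=1$ step is wrong as written. With $p_\theta(x_0\mid x_1,y)=N(\mu_\theta,\sigma_1^2\mathbf I)$, the decoder term equals $\tfrac d2\log(2\pi\sigma_1^2)+\tfrac{\beta_1}{2\sigma_1^2(1-\beta_1)}\mathcal{L}_1(\theta)$. The quadratic part does not vanish as $\beta_1\to0$; it is one more weighted denoising error and belongs in $C\varepsilon$. The log part tends to $-\infty$. (In the exact identity the constant is $-H(p_{\mathrm{data}}(\cdot\mid y))$ and the prior term is $\mathbb{E}_{x_0}\mathrm{KL}(q(x_T\mid x_0)\|p(x_T))$.) The inequality is rescued only because $-H(p_{\mathrm{data}})+\tfrac d2\log(2\pi\sigma_1^2)\le 0$ for small $\sigma_1$. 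That divergent negative quantity is exactly what offsets the irreducible part of $C\varepsilon$.

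The missing idea is to compare with the \emph{true} reverse kernels and measure error by the excess risk. Factor $q(x_{0:T}\mid y)=q(x_T\mid y)\prod_t q(x_{t-1}\mid x_t,y)$; this is where $\mathrm{KL}(q(x_T\mid y)\|p(x_T))$ legitimately appears. Then use the bias--variance identity for Gaussian kernels with the same fixed covariance:
\[
\mathbb{E}_{x_t}\mathrm{KL}\big(q(\cdot\mid x_t)\,\|\,p_\theta(\cdot\mid x_t)\big)=\mathbb{E}_{x_t}\mathrm{KL}\big(q(\cdot\mid x_t)\,\|\,p^\ast_t(\cdot\mid x_t)\big)+w_t\big(\mathcal{L}_t(\theta)-\mathcal{L}_t^\ast\big),
\]
where $p_t^\ast$ has the Bayes-optimal mean $\mathbb{E}[x_{t-1}\mid x_t,y]$.
\begin{itemize}
\item The excess risk $\mathcal{L}_t(\theta)-\mathcal{L}_t^\ast=\mathbb{E}\|\epsilon_\theta-\mathbb{E}[\epsilon_t\mid x_t,y]\|^2$ is exactly the score-matching error in the paper's sketch.
\item The first term is the discretization error of the ideal chain. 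This is what can legitimately be made $o(1)$ under smoothness and early stopping. Its continuous-time analogue is the Girsanov bound $\mathrm{KL}\le\mathrm{KL}(q_T\|N(0,\mathbf I))+\tfrac12\int g(t)^2\,\mathbb{E}\|s_\theta-\nabla\log q_t\|^2\,dt$.
\end{itemize}
With $\varepsilon$ read as this excess risk, you obtain the lemma in its intended form.

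Finally, any path-KL argument covers the ancestral sampler with $\sigma_t^2=\tilde\beta_t$. It does not cover the deterministic DDIM sampler ($\sigma_t=0$) the paper actually uses, for which the path KL is infinite.
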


\noindent\textbf{Remarks.} The lemma is qualitative: precise constants follow from score-matching and likelihood bounds in the DDPM literature (e.g., \citep{ho2020denoising, song2020score}). The key message is that a small denoising loss implies a small divergence between the learned and data distributions.

Combining Lemma~\ref{lem:denoise_to_kl} with Proposition~\ref{prop:tv_bound} yields a bound of the form
\begin{equation}
\begin{aligned}
\mathbb{E}_{S\sim p_\theta}
\big[\mathcal{L}_{\mathrm{task}}(x)\big]
&\le
\mathbb{E}_{S\sim p_{\mathrm{data}}}
\big[\mathcal{L}_{\mathrm{task}}(x)\big] \\
&\quad
+ L_{\max}\sqrt{2 C\,\varepsilon}
+ o(1) .
\end{aligned}
\end{equation}

\subsubsection{Concentration from finite repository}
So far we have related the model expectation to the (population) data distribution. In practice we only train on finite $\mathcal{R}$ with $n$ samples per condition $y$. Let $\widehat p_{\mathrm{data}}$ denote the empirical distribution formed by the repository. By Hoeffding's inequality (or McDiarmid)~\citep{hoeffding1963probability,mcdiarmid1989method}, with probability at least $1-\delta$ over the draw of the repository,
\begin{equation}
\begin{aligned}
\Big|
\mathbb{E}_{S\sim \widehat p_{\mathrm{data}}}
\big[\mathcal{L}_{\mathrm{task}}(x)\big]
&-
\mathbb{E}_{S\sim p_{\mathrm{data}}}
\big[\mathcal{L}_{\mathrm{task}}(x)\big]
\Big| \\
&\le
L_{\max}
\sqrt{\tfrac{\log(2/\delta)}{2n}} .
\end{aligned}
\label{eq:hoeffding_repo}
\end{equation}

\subsubsection{DMP performance guarantee}
Combining the above pieces yields the following performance guarantee for DMP.

\begin{proposition}[DMP performance guarantee]
\label{thm:combined}
Assume the denoising loss satisfies $\mathcal{L}_{\mathrm{denoise}}(\theta)\le\varepsilon$ and the repository contains $n$ i.i.d.\ prompts for the condition $y$. Then with probability at least $1-\delta$ over the repository sample,
\begin{equation}
\begin{aligned}
\mathbb{E}_{S\sim p_\theta(\cdot\mid y)}\big[\mathcal{L}_{\mathrm{task}}(x)\big]
&\le \mathbb{E}_{S\sim \widehat p_{\mathrm{data}}(\cdot\mid y)}\big[\mathcal{L}_{\mathrm{task}}(x)\big] \\
&\quad + L_{\max}\sqrt{2C\,\varepsilon} \\
&\quad+ L_{\max}\sqrt{\tfrac{\log(2/\delta)}{2n}} + o(1),
\end{aligned}
\label{eq:combined_bound}
\end{equation}
where $C$ is the constant from Lemma~\ref{lem:denoise_to_kl} and the $o(1)$ term accounts for discretization error in the diffusion approximation.
\end{proposition}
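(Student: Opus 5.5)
The plan is to chain three earlier results through a triangle inequality, inserting the population distribution $p_{\mathrm{data}}(\cdot\mid y)$ as an intermediate term. Fix the condition $y$, write $\ell(x)=\mathcal L_{\mathrm{task}}(x)\in[0,L_{\max}]$, and decompose
\[
\mathbb{E}_{p_\theta}[\ell]-\mathbb{E}_{\widehat p_{\mathrm{data}}}[\ell]
=\big(\mathbb{E}_{p_\theta}[\ell]-\mathbb{E}_{p_{\mathrm{data}}}[\ell]\big)
+\big(\mathbb{E}_{p_{\mathrm{data}}}[\ell]-\mathbb{E}_{\widehat p_{\mathrm{data}}}[\ell]\big).
\]
The first bracket is the modeling error and the second is the sampling error of the finite repository. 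Each is bounded separately, and the two bounds are then added.

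For the first bracket, I would invoke Proposition~\ref{prop:tv_bound}, which gives the bound $L_{\max}\sqrt{2\,\mathrm{KL}(p_{\mathrm{data}}\|p_\theta)}$. I would then substitute Lemma~\ref{lem:denoise_to_kl} with $\varepsilon$ as in the hypothesis, obtaining $\mathrm{KL}\le C\varepsilon+o(1)$. Since $\sqrt{a+b}\le\sqrt a+\sqrt b$, this yields $L_{\max}\sqrt{2C\varepsilon}+L_{\max}\sqrt{2\,o(1)}$, and the last term is again $o(1)$ because $L_{\max}$ is a fixed constant.

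For the second bracket, I would apply Hoeffding's inequality in the form \eqref{eq:hoeffding_repo}. The $n$ repository prompts are i.i.d.\ draws from $p_{\mathrm{data}}(\cdot\mid y)$ and $\ell$ takes values in an interval of length $L_{\max}$. Hence, with probability at least $1-\delta$, the second bracket is at most $L_{\max}\sqrt{\log(2/\delta)/(2n)}$ in absolute value. The first bracket is deterministic given $\theta$, so no union bound is needed.

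Adding the two bounds gives the two-sided version stated in Proposition~\ref{thm:combined_main}. Dropping the absolute value gives the one-sided form \eqref{eq:combined_bound} of Proposition~\ref{thm:combined}.

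The main obstacle is the step through Lemma~\ref{lem:denoise_to_kl}, for two reasons.

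\emph{Empirical versus population objective.} The hypothesis bounds the \emph{empirical} denoising loss over the repository. Training on it therefore pulls $p_\theta$ toward $\widehat p_{\mathrm{data}}$, whereas the lemma is stated for $p_{\mathrm{data}}$. There are two ways to handle this:
\begin{itemize}
\item Read the lemma as applying to the population risk and absorb the generalization gap of the denoising loss into the $o(1)$ term.
\item Apply Proposition~\ref{prop:tv_bound} directly with $\widehat p_{\mathrm{data}}$ in place of $p_{\mathrm{data}}$. This makes the Hoeffding term a conservative extra slack.
\end{itemize}
I would try the first reading, since it matches the structure of the stated bound.

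\emph{Well-definedness of the KL term.} $\mathrm{KL}(p_{\mathrm{data}}\|p_\theta)$ must be finite, which requires $p_{\mathrm{data}}\ll p_\theta$. This holds because a diffusion model with Gaussian transitions has full support. I would state it explicitly among the ``mild regularity conditions'' of the lemma.
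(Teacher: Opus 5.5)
Your proposal is correct and follows the paper's own proof. You insert $p_{\mathrm{data}}(\cdot\mid y)$ as the intermediate term, bound the modeling term by Proposition~\ref{prop:tv_bound} with Pinsker and Lemma~\ref{lem:denoise_to_kl}, and bound the sampling term by Hoeffding as in \eqref{eq:hoeffding_repo}, exactly as the paper does. Your extra details are correct refinements of steps the paper leaves implicit: splitting $\sqrt{2(C\varepsilon+o(1))}$ by subadditivity, and noting that no union bound is needed because only the Hoeffding step is probabilistic and that step does not involve $\theta$. One side remark needs adjusting: the point about absolute continuity is unnecessary, since the lemma's conclusion already makes the KL finite. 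Moreover, full support of $p_\theta$ alone would not give $p_{\mathrm{data}}\ll p_\theta$; that would also require $p_{\mathrm{data}}$ to have a density.
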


\begin{proof}
Start from Proposition~\ref{prop:tv_bound} applied to $p_\theta$ and $p_{\mathrm{data}}$:
\[
\mathbb{E}_{p_\theta}[\ell] \le \mathbb{E}_{p_{\mathrm{data}}}[\ell] + 2L_{\max}\sqrt{\tfrac{1}{2}\,\mathrm{KL}(p_{\mathrm{data}}\|p_\theta)}.
\]
Apply Lemma~\ref{lem:denoise_to_kl} to bound the KL by $C\varepsilon + o(1)$; hence
\[
\mathbb{E}_{p_\theta}[\ell] \le \mathbb{E}_{p_{\mathrm{data}}}[\ell] + L_{\max}\sqrt{2C\,\varepsilon} + o(1).
\]
Now replace the population expectation $\mathbb{E}_{p_{\mathrm{data}}}[\ell]$ by the empirical expectation $\mathbb{E}_{\widehat p_{\mathrm{data}}}[\ell]$ and apply Proposition~\ref{eq:hoeffding_repo} (Hoeffding) which with probability at least $1-\delta$ yields the stated sampling error term $L_{\max}\sqrt{\tfrac{\log(2/\delta)}{2n}}$. Combining these terms gives~\eqref{eq:combined_bound}.
\end{proof}

\noindent\textbf{Interpretation.} Bound~\eqref{eq:combined_bound} decomposes the expected task loss under the pretrained generator into (i) the empirical repository loss (quality of collected prompts), (ii) an approximation term controlled by the denoising risk (how well DMP models the repository), and (iii) a sampling term that vanishes as the repository size $n$ grows.


\begin{figure}[h]
  \centering
  \includegraphics[width=\columnwidth, trim=220 150 220 60, clip]{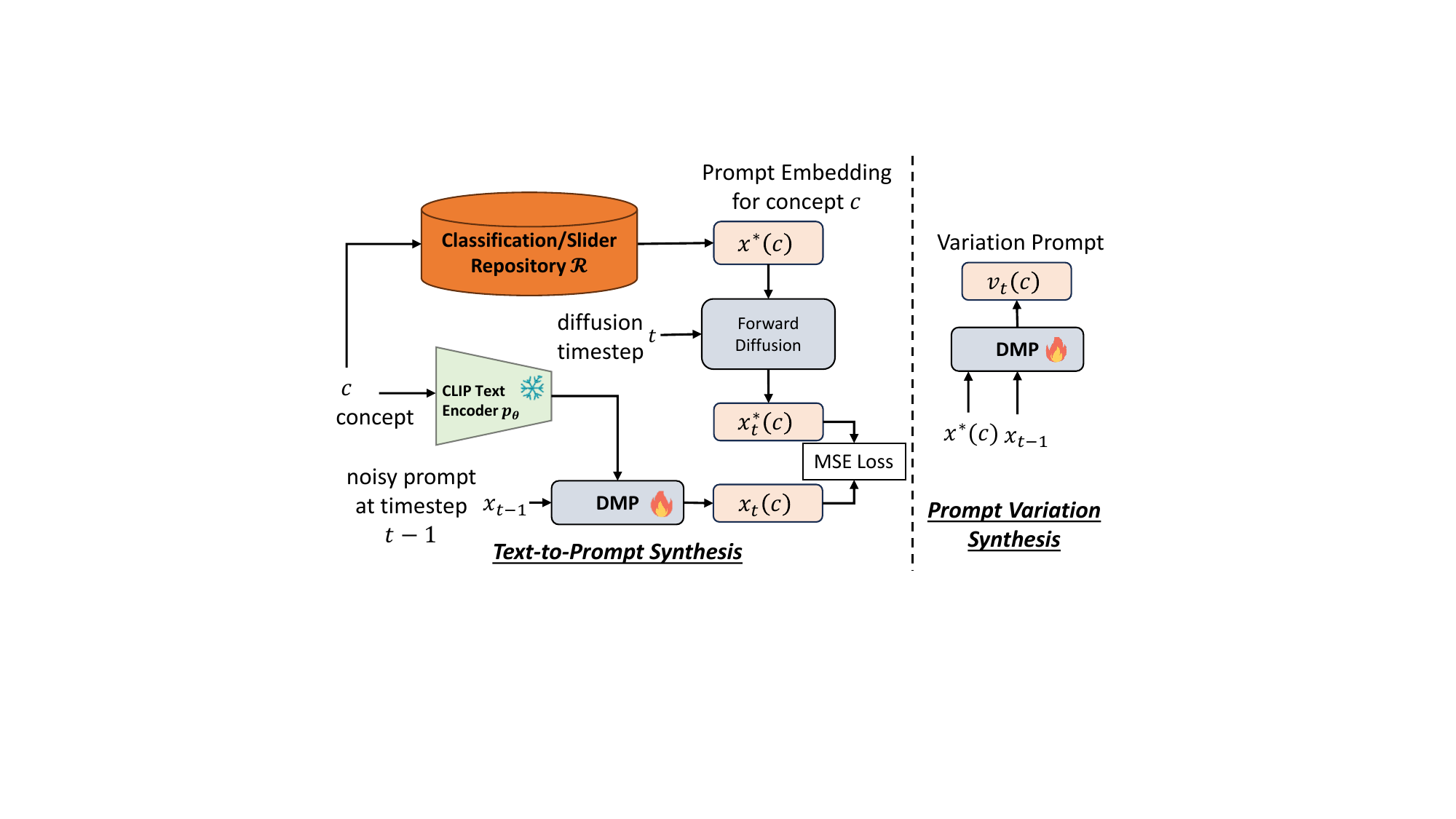}
  \caption{\label{fig:zoomedarch} \textbf{Left:} Diffusion Meta-Prompt framework for Text-to-Prompt synthesis. \textbf{Right:} Prompt Variation synthesis conditioned on learned Textual Inversion Prompts.}
\end{figure}
Figure~\ref{fig:zoomedarch} shows the DMP framework where the left part denotes the training of text-to-prompt diffusion model and right part denotes the diffusion model for generating prompt variations conditioned on the original prompts. 

\subsection{Relation to Prior Methods}

 Table~\ref{tab:difftableprior} illustrates how DMP differs from prior prompt-learning approaches across key axes such as requiring task-specific data or loss, supporting multiple tasks in a zero-shot manner, learning across multiple tasks and the need for storage of prompts. Classical prompt learning techniques are {\it task and model specific}, and require {\it task specific data and losses.} DMP instead learns the distribution of prompts from a prompt repository (produced by these prompt learning techniques). It can then be used to sample prompts for {\it many tasks}. Note that {\it DMP is not task specific and does not require task specific losses or data, just a prompt repository.} This makes DMP as a general prompt generator, free from the data and optimization constraints that characterize prior prompt-learning techniques. We show that, in many cases, DMP sampled prompts even outperform prompt learning prompts. For example, the classification results of Table~\ref{tab:supp_copromptcomparison} shows that DMP outperforms the base method on the unseen classes of the very same dataset, despite never accessing a single image.
\begin{table*}[h]
\centering
\scriptsize
\caption{\textbf{Conceptual differences between DMP and prior prompt learning methods.} 
Unlike prior works that refine prompts using data for a specific task or example, DMP treats prompts as a \emph{distribution} and trains a diffusion meta-model that amortizes prompt generation across many tasks and repositories. DMP does not require access to the original data used to train the prompts (L159--L161) and supports one-shot sampling, composition, and negative prompting without per-task optimization.}
\setlength{\tabcolsep}{2pt}
\resizebox{\linewidth}{!}{
\begin{tabular}{lccccl}
\toprule
\textbf{Method} & \textbf{Task Data/} & \textbf{Supports } &
\textbf{Learns } & \textbf{Minimal} & \textbf{Core Idea} \\
\textbf{} & \textbf{Loss Free} & \textbf{Multiple Tasks} &
\textbf{across Tasks} & \textbf{Storage} &  \\

\midrule
    Stylus / Retrieval-based        & {\color{green}\checkmark} & {\color{red}\ding{55}} & {\color{green}\checkmark} & {\color{red}\ding{55}} & Searches large repositories; limited generalization. \\
    \midrule
    \hline
    \multicolumn{6}{c}{Diffusion-based Methods}
    \\
    \hline
    \midrule
    Prompt Diffusion & {\color{red}\ding{55}} & {\color{red}\ding{55}} & {\color{red}\ding{55}} & {\color{red}\ding{55}} & Refines prompts for a specific task using data. \\
    \hline
    Diff-Prompt          & {\color{red}\ding{55}} & {\color{red}\ding{55}} & {\color{red}\ding{55}} & {\color{red}\ding{55}} & Mask-supervised diffusion for a single task. \\
    \hline
    Neural Network Diffusion        & {\color{red}\ding{55}} & {\color{red}\ding{55}} & {\color{red}\ding{55}} & {\color{green}\checkmark} & Diffusion to generate model weights  \\
    &&&&&for a particular task/dataset (weight-generation). \\
    \hline
Conditional LoRA Param Gen.    & {\color{red}\ding{55}} & {\color{green}\checkmark} & {\color{green}\checkmark} & {\color{green}\checkmark} & Diffusion/hypernetwork that generates LoRA adapters \\
    &&&&&conditioned on task (Cond P-Diff / CondLoRA). \\
    \hline
DiffLoRA                       & {\color{red}\ding{55}} & {\color{green}\checkmark} & {\color{green}\checkmark} & {\color{green}\checkmark} & Diffusion model predicts personalized low-rank (LoRA) \\
    &&&&&weights at inference (zero-shot personalization). \\
    \midrule
    \hline
    \multicolumn{6}{c}{Prompt Learning Methods}
    \\
    \hline
    \midrule
Hierarchical Variational TTP & {\color{red}\ding{55}} & {\color{red}\ding{55}} & {\color{red}\ding{55}} & {\color{red}\ding{55}} & Test-time variational prompt generator\\
&&&&& relying on task-specific data features. \\
\hline
Language-Aware Soft  Prompting     & {\color{red}\ding{55}} & {\color{red}\ding{55}} & {\color{red}\ding{55}} & {\color{red}\ding{55}} & Text-conditioned optimization  \\
 (LASP) &&&&&for soft prompts, task-specific. \\
\hline
Consistency-guided PL & {\color{red}\ding{55}} & {\color{red}\ding{55}} & {\color{red}\ding{55}} & {\color{red}\ding{55}} & Consistency loss improves 
\\
(CoPrompt)&&&&&prompt robustness; task-specific. \\
\hline
PromptKD     & {\color{red}\ding{55}} & {\color{red}\ding{55}} & {\color{red}\ding{55}} & {\color{red}\ding{55}} & Distills prompts from teacher to student 
\\
&&&&&in unsupervised setting; task-specific. \\
\hline
Dual-Prompt Collaboration          & {\color{red}\ding{55}} & {\color{red}\ding{55}} & {\color{red}\ding{55}} & {\color{red}\ding{55}} & Dual-prompt collaboration requiring
\\
(DPC)&&&&&tuned prompt parameters; task-specific. \\
\hline
Bayesian Prompt Learning & {\color{red}\ding{55}} & {\color{red}\ding{55}} & {\color{red}\ding{55}} & {\color{red}\ding{55}} & Bayesian uncertainty modeling 
\\
(BPL)&&&&&over task-specific prompts. \\
\hline
Patch-Prompt Aligned  & {\color{red}\ding{55}} & {\color{red}\ding{55}} & {\color{red}\ding{55}} & {\color{red}\ding{55}} & Bayesian hierarchical prompt generation 
\\
Bayesian Prompt Tuning&&&&&(label-specific stochastic prompts); per-task tuning. \\
    \midrule
    \hline
    \multicolumn{6}{c}{Meta-Learning Methods}
    \\
    \hline
    \midrule
    Prompt Learning via & {\color{red}\ding{55}} & {\color{green}\checkmark} & {\color{green}\checkmark} & {\color{red}\ding{55}} & Meta-regularization to improve prompt 
    \\
 Meta-Regularization (ProMetaR)&&&&&generalization across tasks; requires training data. \\
 \hline

Gradient-Regulated Meta-Prompt & {\color{red}\ding{55}} & {\color{green}\checkmark} & {\color{green}\checkmark} & {\color{red}\ding{55}} & Meta-learned prompt init + gradient 
\\
(GRAM)&&&&&regulator for few-shot cross-domain generalization. \\
\hline
PRewrite (Prompt Rewriting w/ RL) & {\color{red}\ding{55}} & {\color{green}\checkmark} & {\color{red}\ding{55}} & {\color{red}\ding{55}} & LLM-based prompt rewriter trained with RL  \\
&&&&&to improve downstream task performance. \\
\hline
    \midrule
    \textbf{DMP (Ours)}                & {\color{green}\checkmark} & {\color{green}\checkmark} & {\color{green}\checkmark} & {\color{green}\checkmark} & \textbf{Learns a prompt distribution; one-shot sampling, } 
    \\
&&&&&\textbf{composition, negative prompts.} \\
    \bottomrule
\end{tabular}
}
\label{tab:difftableprior}
\end{table*}

\subsection{DMP for Classification}\label{sec:appdmpclass}

\paragraph{Ablation on Steering Strength}\label{app:steeringstrength}
Table~\ref{app:ablation-strength} shows the effect of the steering strength on composite classification accuracy. The  trend suggests that moderate steering provides the best trade-off between stability and diversity during diffusion sampling.

\begin{figure}[h]
\scriptsize
\centering
\setlength{\tabcolsep}{6pt}
\captionof{table}{Ablation of steering strength $\lambda$ for DMP on 55 composite dataset-pairs classification. Moderate steering gives the strongest average accuracy, while overly aggressive steering begins to over-constrain the sampling trajectory.}
\label{app:ablation-strength}
\begin{tabular}{ccccc}
\toprule
$\lambda$ & All & Base & New & Avg. \\
\midrule
$0.1$ & 62.5 & 68.5 & 69.8 & 66.9 \\
$0.2$ & \textbf{63.1} & \textbf{69.4} & \textbf{70.8} & \textbf{67.8} \\
$0.3$ & 62.8 & 69.1 & 70.4 & 67.4 \\
$0.4$ & 62.2 & 68.4 & 69.6 & 66.7 \\
$0.5$ & 61.7 & 67.8 & 68.9 & 66.1 \\
$0.6$ & 61.6 & 67.9 & 69.1 & 66.2 \\
$0.7$ & 61.5 & 67.6 & 68.7 & 65.9 \\
$0.8$ & 61.3 & 67.2 & 68.3 & 65.6 \\
$0.9$ & 61.0 & 66.8 & 67.9 & 65.2 \\
$1.0$ & 60.7 & 66.4 & 67.5 & 64.9 \\
\bottomrule
\end{tabular}
\end{figure}

\begin{figure}
\begin{minipage}{\columnwidth}
\centering
\scriptsize
\captionof{table}{Domain generalization accuracy (\%) for DMPCoOp prompts sampled on ImageNet.}
\label{table:domaingen}
\setlength{\tabcolsep}{1.5pt}
\begin{tabular}{lcccccc}
    \toprule
    & \multicolumn{1}{c|}{Source} & \multicolumn{5}{c}{Target} \\
    Method & ImageNet & -V2 & -S & -A & -R & Average \\
    \midrule
    CoOp & 68.5 & 61.2 & 45.2 & 48.2 & 74.1 & 59.4 \\  
    DMPCoOp & {\textbf{68.7}} & {\textbf{62.1}} & {\textbf{46.5}} & {\textbf{50.6}} & {\textbf{75.1}} & {\textbf{60.6}} \\
    \bottomrule
\end{tabular}

\end{minipage}
    \begin{minipage}{\columnwidth}
\centering
\scriptsize
\captionof{table}{Domain generalization accuracy (\%) for DMPTAC prompts sampled on ImageNet.}
\label{table:tacdomaingen}
\setlength{\tabcolsep}{1.5pt}
\begin{tabular}{lcccccc}
    \toprule
    & \multicolumn{1}{c|}{Source} & \multicolumn{5}{c}{Target} \\
    Method & ImageNet & -V2 & -S & -A & -R & Average \\
    \midrule
    
    TAC & 71.3 & 64.6 & 48.3 & 48.6 & 76.6 & 61.9 \\ 
    
    DMPTAC & {\textbf{71.4}} & {\textbf{64.7}} & {\textbf{48.5}} & {\textbf{48.8}} & {\textbf{76.7}} & {\textbf{62.0}} \\
    \bottomrule
\end{tabular}

\end{minipage}
\hfill
\begin{minipage}{\columnwidth}
    \centering
\scriptsize
\captionof{table}{Cross-task generalization per dataset: performance for Mean Treecut Accuracy (MTA).}
\label{tab:tac_mta_only}
\begin{adjustbox}{max width=\textwidth}
\begin{tabular}{@{}l|c|c|c|c|c|c@{}}
\toprule
 \textbf{Method} & \textbf{ImageNet} & \textbf{-V2} & \textbf{-S} & \textbf{-R} & \textbf{-A} & \textbf{SUN397} \\ 
\cmidrule{2-7}
TAC & 71.3 & 34.1 & 28.3 & \textbf{5.5} & 36.5 & 30.0 \\ 
\textbf{DMPTAC} & \textbf{71.4} & \textbf{39.6} & \textbf{37.3} & \textbf{5.7} & \textbf{38.9} & \textbf{40.0} \\ 
\textbf{$\Delta$} 
& \textcolor{ForestGreen}{+0.1}
& \textcolor{ForestGreen}{+5.5}
& \textcolor{ForestGreen}{+9.0}
& \textcolor{ForestGreen}{+0.2}
& \textcolor{ForestGreen}{+2.4}
& \textcolor{ForestGreen}{+10.0} \\
\hline
\end{tabular}
\end{adjustbox}
\end{minipage}

\begin{minipage}{\columnwidth}
        \scriptsize
        \centering
        \captionof{table}{Comparison of DMPMulti with TI prompts.}
        \label{tab:comp_ids}
        \setlength{\tabcolsep}{2pt}
        \begin{tabular}{lcccc}
            \toprule
            \textbf{Method (SD-RV)} & \textbf{Face-ID$\uparrow$} & \textbf{DINO $\downarrow$} & \textbf{CLIP-I $\downarrow$} & \textbf{CLIP-T $\uparrow$} \\
            \midrule
            Textual Inversion & 0.428 & 0.627 & 0.696 & 0.244 \\
            Stylus (Top-1)&	\textbf{0.434}&	0.645&	0.706&	\underline{0.246} \\
            DMPMulti & \textbf{0.434} & \textbf{0.558} & \underline{0.653} & 0.245 \\
            DMPMulti(20)&	\underline{0.429}&	\underline{0.595}&	\textbf{0.599}&	\textbf{0.285} \\
            \bottomrule
        \end{tabular}
    \end{minipage}
        \end{figure}

\begin{figure}[h]
        \centering
        
        \begin{minipage}{\columnwidth}
            \centering
            \scriptsize
            \captionof{table}{Comparison of slider prompts generated by a separate DMP (DMPSlider) against DMPMulti.}
            \label{tab:slidersynthetic_app}
            \setlength{\tabcolsep}{2pt}
            \begin{tabular}{ l| l |l} 
            \toprule
             \textbf{Method}& \textbf{CLIP-s} $\uparrow$& \textbf{LPIPS}$\downarrow$\\
            \midrule
            Prompt Slider &   {\textbf{30.00}}& 0.219\\
            DMPMulti & \textbf{29.86}& {\textbf{0.126}}\\
            DMPSlider (separate) & \textbf{29.88}& {\textbf{0.121}}\\
            \bottomrule
            \end{tabular}
            
        \end{minipage}
        \begin{minipage}{\columnwidth}
        \scriptsize
        \centering
            \captionof{table}{Comparison of separately trained DMP (DMPIdentity) with Textual Inversion and DMPMulti for identity synthesis.}
            \label{tab:comp_ids_app}
            \setlength{\tabcolsep}{2pt}
            \begin{tabular}{lcccc}
                \toprule
                \textbf{Method (SD-RV)} & \textbf{Face-ID$\uparrow$} & \textbf{DINO $\downarrow$} & \textbf{CLIP-I $\downarrow$} & \textbf{CLIP-T $\uparrow$} \\
                \midrule
                Textual Inversion & 0.428 & 0.627 & 0.696 & 0.244 \\
                DMPMulti & {\textbf{0.434}} & {\textbf{0.558}} & {\textbf{0.653}} & {\textbf{0.245}} \\
                DMPIdentity (separate) & {\textbf{0.435}} & {\textbf{0.550}} & {\textbf{0.659}} & {\textbf{0.246}} \\
                \bottomrule
            \end{tabular}
        \end{minipage}
\end{figure}

\paragraph{Composite classification with other prompt-learning methods.}
We further evaluate DMP on the composite classification task using recent prompt-learning
methods, including MaPLe~\citep{khattakMaPLe} and TAC~\citep{Hao_2025_CVPR_TAC}.
The results are reported as the average accuracy over the 55 pairwise dataset compositions
defined in Sec.~\ref{sec:dmpgeneralization}. 

Table~\ref{tab:composite_stronger_methods} shows that DMP yields consistent improvements
across prompt-learning backbones. The gains are largest for CoOp since we learn the full prompt distribution using a diffusion model. For other prompt learning
methods such as MaPLe and TAC, the gains are smaller but remain consistent, reaching up to
$+1.2/+0.6$ points for unseen classes for MaPLe/TAC respectively. This is expected because these methods learn additional parameters beyond the
soft prompts, such as projection matrices or task-aware pre-context modules, which are too
high-dimensional to synthesize with the current DMP implementation and available training
resources. In these cases, DMP only synthesizes the prompt parameters while keeping the
method-specific auxiliary parameters fixed. Extending DMP to jointly synthesize these
higher-dimensional components would likely require compression or low-rank parameterization
strategies, which we leave for future work.

\begin{table}[t]
\centering
\scriptsize
\caption{Composite classification accuracy (\%) averaged over 55 dataset pairs. DMP yields
consistent improvements across prompt-learning methods. Gains are largest for CoOp, which
only learns soft prompts, and smaller for MaPLe and TAC, which include additional
high-dimensional learned components that are kept fixed in our current implementation.}
\label{tab:composite_stronger_methods}
\begin{tabular}{lccc}
\toprule
Method & All & Base & New \\
\midrule
CoOp & 58.7 & 65.1 & 65.1 \\
DMPCoOp & 63.1 & 69.4 & 70.8 \\
$\Delta$ & $+4.4$ & $+4.3$ & $+5.7$ \\
\midrule
MaPLe & 57.2 & 64.3 & 63.0 \\
DMPMaPLe & 57.9 & 64.7 & 64.2 \\
$\Delta$ & $+0.7$ & $+0.4$ & $+1.2$ \\
\midrule
TAC & 63.2 & 69.6 & 69.7 \\
DMPTAC & 63.5 & 70.0 & 70.4 \\
$\Delta$ & $+0.3$ & $+0.4$ & $+0.6$ \\
\bottomrule
\end{tabular}
\end{table}

\paragraph{Same dataset (Base to New generalization).}

In this setting, prompts are trained in the Base classes of each dataset and evaluated on all classes of the same dataset. This is the least demanding setting, since the prompts have to generalize only to unseen classes of the dataset where they were trained.

Table \ref{tab:supp_copromptcomparison} summarizes the performance of PL and DMP prompts for four prompt learning methods, CoOp, CoPrompt, Maple, and TAC over the baselines across 11 datasets for Base-to-novel class generalization. The reported results are the average over three seed runs. As expected, DMP prompts have slightly lower performance for the base classes, since they are inherently bounded by the accuracy of the PL prompts used to train the DMP. Further, Proposition~\ref{prop:tv_bound} suggests that the gap could be bridge by using more than 40 prompts per dataset to train the DMP model, we show that this is true by ablating the number of prompts used for training the DMP in Table~\ref{tab:ablationpromptnumber}.  The table shows that, even under these conditions, the DMP prompts generalize better than the baseline, achieving an average accuracy gain between \textbf{+0.4\%} and \textbf{4.8\%} for New (unseen) classes, across prompting methods. For new classes, DMP outperforms CoOp on \textbf{11/11} datasets for all methods listed on the table. The table details the results for three datasets for brevity.

With regards to prompting methods, the table shows that the DMP gains are larger for CoOp, then Co Prompt, MaPLe and finally TAC. This is expected because DMP only synthesizes prompts. However, all methods other than CoOp, complement these prompts with additional  learned weights or projection matrices. For these methods, we used the original learned weights together with the prompts synthesized by DMP. Despite only modifying prompts, DMP achieves gains of \textbf{+0.4-0.8\%} over these methods for unseen classes. While a DMP-type technique could be used to sample the additional weights, this is a more complex endeavor due to the high dimensionality of the latter and compute requirements. We leave the synthesis of large parameter matrices for future work. 

In any case, it can be concluded that, even on the same dataset setting, {\it meta-prompting with a diffusion model outperforms the existing prompting techniques that are used to train it}.

These findings suggest that the original learned prompts may be somewhat overfitted to the base classes, whereas DMP prompts induce a classifier of stronger generalization. Over all classes (both seen and unseen) and datasets, DMP has an accuracy gain of upto \textbf{+3.0\%}. It can be concluded that {\it meta-prompting with a diffusion model outperforms the existing prompting techniques that are used to train it}.

\begin{figure*}[h]
\centering
\scriptsize
\setlength{\tabcolsep}{2pt}
\captionof{table}{Base2new generalization per dataset: performance for All, Base, and New classes, and HM (Harmonic Mean). The results reported are the average over three seed runs. The Average column reports the average over 11 datasets. * denotes our implementation as the code is not publicly available.}
\label{tab:supp_copromptcomparison}
\begin{adjustbox}{max width=\textwidth}
\begin{tabular}{@{}l|cccc|cccc|cccc|cccc@{}}
\toprule
 & \multicolumn{4}{c|}{\textbf{(a) Average}} & \multicolumn{4}{c|}{\textbf{(b) ImageNet}} & \multicolumn{4}{c|}{\textbf{(c) Caltech101}} & \multicolumn{4}{c}{\textbf{(d) OxfordPets}} \\ 
\cmidrule{2-17}
\textbf{Method} & All & Base & New & HM & All & Base & New & HM & All & Base & New & HM & All & Base & New & HM \\ 
\midrule
VAE~\citep{kingma2013auto}	&58.6&	62.7&	68.3&	65.0&	57.0&	64.1&	57.8&	60.8&	89.1&	91.7&	94.3&	92.9&	90.1&	92.0&	97.1&	94.5 \\
Top-1 Retrieval~\citep{luo2024stylus} & 67.5 & 80.7 & 72.8 & 76.5 & 50.5 & 78.4 & 47.9 & 59.5 & 37.3 & 76.9 & 46.4 & 57.9 & 69.9 & 85.8 & 66.8 & 75.1 \\
Transformer~\citep{vaswani2017attention}  & 68.3 &82.3  & 69.4 & 75.3 &  68.1& 76.4 &  66.8& 71.3 & 94.8 &  98.2& 94.8 & 96.5 & 89.9 & 94.6 & 95.4 & 95.0 \\
GPT-2~\citep{radford2019language}  & 68.1 &81.8  & 69.1 & 74.9 &  68.0& 76.2 &  66.8& 71.2 & 94.8 &  98.3& 95.0 & 96.6 & 90.1 & 94.7 & 95.9 & 95.3 \\ 
Prompt Diffusion*~\citep{du2024promptdiffusionrobustifiesanymodality}  &  67.0& 69.9& 73.0& 71.4& 54.7& 60.6& 56.4& 58.4& 91.3& 94.4& 93.2& 93.8& 92.3& 94.3& {\textbf{97.5}}&95.9\\
\midrule
CoOp~\citep{zhou2022coop} & \textbf{67.1} & \textbf{80.9} & 68.6 & \textbf{74.2} & 68.0 & \textbf{76.2} & 66.8 & 71.2 & 93.8 & 98.1 & 93.0 & 95.5 & 90.6 & 95.2 & 96.5 & 95.8 \\
\textbf{DMPCoOp} & {\textbf{70.1}} & 80.3 & {\textbf{73.4}} & {\textbf{76.5}} & {\textbf{68.7}} & 75.4 & {\textbf{68.8}} & {\textbf{72.0}} & {\textbf{94.8}} & {\textbf{98.3}} & {\textbf{95.3}} & {\textbf{96.8}} & {\textbf{91.7}} & {\textbf{95.4}} & {\textbf{97.3}} & {\textbf{96.3}} \\
\textbf{$\Delta$} & \textcolor{ForestGreen}{+3.0} & \textcolor{Crimson}{-0.6} & \textcolor{ForestGreen}{+4.8} & \textcolor{ForestGreen}{+2.3} & \textcolor{ForestGreen}{+0.7} & \textcolor{Crimson}{-0.8} & \textcolor{ForestGreen}{+2.0} & \textcolor{ForestGreen}{+0.8} & \textcolor{ForestGreen}{+1.0} & \textcolor{ForestGreen}{+0.2} & \textcolor{ForestGreen}{+2.3} & \textcolor{ForestGreen}{+1.3} & \textcolor{ForestGreen}{+1.1} & \textcolor{ForestGreen}{+0.2} & \textcolor{ForestGreen}{+0.8} & \textcolor{ForestGreen}{+0.5} \\
\midrule
CoPrompt~\citep{CoPrompt} & 72.3 & \textbf{83.1} & 74.6 & 78.3 & \textbf{70.7} & \textbf{76.7} & 71.4 & \textbf{73.9} & \textbf{95.8} & \textbf{98.7} & 95.3 & \textbf{97.0} & 91.1 & \textbf{95.3} & 97.0 & 96.1 \\
\textbf{DMPCoPrompt} & \textbf{72.9} & 82.5 & \textbf{75.4} & \textbf{78.5} & \textbf{70.7} & 76.6 & \textbf{71.5} & \textbf{73.9} & 95.7 & \textbf{98.7} & \textbf{95.4} & \textbf{97.0} & \textbf{91.6} & 95.2 & \textbf{97.2} & \textbf{96.2} \\
\textbf{$\Delta$} 
& \textcolor{ForestGreen}{+0.6} 
& \textcolor{Crimson}{-0.6} 
& \textcolor{ForestGreen}{+0.8} 
& \textcolor{ForestGreen}{+0.2} 
& 0.0 
& \textcolor{Crimson}{-0.1} 
& \textcolor{ForestGreen}{+0.1} 
& 0.0 
& \textcolor{Crimson}{-0.1} 
& 0.0 
& \textcolor{ForestGreen}{+0.1} 
& 0.0 
& \textcolor{ForestGreen}{+0.5} 
& \textcolor{Crimson}{-0.1} 
& \textcolor{ForestGreen}{+0.2} 
& \textcolor{ForestGreen}{+0.1} \\
\midrule
Maple~\citep{khattakMaPLe}  & 72.0 & \textbf{82.2} & 75.1 & 78.2 & 70.2 & 76.7 & 70.5 & 73.5 & 94.5 & \textbf{98.0} & 94.3 & 96.1 & \textbf{92.3} & \textbf{95.4} & \textbf{97.8} & \textbf{96.6} \\
\textbf{DMPMaple } & \textbf{72.4} & 82.0 & \textbf{75.9} & \textbf{78.6} & \textbf{70.3} & \textbf{76.8} & \textbf{70.6} & \textbf{73.6} & \textbf{94.8} & \textbf{98.0} & \textbf{95.5} & \textbf{96.7} & \textbf{92.3} & \textbf{95.4} & 97.6 & 96.5 \\
\textbf{$\Delta$} 
& \textcolor{ForestGreen}{+0.4} 
& \textcolor{Crimson}{-0.2} 
& \textcolor{ForestGreen}{+0.8} 
& \textcolor{ForestGreen}{+0.4} 
& \textcolor{ForestGreen}{+0.1} 
& \textcolor{ForestGreen}{+0.1} 
& \textcolor{ForestGreen}{+0.1} 
& \textcolor{ForestGreen}{+0.1} 
& \textcolor{ForestGreen}{+0.3} 
& 0.0 
& \textcolor{ForestGreen}{+1.2} 
& \textcolor{ForestGreen}{+0.6} 
& 0.0 
& 0.0 
& \textcolor{Crimson}{-0.2} 
& \textcolor{Crimson}{-0.1} \\
\midrule
TAC~\citep{Hao_2025_CVPR_TAC}  & 74.6 & \textbf{85.2} & 77.1 & 80.8 & 71.3 & \textbf{78.5} & 71.0 & \textbf{74.6} & \textbf{95.2} & \textbf{98.6} & \textbf{95.0} & 96.7 & \textbf{93.1} & \textbf{96.0} & 98.0 & \textbf{97.0} \\
\textbf{DMPTAC } & \textbf{75.0} & 85.1 & \textbf{77.5} & \textbf{80.9} & \textbf{71.4} & \textbf{78.5} & \textbf{71.2} & \textbf{74.6} & \textbf{95.2} & \textbf{98.6} & \textbf{95.0} & \textbf{96.8} & \textbf{93.1} & 95.9 & \textbf{98.2} & \textbf{97.0} \\
\textbf{$\Delta$} 
& \textcolor{ForestGreen}{+0.4} 
& \textcolor{Crimson}{-0.1} 
& \textcolor{ForestGreen}{+0.4} 
& \textcolor{ForestGreen}{+0.1} 
& \textcolor{ForestGreen}{+0.1} 
& 0.0 
& \textcolor{ForestGreen}{+0.2} 
& 0.0 
& 0.0 
& 0.0 
& 0.0 
& \textcolor{ForestGreen}{+0.1} 
& 0.0 
& \textcolor{Crimson}{-0.1} 
& \textcolor{ForestGreen}{+0.2} 
& 0.0 \\
\bottomrule
\end{tabular}
\end{adjustbox}

\end{figure*}

Table~\ref{tab:mta_hca} shows the full results of cross-task generalization experiment described in section~\ref{sec:dmpgeneralization}. We note that DMPCoOp obtains higher accuracies consistently across both the Mean Treecut Accuracy and Hierarchical Consistency Accuracy metrics on all the datasets considered. Notably, DMPCoOp obtains \textbf{+11.4\%} and \textbf{+5\%} improvement on SUN dataset for MTA and HCA respectively. This shows that DMPCoOp prompts are more robust and generalize well beyond the task on which it was trained.

We further provide the results of cross-domain experiments for TAC method in Table~\ref{table:tacdomaingen}, and cross-task generalization in Table~\ref{tab:tac_mta_only}. The gains over baseline TAC are larger for cross-task generalization where DMPTAC prompts obtains \textbf{+10\% on SUN and +9\% on Imagenet-Sketch} datasets for hierarchical classification.

\textbf{t-SNE visualization.} To assess the fidelity and diversity of prompts synthesized by the DMP framework, we conducted an embedding space analysis comparing real prompts from the repository $\mathcal{R}$ with DMP-generated prompts sampled using different random noise seeds. Figure~\ref{fig:tsne} presents a two-dimensional t-SNE projection of both sets of embeddings, with real prompts in blue and generated prompts in orange for SUN397, FGVC, Oxford Flowers and Stanford Cars datasets respectively from left to right.

The visualization reveals that generated prompts broadly overlap with the real prompt manifold, while also exhibiting a greater spread, indicative of higher variability. This suggests that the model captures the underlying structure of prompt space without resorting to memorization, while also producing novel variations.

To quantify these observations, we computed the fraction of a prompt’s 5 nearest neighbors (in embedding space) that share the same class label between real and generated prompts across 11 datasets. On average, 70.7\% of generated prompts share the same nearest-neighbor labels as their real counterparts, showing strong semantic alignment between real and generated embeddings while ensuring diversity.

The clusters that are seen in the plots show that the prompts sampled by DMP are more consistent than those originally learned by the prompt learning method. This could explain the improved generalization of these prompts and the lower variance of the performance of the prompted model to different noise seeds. We observed a standard deviation of 0.7\% from the mean accuracy for prompts sampled by DMPCoOp with 200 different noise seeds. In contrast, the standard deviation of the baseline prompt learning method used to build the repository from different initialization seeds show a higher variation of 4.7\%.

We additionally evaluated five prompts generated from different random seeds on the combined Caltech101--ImageNet task and measured prediction disagreement, per-class accuracy correlation, and error-set overlap. Table \ref{tab:seed-agreement} shows that, across the five  prompts, the standard deviation of accuracy is only 0.09 percentage points (range: 0.2 points). The per-class accuracy correlation of 0.999, and the error set overlap of 98.66\% indicate highly consistent decision boundaries across random seeds. On the other hand, the fact that different sampled prompts disagree on approximately 1.08\% of individual predictions shows that they are not identical. Overall, these results indicate that diffusion sampling yields slight local variations while preserving essentially the same functional behavior and generalization performance.

\begin{table}[htbp]
\centering
\caption{Agreement statistics across different random seeds.}
\label{tab:seed-agreement}
\small
\begin{tabularx}{\textwidth}{
    l
    c
    c
    c
    c
}
\toprule
\textbf{Prompt pair}
& \textbf{Accuracy (\%)}
& \textbf{\shortstack{Prediction\\disagreement (\%)}}
& \textbf{\shortstack{Per-class accuracy\\correlation}}
& \textbf{\shortstack{Error-set\\overlap (\%)}} \\
\midrule
Seed 1 vs.\ Seed 2 & 59.4 / 59.5 & 1.01 & 0.9992 & 98.81 \\
Seed 1 vs.\ Seed 3 & 59.4 / 59.5 & 1.12 & 0.9991 & 98.68 \\
Seed 1 vs.\ Seed 4 & 59.4 / 59.3 & 1.52 & 0.9982 & 97.97 \\
Seed 1 vs.\ Seed 5 & 59.4 / 59.5 & 0.97 & 0.9992 & 98.78 \\
Seed 2 vs.\ Seed 3 & 59.5 / 59.5 & 0.79 & 0.9994 & 99.07 \\
Seed 2 vs.\ Seed 4 & 59.5 / 59.3 & 1.43 & 0.9985 & 98.19 \\
Seed 2 vs.\ Seed 5 & 59.5 / 59.5 & 0.71 & 0.9995 & 99.18 \\
Seed 3 vs.\ Seed 4 & 59.5 / 59.3 & 1.21 & 0.9986 & 98.44 \\
Seed 3 vs.\ Seed 5 & 59.5 / 59.5 & 0.79 & 0.9994 & 99.05 \\
Seed 4 vs.\ Seed 5 & 59.3 / 59.5 & 1.19 & 0.9987 & 98.43 \\
\midrule
\textbf{Mean} & \textbf{59.44} & \textbf{1.08} & \textbf{0.9990} & \textbf{98.66} \\
\bottomrule
\end{tabularx}
\end{table}

\begin{figure*}[t]
\centering
\includegraphics[width=0.24\textwidth, trim=30 0 30 0, clip]{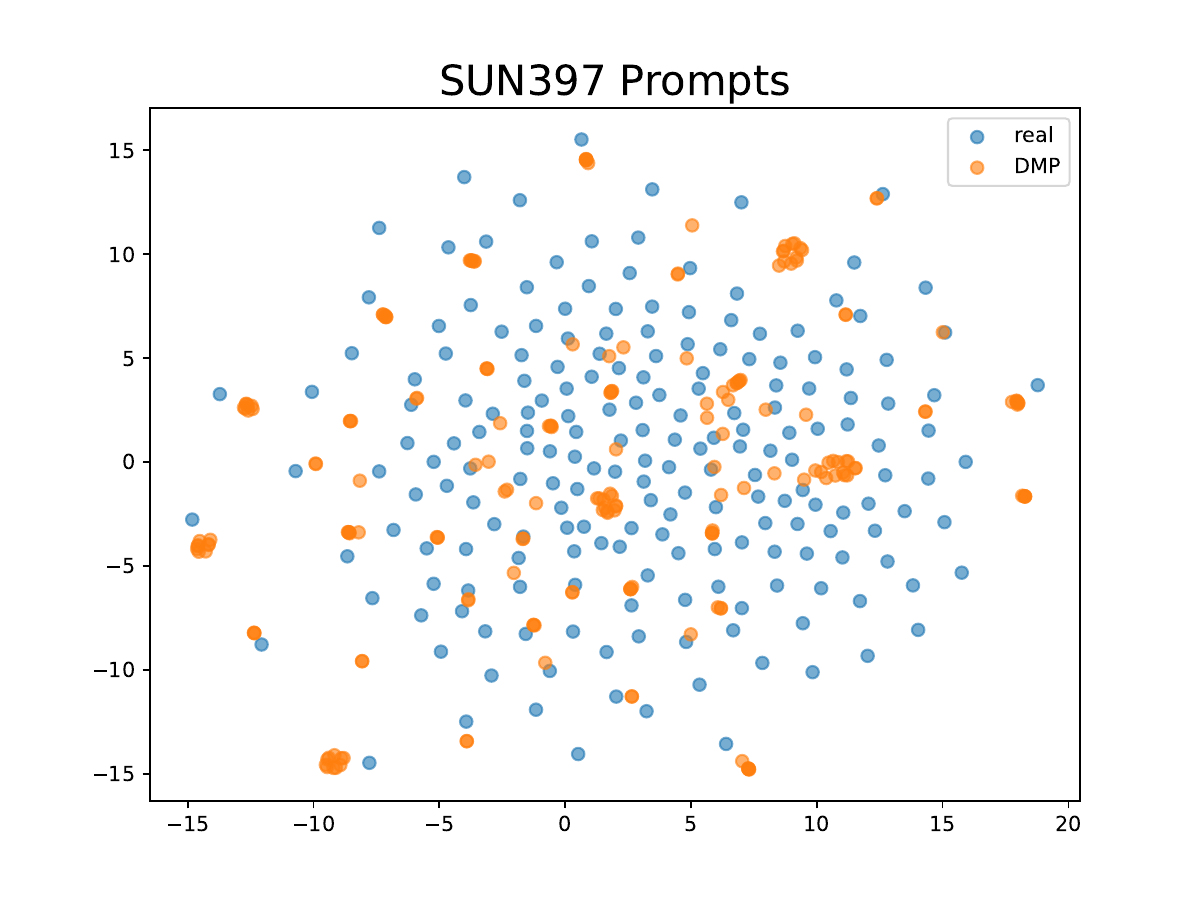}
\hfill
\includegraphics[width=0.24\textwidth, trim=30 0 30 0, clip]{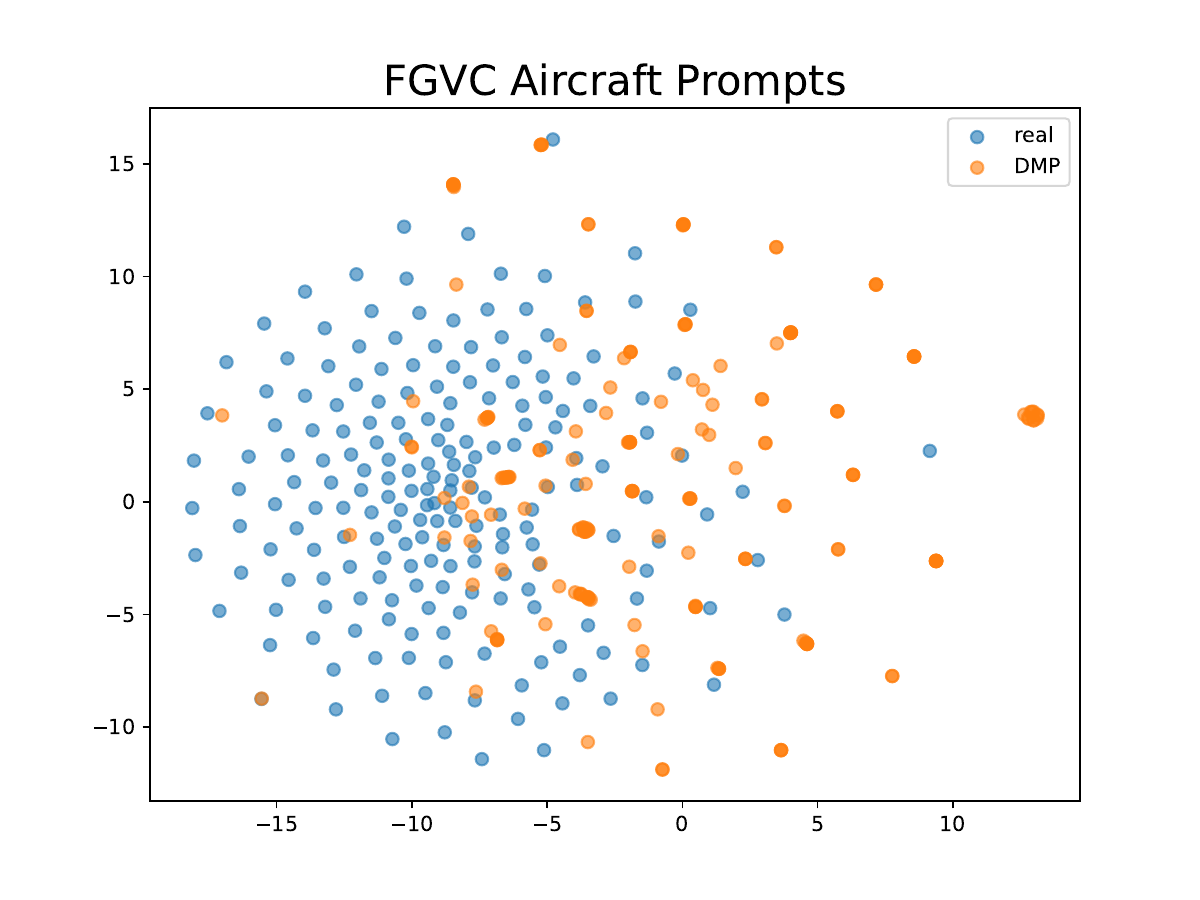}
\hfill
\includegraphics[width=0.24\textwidth, trim=30 0 30 0, clip]{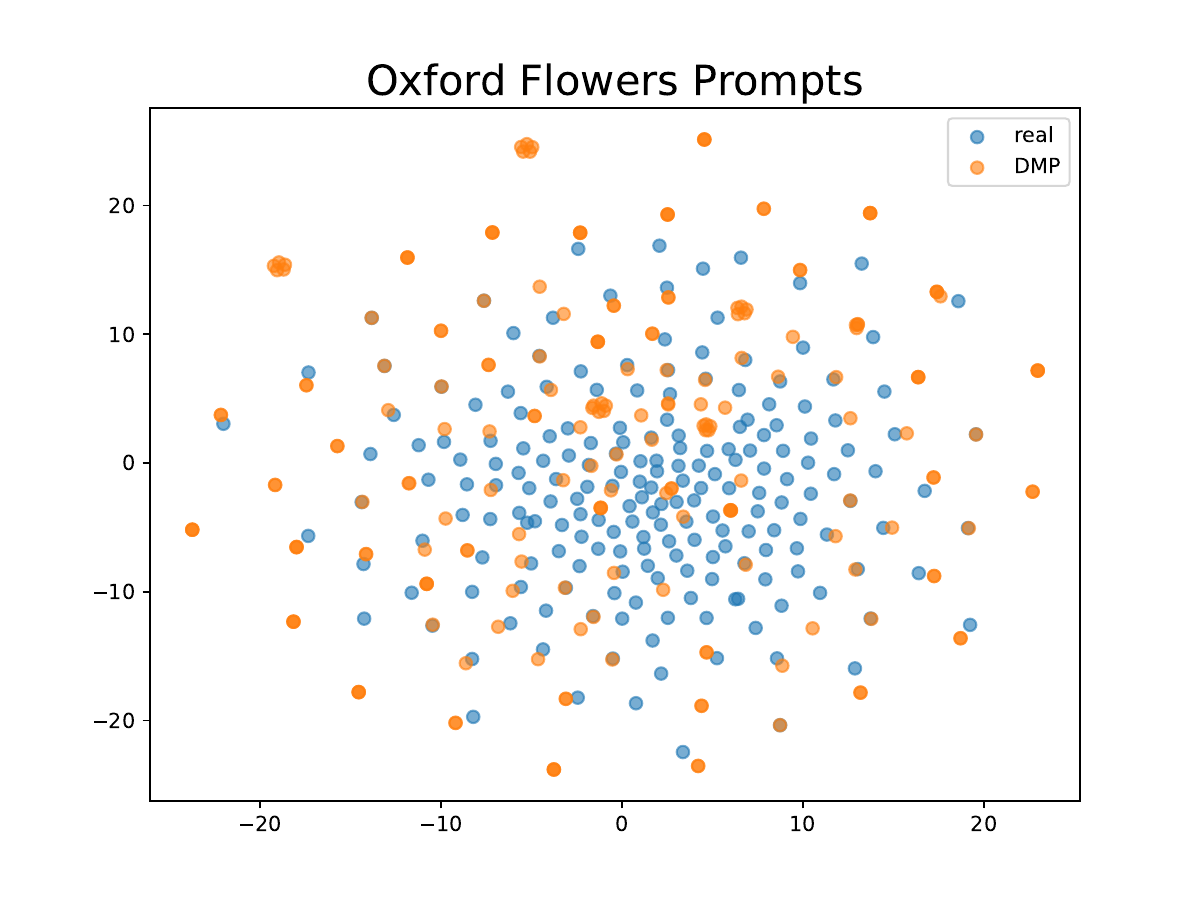}
\hfill
\includegraphics[width=0.24\textwidth, trim=30 0 30 0, clip]{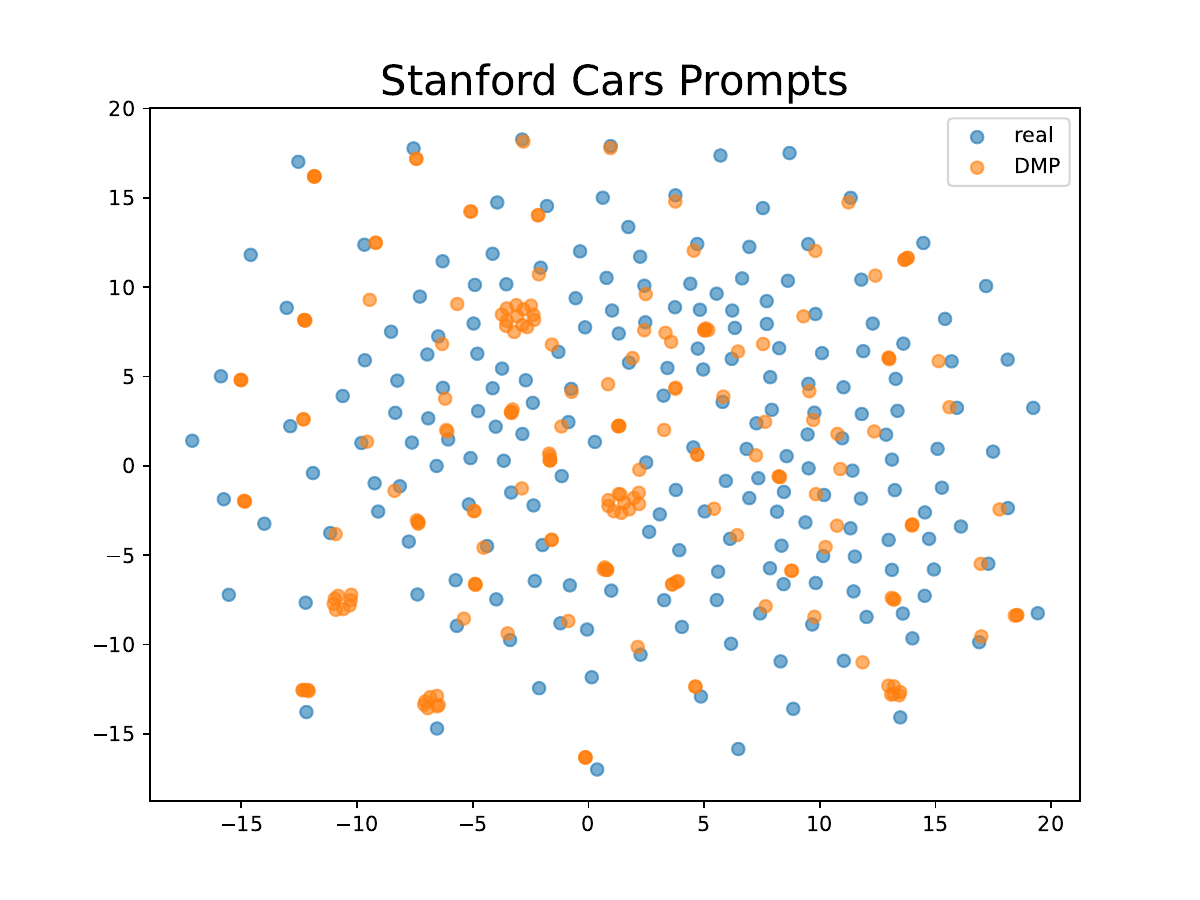}

\caption{\label{fig:tsne} 
\textbf{t-SNE visualizations of prompt embeddings.} Each figure shows real prompts (blue) and DMP-generated prompts (orange) with different noise seeds for different datasets: 
(a) SUN397, 
(b) FGVC-Aircraft, 
(c) Oxford Flowers, 
(d) Stanford Cars. 
Generated prompts broadly overlap with the real prompt manifolds while exhibiting greater spread, demonstrating both fidelity and diversity across domains.}
\end{figure*}

\begin{figure*}[h]
\centering
\scriptsize
\captionof{table}{Cross-task generalization per dataset: performance for Mean Treecut Accuracy (MTA)~\citep{protect}, and Hierarchical Consistency Accuracy (HCA)~\citep{protect}. }
\label{tab:mta_hca}
\begin{adjustbox}{max width=\textwidth}
\begin{tabular}{@{}l|cc|cc|cc|cc|cc|cc@{}}
\hline
 & \multicolumn{2}{c|}{\textbf{ImageNet}} & \multicolumn{2}{c|}{\textbf{-V2}} & \multicolumn{2}{c|}{\textbf{-S}} & \multicolumn{2}{c|}{\textbf{-R}} & \multicolumn{2}{c|}{\textbf{-A}} & \multicolumn{2}{c}{\textbf{SUN397}} \\ 
\cmidrule{2-13}
\textbf{Method} & MTA & HCA & MTA  & HCA & MTA & HCA & MTA & HCA & MTA & HCA & MTA & HCA \\ 
\hline
CoOp & 40.7 & 0.8 & 38.9 & 0.8 & 34.4 & 0.5 & \textbf{57.7} & 9.9 & 43.8 & 4.1 & 19.0 & 31.3 \\ 
\textbf{DMPCoOp} & \textbf{47.4} & \textbf{2.6} & \textbf{45.0} & \textbf{2.1} & \textbf{40.2} & \textbf{2.0} & \textbf{57.7} & \textbf{18.7} & \textbf{55.5} & \textbf{5.5} & \textbf{30.4} & \textbf{36.3} \\ 
\textbf{$\Delta$} 
& \textcolor{ForestGreen}{+6.7} 
& \textcolor{ForestGreen}{+1.8} 
& \textcolor{ForestGreen}{+6.1} 
& \textcolor{ForestGreen}{+1.3} 
& \textcolor{ForestGreen}{+5.8} 
& \textcolor{ForestGreen}{+1.5} 
& 0.0
& \textcolor{ForestGreen}{+8.8} 
& \textcolor{ForestGreen}{+11.7} 
& \textcolor{ForestGreen}{+1.3} 
& \textcolor{ForestGreen}{+11.4} 
& \textcolor{ForestGreen}{+5.0} \\
\hline
\end{tabular}
\end{adjustbox}

\end{figure*}

\textbf{Impact of Classifier-Free Guidance (CFG) scales.}
The performance of DMP with different guidance values is shown in Table~\ref{tab:supp_guidancecomparison}. The trends show that the average accuracy across all datasets decreases with increasing guidance scales.
\begin{figure*}[h]
\centering
\scriptsize
\captionof{table}{Comparison with different classifier free guidance scales. Base2new generalization per dataset: performance for All, Base, and New classes, and HM (Harmonic Mean).}
\label{tab:supp_guidancecomparison}
\begin{adjustbox}{max width=\textwidth}
\begin{tabular}{@{}l|cccc|cccc|cccc|cccc@{}}
\hline
 & \multicolumn{4}{c|}{\textbf{(a) Average (scale 4.5)}} & \multicolumn{4}{c|}{\textbf{(b) Average (scale 5.5)}} & \multicolumn{4}{c|}{\textbf{(c) Average (scale 6.5)}} & \multicolumn{4}{c}{\textbf{(d) Average (scale 7.5)}} \\ 
\cmidrule{2-17}
\textbf{Method} & All & Base & New & HM & All & Base & New & HM & All & Base & New & HM & All & Base & New & HM \\ 
\hline
\textbf{DMPCoOp} & {\textbf{70.1}} & {\textbf{80.3}} & {\textbf{73.4}} & {\textbf{76.5}} & {\textbf{69.6}} & {\textbf{80.3}} & {\textbf{70.6}} & {\textbf{76.3}} & {\textbf{69.6}} & {\textbf{79.3}} & {\textbf{70.4}} & {\textbf{75.0}} & {\textbf{68.1}} & {\textbf{79.4}} & {\textbf{70.1}} & {\textbf{75.0}} \\ 
\bottomrule
\end{tabular}
\end{adjustbox}
\end{figure*}

\textbf{Variational Autoencoder Reconstruction.} Table \ref{table:ablation_autoencoder} shows the reconstruction accuracy of the CoOp prompts for the trained Variational Autoencoder (VAE) across different datasets. The autoencoder reconstructs the prompts almost perfectly with only 0.1\% difference on average across all the datasets.

\begin{table*}[h]
    \centering
    \scriptsize
    \caption{Quantitative results of VAE reconstruction: Comparison against CoOp prompts across various datasets. The VAE reconstructs the CoOp prompts baseline almost perfectly with only 0.1\% difference on average.} 
    \label{table:ablation_autoencoder}
    \renewcommand{\arraystretch}{1.2} 
    \begin{adjustbox}{width=\textwidth,center}
    \begin{tabularx}{\textwidth}{l|c|XXXXXXXXXX|c}
        \hline
        \textbf{Model} & \rotatebox{60}{\textbf{ImageNet}} & \rotatebox{60}{\textbf{Oxford flowers}} & \rotatebox{60}{\textbf{Oxford pets}} & \rotatebox{60}{\textbf{Stanford cars}} & \rotatebox{60}{\textbf{Caltech 101}} & \rotatebox{60}{\textbf{Food101}} & \rotatebox{60}{\textbf{FGVC Aircraft}} & \rotatebox{60}{\textbf{SUN397}} & \rotatebox{60}{\textbf{DTD}} & \rotatebox{60}{\textbf{EuroSAT}} & \rotatebox{60}{\textbf{UCF101}} & \rotatebox{60}{\textbf{Average}} \\ 
        \hline
        CoOp & \textbf{68.0} & \textbf{72.6} & \textbf{90.1} & \textbf{68.7} & \textbf{94.8} & 84.9 & \textbf{25.1} & \textbf{67.6} & 51.9 & 58.9 & \textbf{66.2} & \textbf{68.1} \\ 
Autoencoder & \textbf{68.0} & 72.0 & 89.6 & 68.6 & 94.6 & \textbf{85.1} & 25.0 & 67.0 & \textbf{52.0} & \textbf{59.5} & 66.1 & 68.0 \\
\textbf{$\Delta$} 
& 0.0 
& \textcolor{Crimson}{-0.6} 
& \textcolor{Crimson}{-0.5} 
& \textcolor{Crimson}{-0.1} 
& \textcolor{Crimson}{-0.2} 
& \textcolor{ForestGreen}{+0.2} 
& \textcolor{Crimson}{-0.1} 
& \textcolor{Crimson}{-0.6} 
& \textcolor{ForestGreen}{+0.1} 
& \textcolor{ForestGreen}{+0.6} 
& \textcolor{Crimson}{-0.1} 
& \textcolor{Crimson}{-0.1} \\
        \hline
    \end{tabularx}
    \end{adjustbox}

\end{table*}

\begin{table*}[t]
    \centering
    \scriptsize
    \caption{Ablation study on Cross-dataset generalization of DMPCoOp Imagenet prompts: Comparison of classnames against dataset names as prompt inputs to the DMPCoOp model.} 
    \label{table:ablation_coopcomparison}
    \renewcommand{\arraystretch}{1.2} 
    \begin{adjustbox}{width=\textwidth,center}
    \begin{tabularx}{\textwidth}{l|c|XXXXXXXXXX|c}
        \hline
        \textbf{Model} & \rotatebox{60}{\textbf{ImageNet}} & \rotatebox{60}{\textbf{Oxford flowers}} & \rotatebox{60}{\textbf{Oxford pets}} & \rotatebox{60}{\textbf{Stanford cars}} & \rotatebox{60}{\textbf{Caltech 101}} & \rotatebox{60}{\textbf{Food101}} & \rotatebox{60}{\textbf{FGVC Aircraft}} & \rotatebox{60}{\textbf{SUN397}} & \rotatebox{60}{\textbf{DTD}} & \rotatebox{60}{\textbf{EuroSAT}} & \rotatebox{60}{\textbf{UCF101}} & \rotatebox{60}{\textbf{Average}} \\ 
        \hline
        CoOp & 68.0 & 66.5 & 88.5 & 61.9 & \textbf{92.7} & 84.8 & 15.2 & 60.7 & 40.9 & \textbf{46.9} & 65.3 & 62.9 \\ 
DMPCoOp (Dataset Names) & 67.8 & 65.2 & 88.0 & \textbf{63.4} & \textbf{92.7} & 83.9 & 16.8 & 62.6 & \textbf{41.0} & 46.3 & 66.6 & 63.2 \\ 
\textbf{$\Delta$ (Dataset)} 
& \textcolor{Crimson}{-0.2} 
& \textcolor{Crimson}{-1.3} 
& \textcolor{Crimson}{-0.5} 
& \textcolor{ForestGreen}{+1.5} 
& 0.0 
& \textcolor{Crimson}{-0.9} 
& \textcolor{ForestGreen}{+1.6} 
& \textcolor{ForestGreen}{+1.9} 
& \textcolor{ForestGreen}{+0.1} 
& \textcolor{Crimson}{-0.6} 
& \textcolor{ForestGreen}{+1.3} 
& \textcolor{ForestGreen}{+0.3} \\
DMPCoOp (Class Names) & \textbf{68.7} & \textbf{69.0} & \textbf{89.2} & 62.4 & 91.4 & \textbf{85.7} & \textbf{20.5} & \textbf{62.9} & 40.4 & 46.2 & \textbf{66.7} & \textbf{63.9} \\
\textbf{$\Delta$ (Class)} 
& \textcolor{ForestGreen}{+0.7} 
& \textcolor{ForestGreen}{+2.5} 
& \textcolor{ForestGreen}{+0.7} 
& \textcolor{ForestGreen}{+0.5} 
& \textcolor{Crimson}{-1.3} 
& \textcolor{ForestGreen}{+0.9} 
& \textcolor{ForestGreen}{+5.3} 
& \textcolor{ForestGreen}{+2.2} 
& \textcolor{Crimson}{-0.5} 
& \textcolor{Crimson}{-0.7} 
& \textcolor{ForestGreen}{+1.4} 
& \textcolor{ForestGreen}{+1.0} \\
        \hline
    \end{tabularx}
    \end{adjustbox}
    
\end{table*}

\subsection{Multi-task DMP}\label{app:dmpmulti}

\textit{DMPMulti} is a multi-task DMP model trained to synthesize prompts for personalized subjects and prompt sliders. Both tasks are unified within a single framework through the shared CLIP-ViT L/16 text encoder.

\noindent\textbf{Prompt Slider Repository.}
Prompt Sliders  is a technique to learn text prompts in the CLIP text embedding space that allow control of particular attributes of a designated concept. The prompt is learned through the optimization 
\begin{equation}\label{eq4}
S^* = \arg\min_S \mathbb{E}_{x \sim \mathcal{E}(x), y, \epsilon \sim \mathcal{N}(0,1), t} \left\| \epsilon_t - \epsilon^d_{\theta}(x_t, \tau_{\theta}(y, S), t) \right\|_2^2
\end{equation}
where $x_t$ is an image, and $y$ is any additional prompt text, such as ``a photo of a".
Both \(\tau_{\theta}\) and \(\epsilon_{\theta}\) are fixed during the optimization.  Given a target concept $c_t$, a prompt slider $S^*$  is learned to encourage the distribution of images of \(c_t\) to exhibit more positive attributes \(c^+\) and fewer negative attributes \(c^-\). This is implemented by replacing $\epsilon_t$  with
\begin{equation}\label{eq5}
\epsilon_t(\alpha) = \epsilon^d_{\theta}(x_t, \tau_{\theta}(y(c_t)), t) +
\alpha \eta \sum_{p \in P}
(\epsilon^d_{\theta}(x_t, \tau_{\theta}(y(c^+, p)), t) - \epsilon^d_{\theta}(x_t, \tau_{\theta}(y(c^-, p)), t))
\end{equation}
and $S$ by $\alpha S$ in~(\ref{eq4}), where $\eta$ is a guidance scale, $\alpha$ a scaling parameter, $y(c_t)$ the concept name, and $P$ a set of concepts that the attribute manipulation should preserve (for example,
race or gender). The positive \(c^+\), and negative \(c^-\) attributes are sampled from a template predefined for concept $c_t$. 
To create a slider repository ${\cal R}_S$, we trained prompt sliders for 20 different concepts using the SD-XL model and $3,000$ backpropagation steps. For each concept $c$, we save 40 prompts $S(c)$ (from the last $200$ steps in intervals of $5$) to obtain a training tensor $\mathcal{R}_S \in \mathbb{R}^{20 \times 40 \times d}$. 

\noindent\textbf{Prompt Identity Repository.} We use a random subset of 20 identities from the 3000 celebrity faces in the DMPVariation prompt repository to create a prompt identity repository ${\cal R}_I$ with each identity containing 40 prompts obtained from the last $200$ steps in intervals of $5$ to obtain a training tensor ${\cal R}_I \in \mathbb{R}^{20 \times 40 \times d}$.

\noindent\textbf{Training.}
The DMPMulti model was trained directly in prompt space $\cal P$, i.e. without autoencoder, using the full set of prompts from both repositories, i.e. ${\cal R} = {\cal R}_S \cup {\cal R}_I$. For each slider concept $c$ (e.g., age, smiling etc.) we use the concept name as the text condition for DMPMulti. For identities, we use ``identity-$c$" as the text condition where $c=1,\ldots, 20$ is associated with each identity $c$ in ${\cal R}$. 

\noindent\textbf{Inference.} At inference, DMPMulti can be conditioned with the slider concept name, to generate prompt sliders, or with ``identity-$c$" to generate identity prompts. Novel identities can be sampled by specifying a new identity "identity-$c$" with $c > 20$. Slider and identity prompts can then be fed to the downstream model in isolation or together.

Table~\ref{tab:comp_ids} compares Identity prompts from DMPMulti with Stylus and Textual Inversion across face recognition accuracy, image-to-image similarity, and prompt fidelity. DMPMulti achieves higher identity scores, maintains prompt compliance comparable to TI/Stylus, and shows lower similarity to training images, indicating stronger generalization. In contrast, TI tends to overfit, consistent with prior findings~\citep{yuan2023celebbasis}. The last row of Table~\ref{tab:comp_ids} shows an ablation study of using fewer samples (20 vs 40) per identity for training DMP. Notably, DMPMulti (20) achieves similar identity fidelity and even higher prompt compliance ($\uparrow$17\%) than the 40-sample variant.

\begin{figure*}[t]
\centering
\begin{minipage}[t]{\textwidth}
    \centering
    \includegraphics[width=\columnwidth, trim= 120 100 120 90, clip]{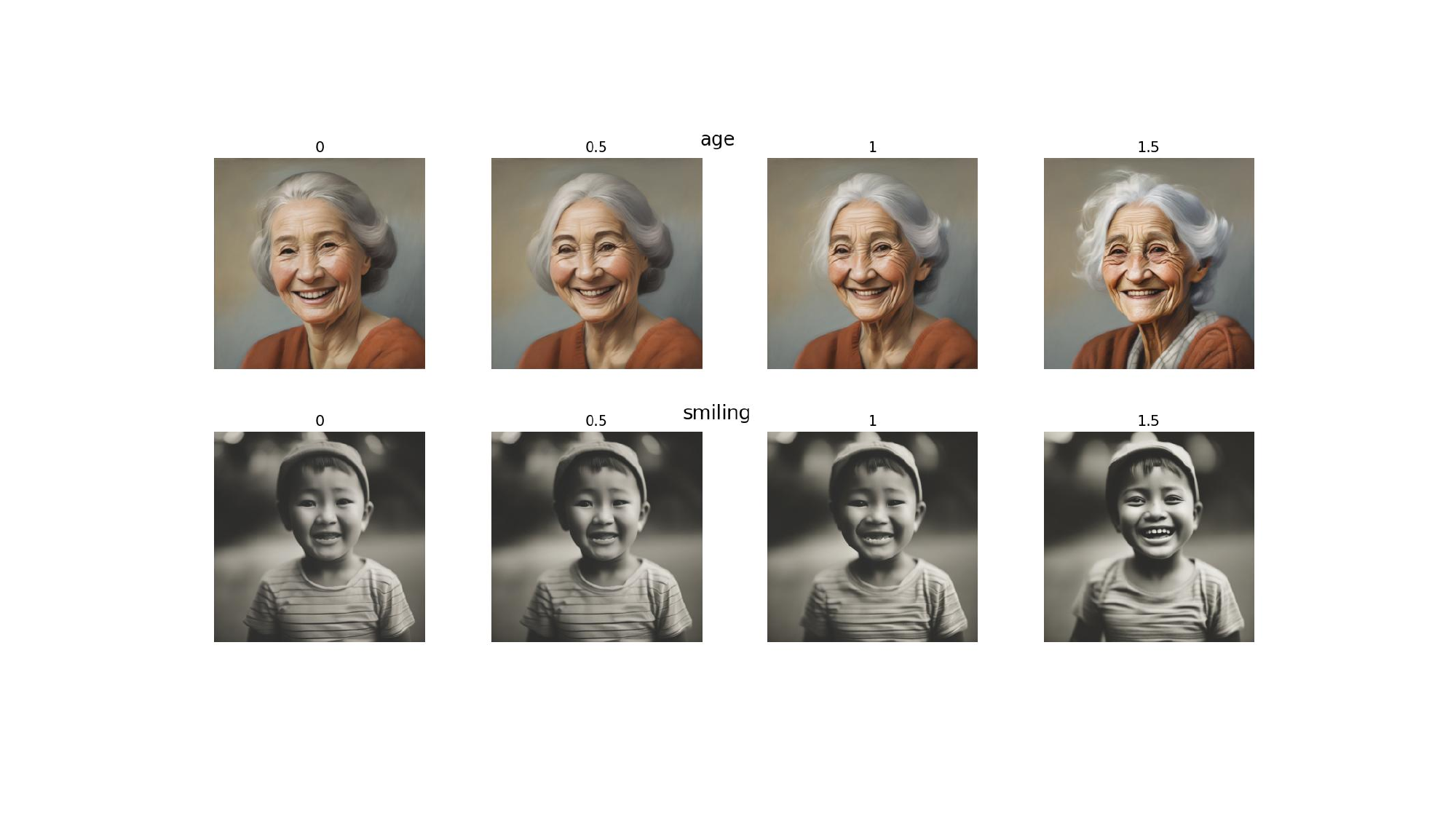}
\caption{\label{fig:qual_syn_sliders_age} \textbf{Prompt Sliders} images synthesized with sliders sampled by DMPSlider when prompted for age and smiling. The prompt used for the SD-XL model is \textit{"A photo of a beautiful man"}.}
\end{minipage}
\end{figure*}
\begin{figure*}[t]
\centering
\includegraphics[width=\textwidth, trim= 0 180 0 180, clip]{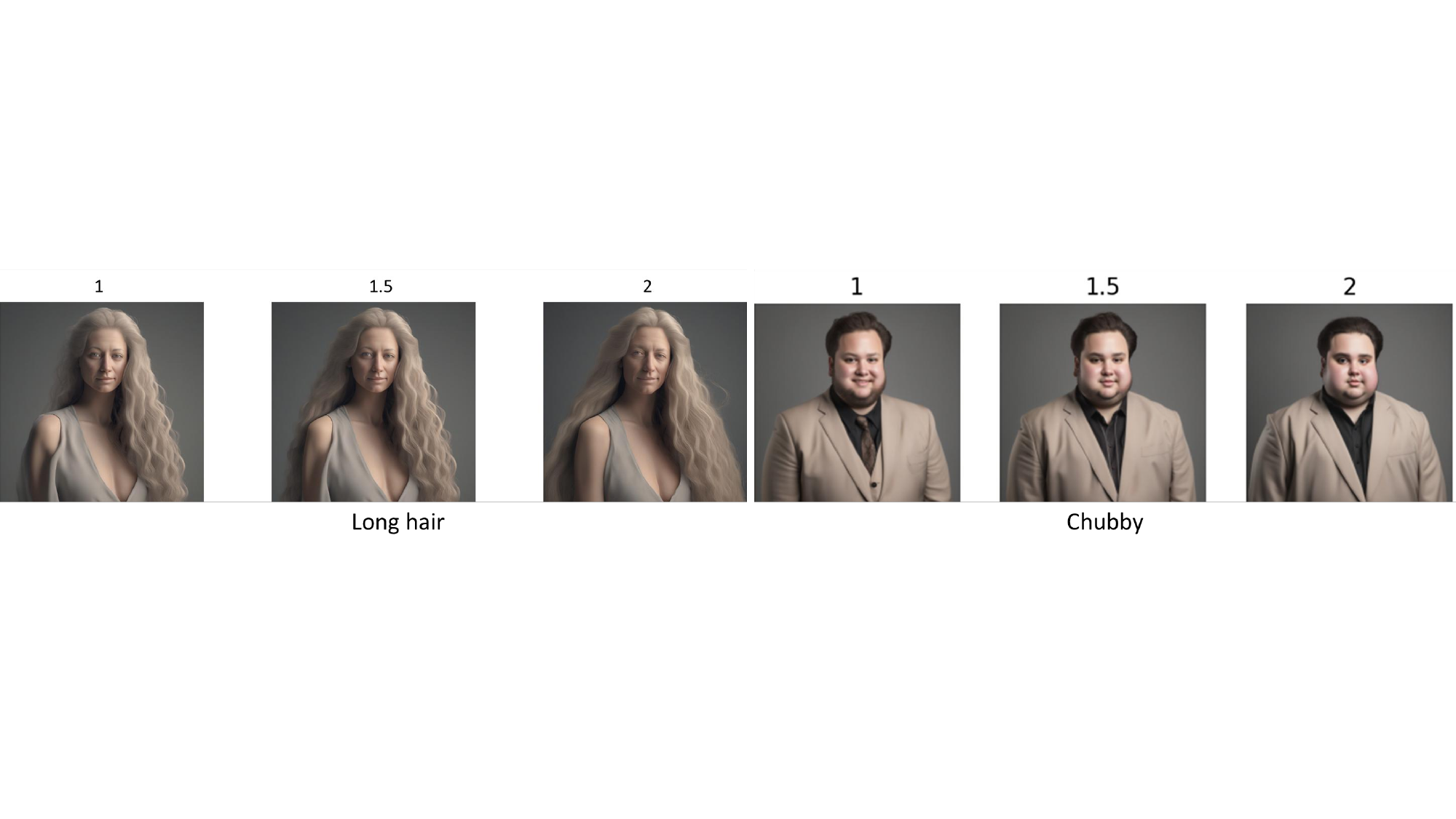}
\caption{\label{fig:qual_syn_sliders} \textbf{Qualitative results of DMPSlider prompts} depicting the concepts ``long hair" and ``chubby". The prompts used for the SD-XL model for the images shown from left to right are as follows. \textit{"A closeup photo of a person", "Professional headshot of a person"}.}
\end{figure*}

\begin{figure*}[h]
\centering
\includegraphics[width=\columnwidth, trim= 150 80 150 80, clip]{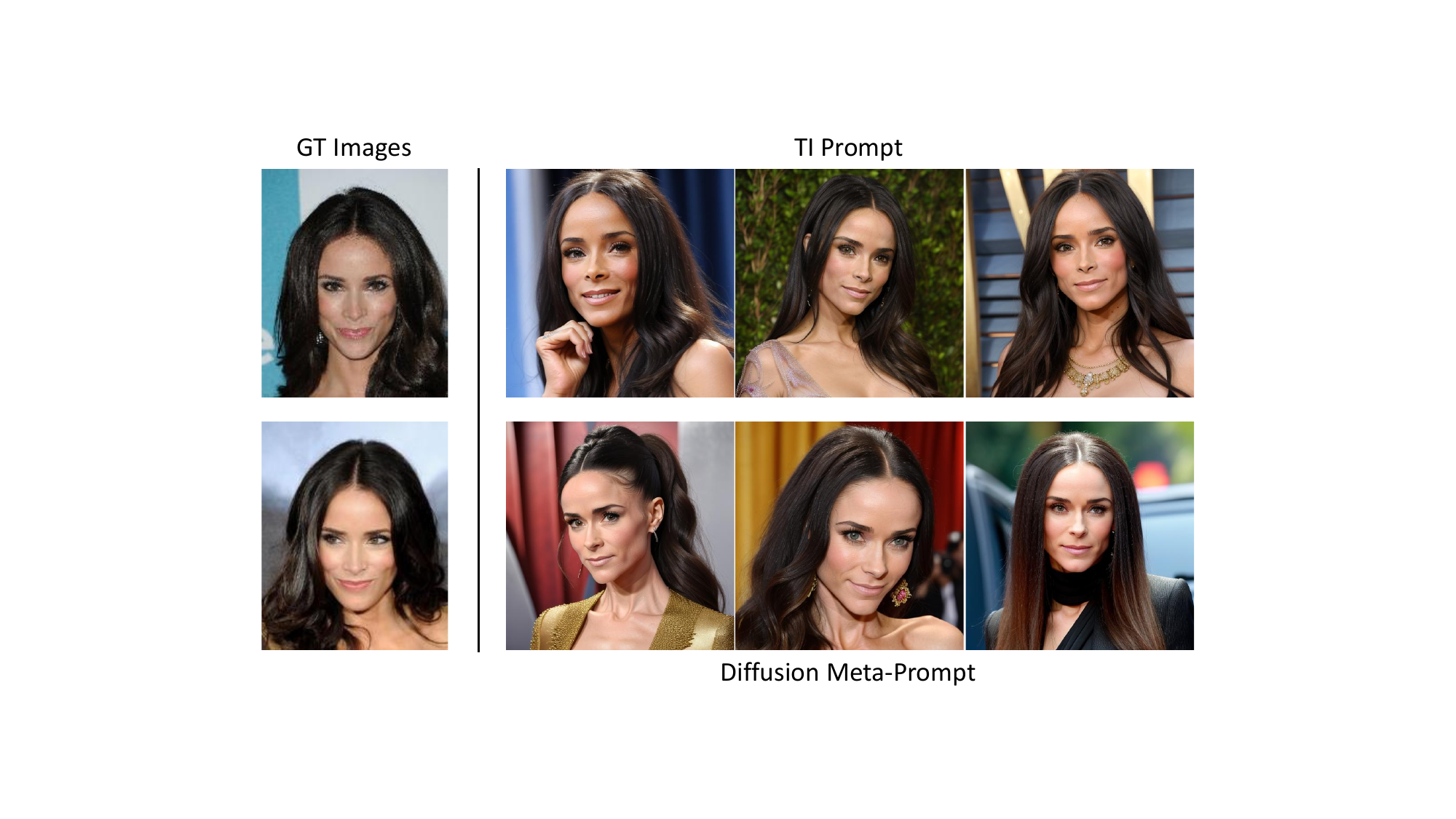}
\includegraphics[width=\columnwidth, trim= 150 80 150 80, clip]{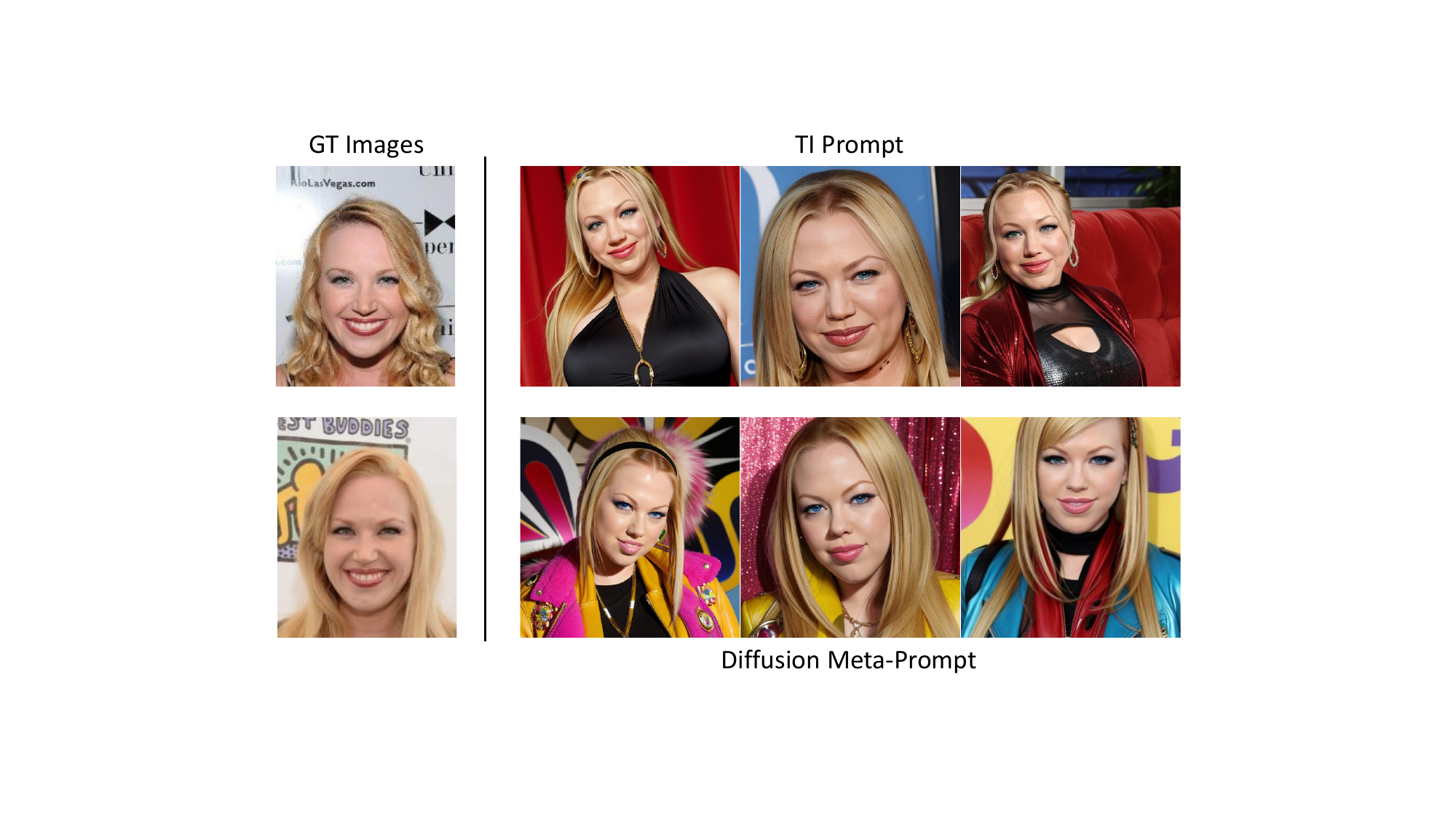}
\caption{\label{fig:qual_syn_id} \textbf{Qualitative comparison} of image synthesis with TI and DMPMulti prompts. Left: groundtruth images for Identity-38 and Identity-96 in the training set. Right: images synthesized by SD for the original TI prompts (first row) and prompts sampled from the DMPMulti model (second row):}
\end{figure*}
Figure~\ref{fig:qual_syn_sliders} shows qualitative results of slider prompts generated by DMP for the attributes "long hair" and "chubby".\\
Figure~\ref{fig:qual_syn_vs_prompt_sliders} presents a qualitative comparison of images generated by Concept Sliders (using the author-provided models), Prompt Sliders, and DMP. The results show that Concept Sliders struggle to induce the intended attributes, even at higher scales, due to their sensitivity to training hyperparameters, which requires careful tuning as noted in \citep{eccvsridhar24}. We observe that DMP-generated prompts produce images that are qualitatively similar to those from Prompt Sliders. However, the original Prompt Sliders method has limitations in maintaining subject identity at higher scales, as discussed in \citep{eccvsridhar24}. In contrast, DMP—despite being trained with the Prompt Sliders embeddings—demonstrates greater robustness, effectively preserving subject identity even at relatively higher scales. These result corroborate the results observed in Table 2 of the paper.

Figure \ref{fig:qual_syn_neg_sliders} illustrates the results of pure negative prompts,  where the sliders yield images with attributes opposite to the specified concepts, such as shorter hair or a neutral (non-smiling) expression.

\begin{figure}[h]
\centering
\includegraphics[width=\columnwidth, trim= 150 160 150 180, clip]{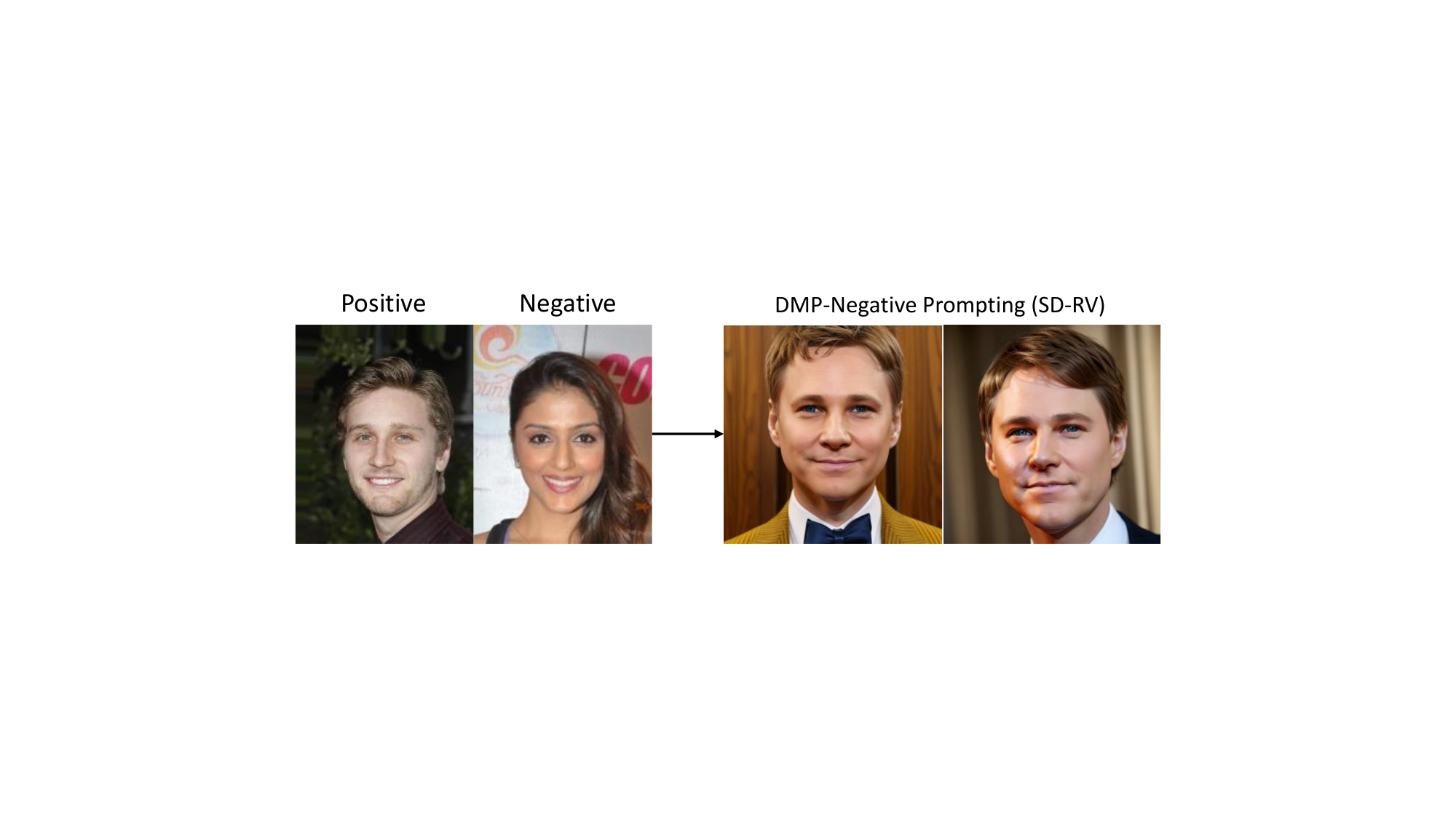}
\caption{\label{fig:qual_syn_id_neg} \textbf{Negative prompting} samples from the SD Realistic Vision model when prompted by the DMPMulti, itself prompted with id-21 (leftmost) as positive and id-24 (second from left) as negative prompt. The generated identity has features opposing to id-24 (chubby cheeks, small eyes, wide nose, etc.)}
\end{figure}

\begin{figure}[t]
\centering
\includegraphics[width=\columnwidth, trim= 200 0 200 0, clip]{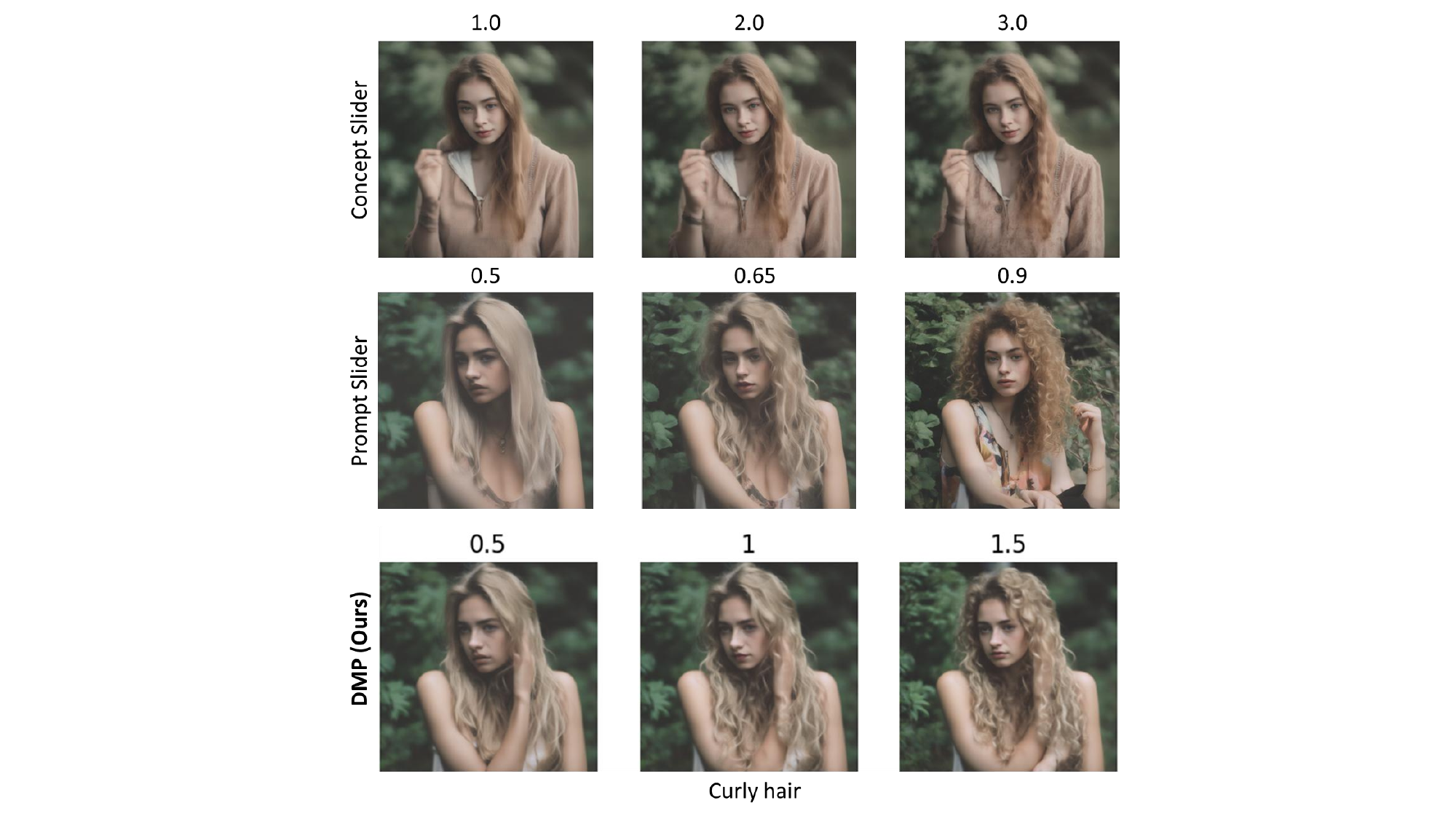}
\caption{\label{fig:qual_syn_vs_prompt_sliders} \textbf{Qualitative comparison} of images synthesized with baseline prompt sliders and concept sliders against DMPSlider. The prompt used for the SD-XL model is \textit{``A photo of a girl"} and the slider trained to control the concept \textbf{``curlyhair"}.}
\end{figure}

\begin{figure}[t]
\centering
\includegraphics[width=\columnwidth, trim= 160 20 150 40, clip]{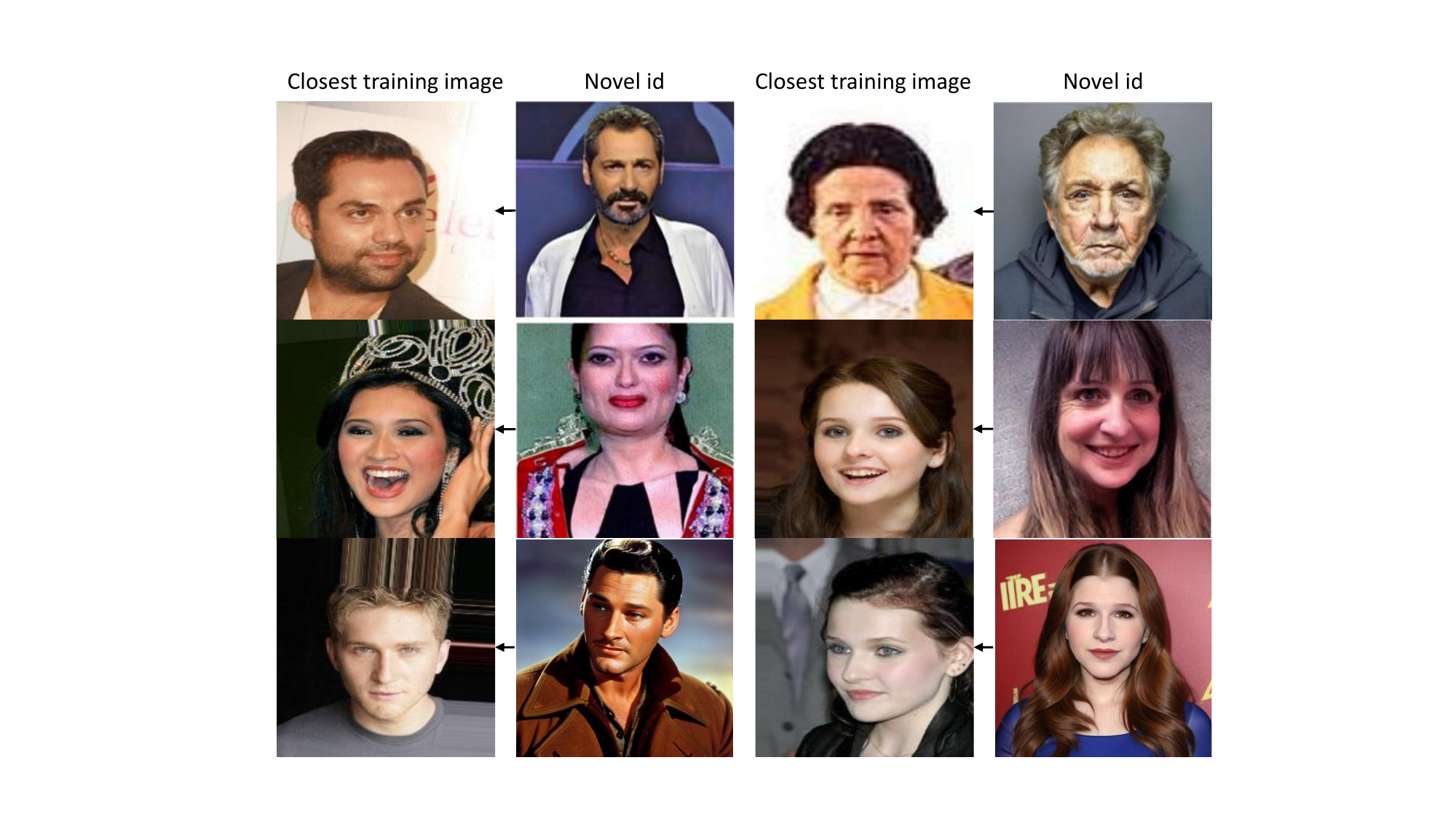}
\caption{\label{fig:supp_novel_closest_training_images}\textbf{Additional Qualitative results: }Nearest neighbors in the training set shown on the left for novel ids sampled by DMPMulti model to the right.}
\end{figure}

\begin{figure}[t]
\centering
\includegraphics[width=\columnwidth, trim= 150 120 150 80, clip]{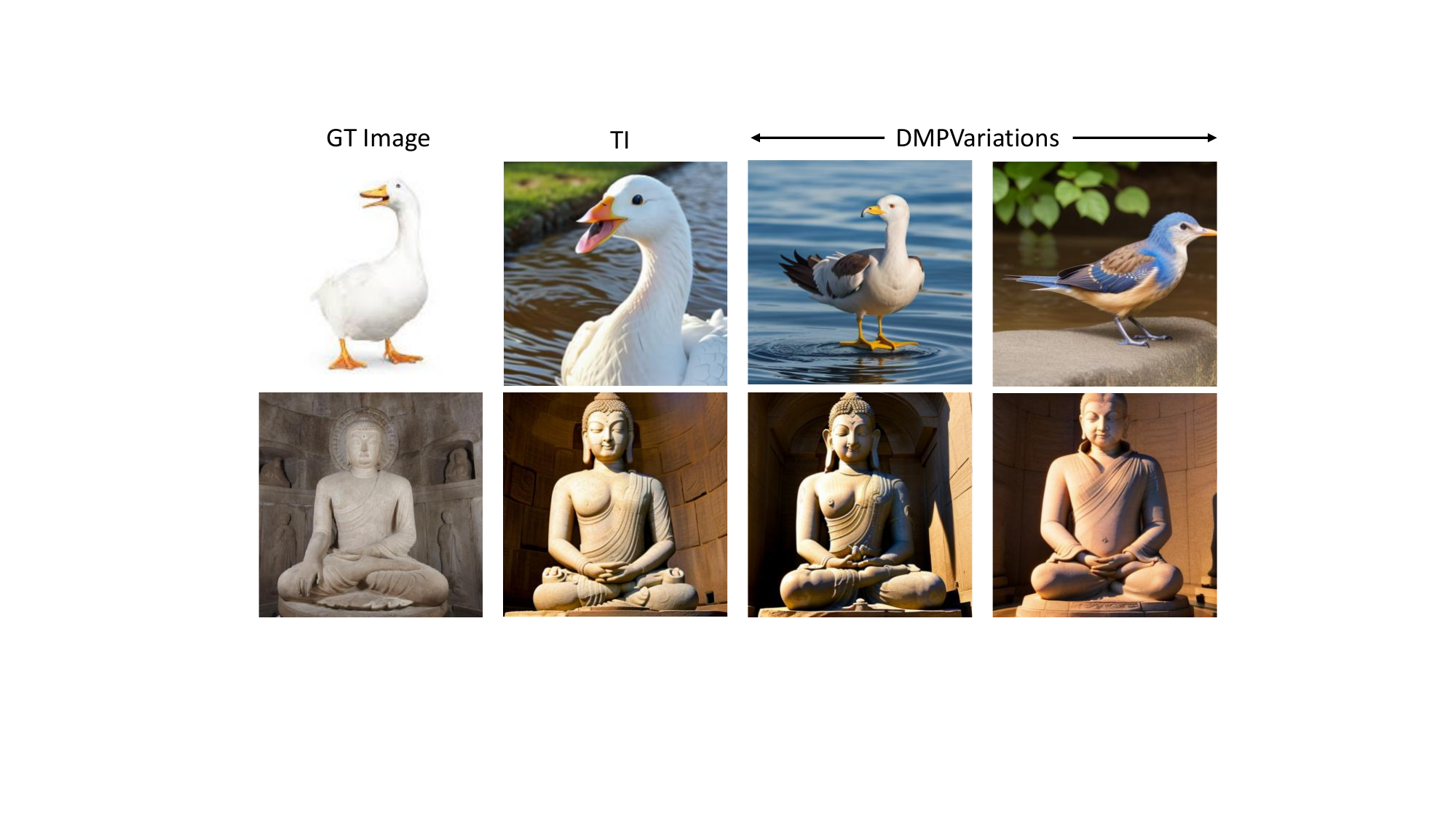}
\caption{\label{fig:mdpvariations-generalization} \textbf{Generalization of DMPVariation model}. Left: groundtruth images. Second: images generated by SD for the original TI prompts and the last two columns are the images for variation prompts sampled by DMP. See Fig.~\ref{fig:supp-mdpvariations-generalization} for additional results.}
\end{figure}

\begin{figure}[t]
\centering
\includegraphics[width=\columnwidth, trim= 200 0 200 0, clip]{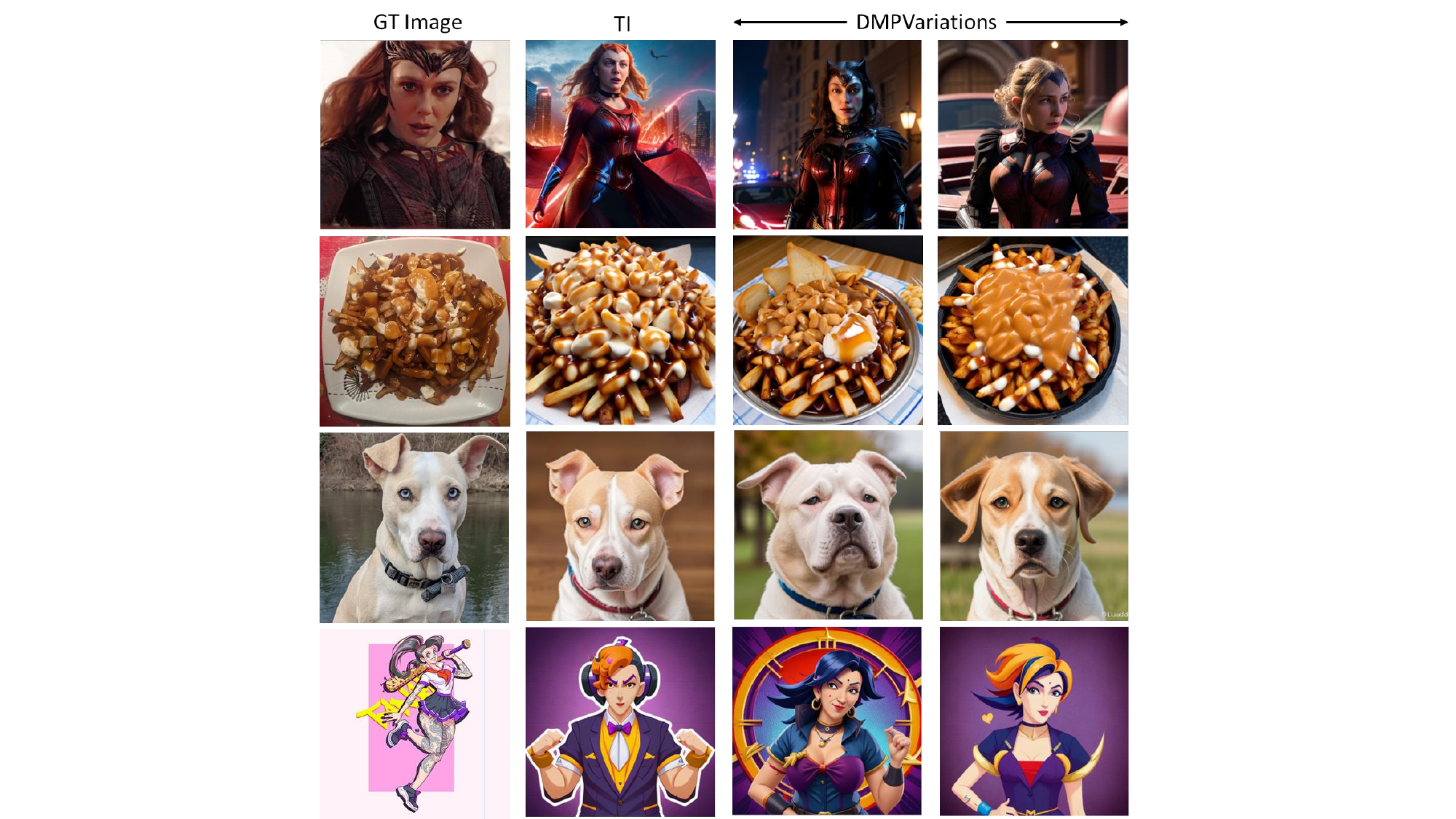}
\caption{\label{fig:supp-mdpvariations-generalization} \textbf{Additional Qualitative results of generalization of DMPVariation model}. Left: groundtruth images. Second: images generated by SD for the original TI prompts and the last two columns are the images for variation prompts sampled by DMP.}
\end{figure}

\begin{figure}[h]
\centering
\includegraphics[width=\columnwidth, trim= 100 100 50 80, clip]{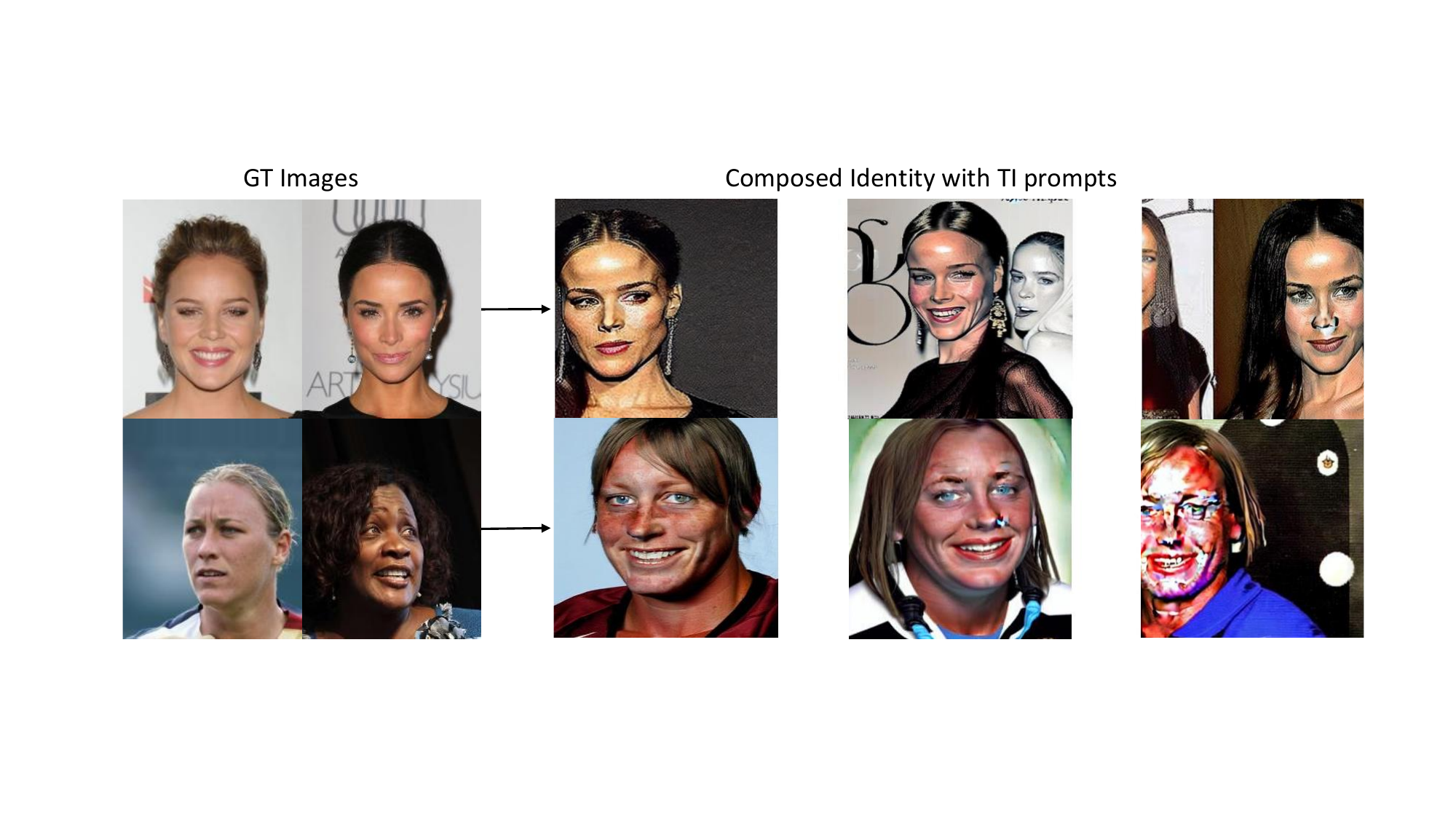}
\caption{\label{fig:supp_ablate_ti_id_compose} \textbf{Ablation study on Identity composition with Textual Inversion:} images generated with TI prompts by composing the Stable Diffusion outputs for the composition of the two identities shown on  the left for each row. TI composition produces distorted identities or repeats the same identities.}
\end{figure}

\begin{figure*}[t]
\centering
\includegraphics[width=\textwidth, trim= 0 200 0 200, clip]{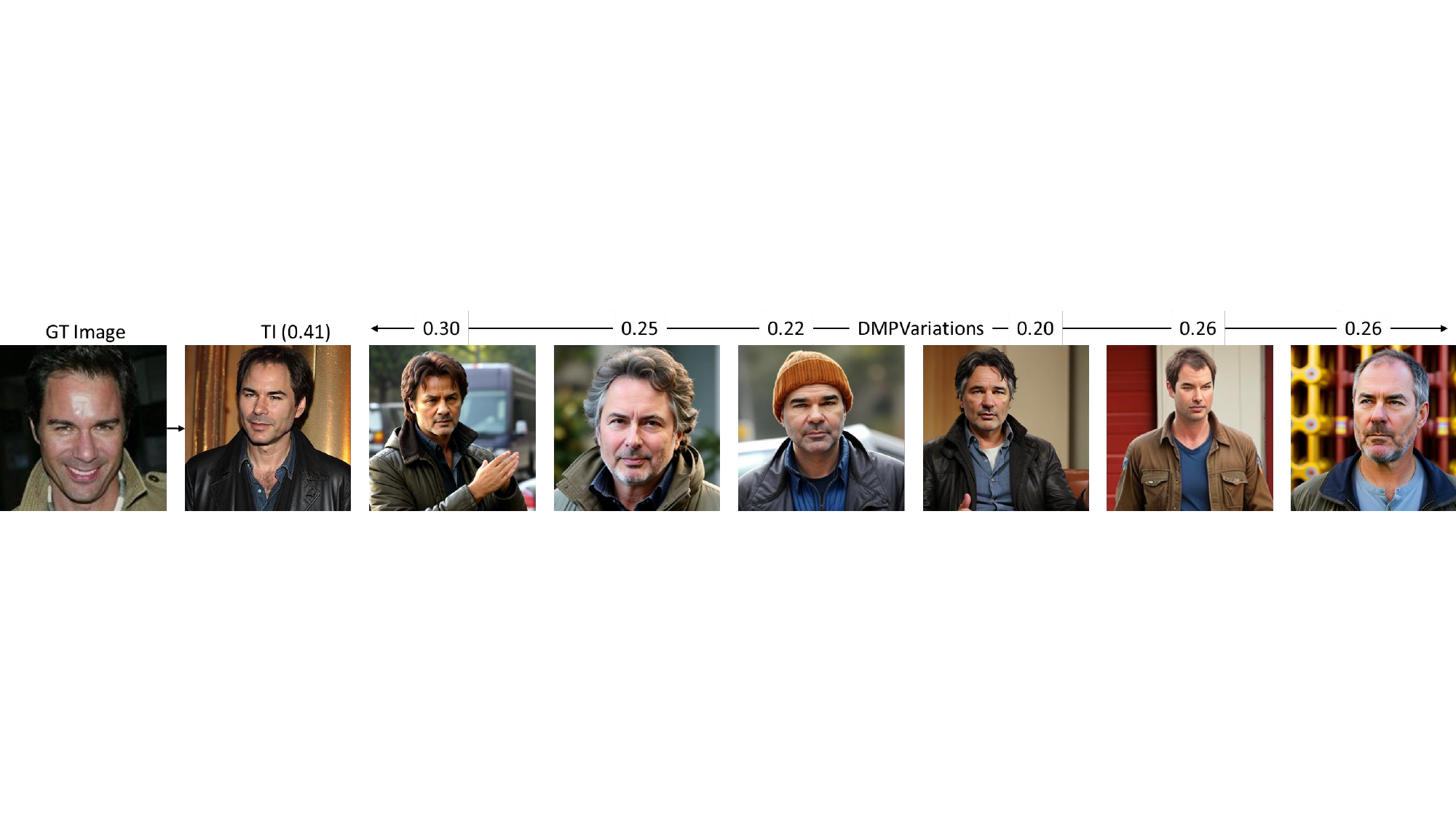}
\caption{\label{fig:teaser_variations} \textbf{DMP Variations of Textual Inversion (TI) Prompts}: The leftmost image is the real groundtruth, the second is generated by TI, and the rest are DMP variations (each image represents a new identity) conditioned on the TI embedding. All images are generated with a fixed seed to the stable diffusion model. The FaceID similarity to the groundtruth image is listed on top of each image. Lower is better for diverse variations.}
\end{figure*}

\subsection{DMP for Variations}\label{app:dmpvariation}

\textbf{Generalization.} 
Figure~\ref{fig:mdpvariations-generalization} shows images synthesized for the variation prompts generated by DMPVariation model, for a common seed. Note that the variation prompts are conditioned by the Textual inversion embedding for the target concept, which is illustrated by a GT image in the figure. The figure shows that DMPVariation produces successful variation prompts for general objects as diverse as birds and statues {\it despite being trained only on CelebA-face identities\/}.

\textbf{Qualitative Results.} 
Figure~\ref{fig:supp-mdpvariations-generalization} shows additional qualitative results for variations generated by DMPVariation model.

Figure~\ref{fig:teaser_variations} shows the synthesized prompts that produce variations of a person, conditioned on the textual inversion embedding of the person. 
Note that, while all images are synthesized with the same SDXL seed, they exhibit a diversity of background scenes, hair patterns, clothing, etc. This is an additional benefit of the natural prompt variability of DMP-based personalization: to increase the diversity of the synthesized images. 

Figure~\ref{fig:supp_comparison_mdp_ti_images} presents qualitative results comparing DMP and TI in generating personalized subject images across diverse contexts. The variation prompts produced by DMP demonstrate greater robustness and generalization, effectively adapting to different contexts while preserving subject identity. In contrast, Textual Inversion tends to overfit to the subject, leading to poor generalization. For instance, it completely fails to generate correct images for the Buddha statue and succeeds in only a single scenario for the duck and dog subjects. Table~\ref{tab:hps_topk} shows the detailed HPSv2 scores for the results presented in Table~\ref{tab:dmpvariation_faceid} of the paper.

\textbf{Identity Composition.}  
Figure \ref{fig:compose_mdpvariation_scales} demonstrates results of combining two celebrities, displayed on the left, using DMPVariation  model to synthesize identity prompts with Eq. 8. The synthesized identity clearly incorporates prominent features from both original faces, such as the nose and chin, resulting in a cohesive blend of attributes.

\noindent\textbf{Subject Composition.}  
Figure \ref{fig:qual_syn_id_compose} illustrates the ability of DMPVariation to generate prompts for combined concepts. In this examples, the prompts elicit the downstream model to produce images that combine the two subjects displayed on the left. This is done by sampling subject prompts using the DMPVariation model and Eq. (\ref{eq:7}). These are then fed to the SDXL model to produce the images on the right, for increasing guidance scale $\eta$. The synthesized subject clearly incorporates prominent features from both, such as the nose and chin, resulting in a cohesive blend of attributes.

\begin{figure*}[t]
\centering
\includegraphics[width=\textwidth, trim= 100 0 100 0, clip]{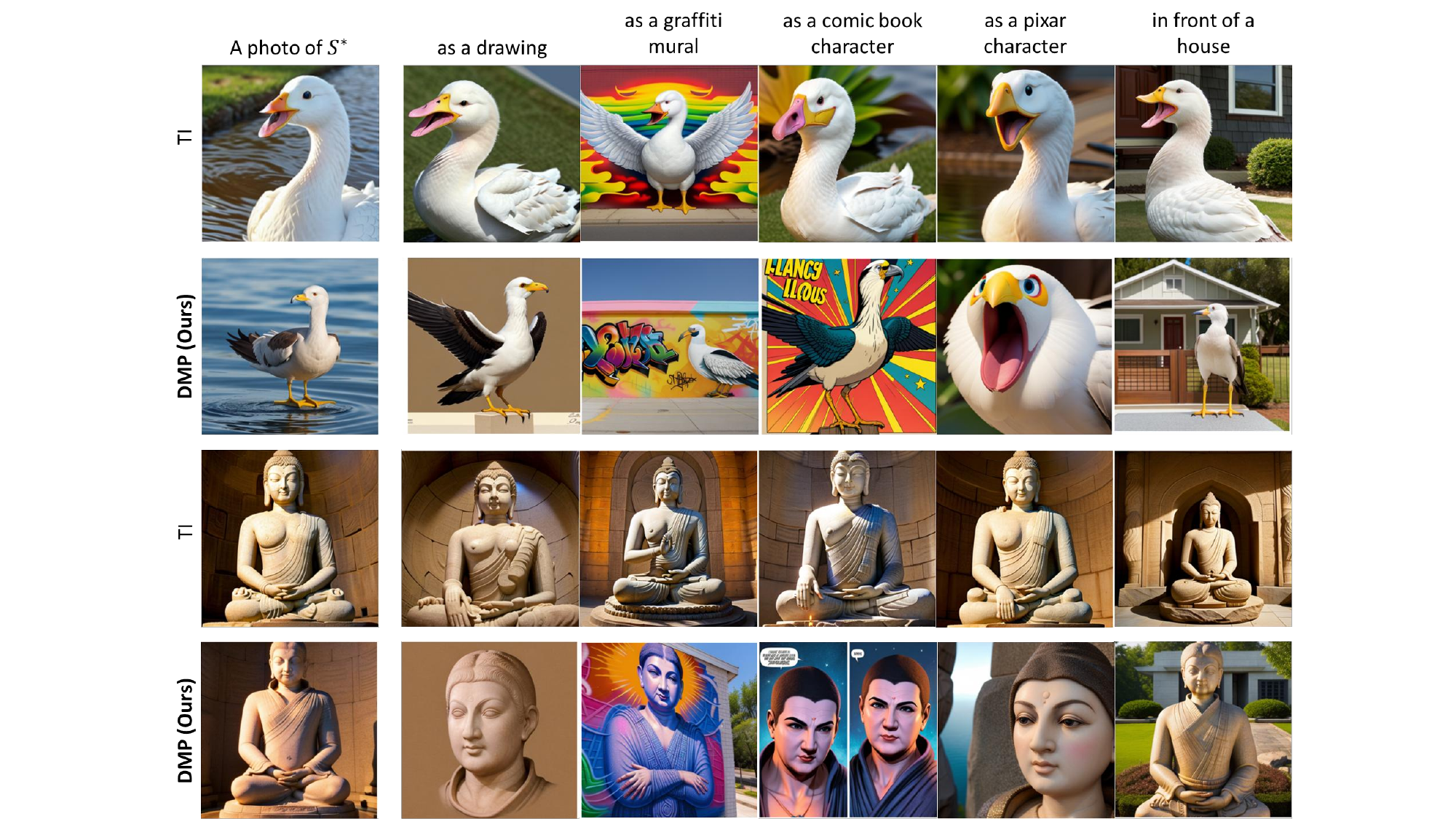}
\caption{\label{fig:supp_comparison_mdp_ti_images} \textbf{Qualitative results} of images generated from prompts synthesized by DMPVariation. Our Diffusion Meta-Prompts are more robust and less overfitted than the baseline textual inversion which fails to generalize. }
\end{figure*}

\begin{figure*}[t]
\centering
\includegraphics[width=\textwidth, trim= 20 232 0 200, clip]{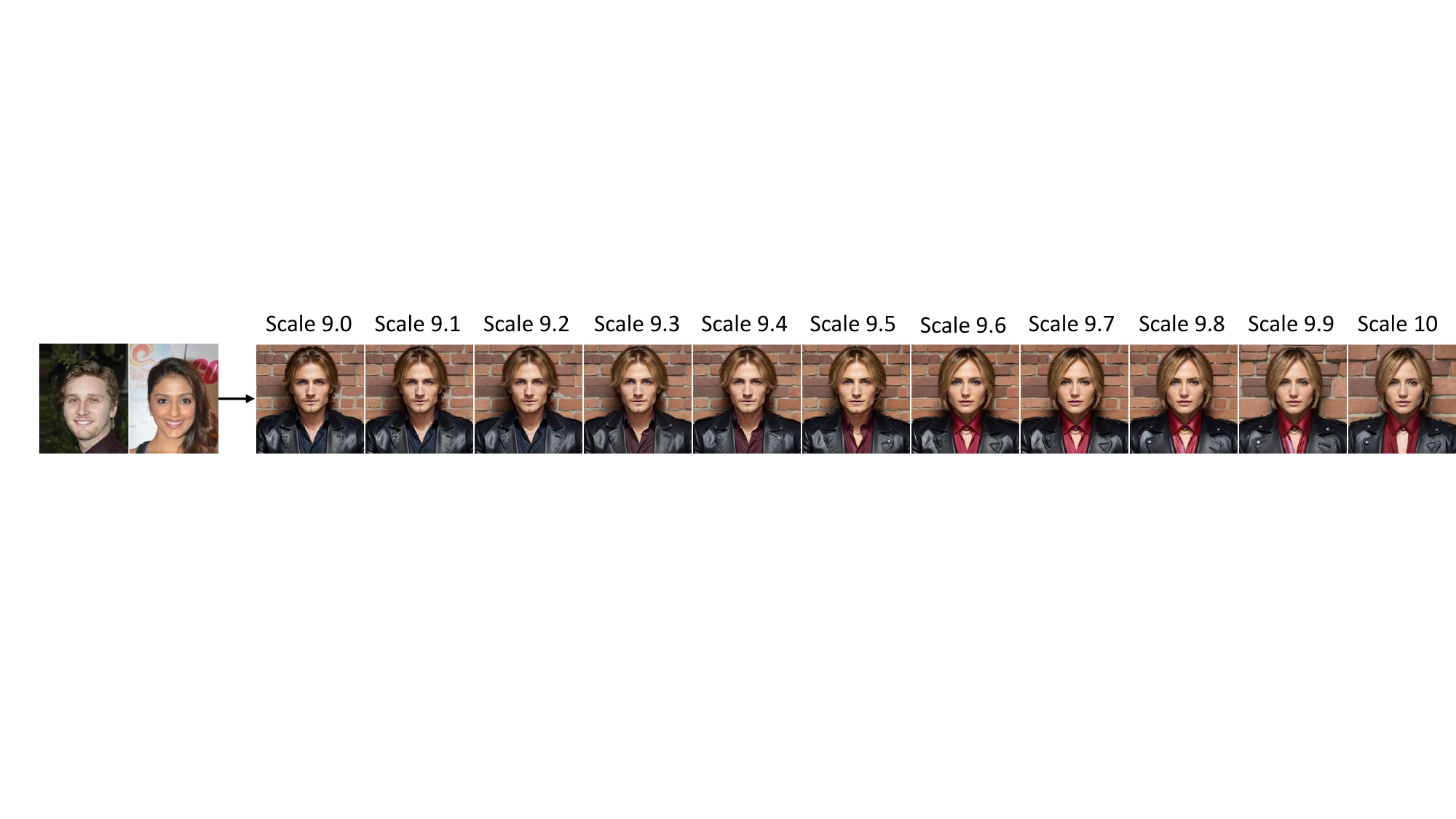}
\caption{\label{fig:compose_mdpvariation_scales} \textbf{Identity composition:} images generated with DMPVariation  model prompts (right)  for increasing guidance scales from 9 to 10 for the composition of the two identities shown on  the left.}
\end{figure*}

\begin{figure*}[t]
\centering
\begin{minipage}[t]{0.44\textwidth}
    \centering
\includegraphics[width=\linewidth, trim= 120 260 120 80, clip]{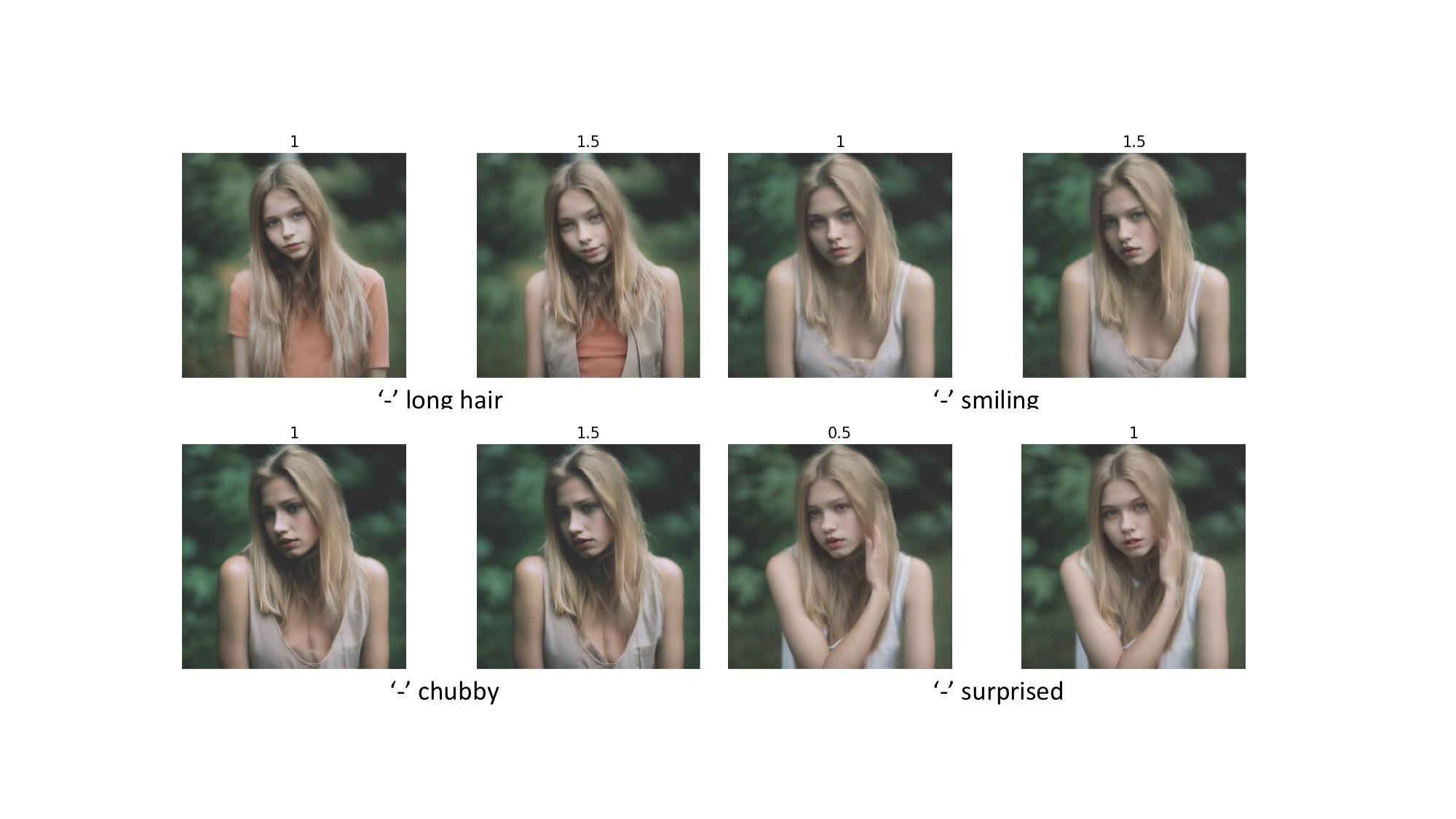}
\caption{\label{fig:qual_syn_neg_sliders} \textbf{Negative Prompt Sliders} images synthesized with sliders sampled by DMPSlider when  negatively prompted for different concepts. The prompt used for the SD-XL model is \textit{"A photo of a girl"}.}
\end{minipage}
\hfill
\begin{minipage}[t]{0.54\textwidth}
    \centering
    \includegraphics[width=\linewidth, trim=100 180 100 180, clip]{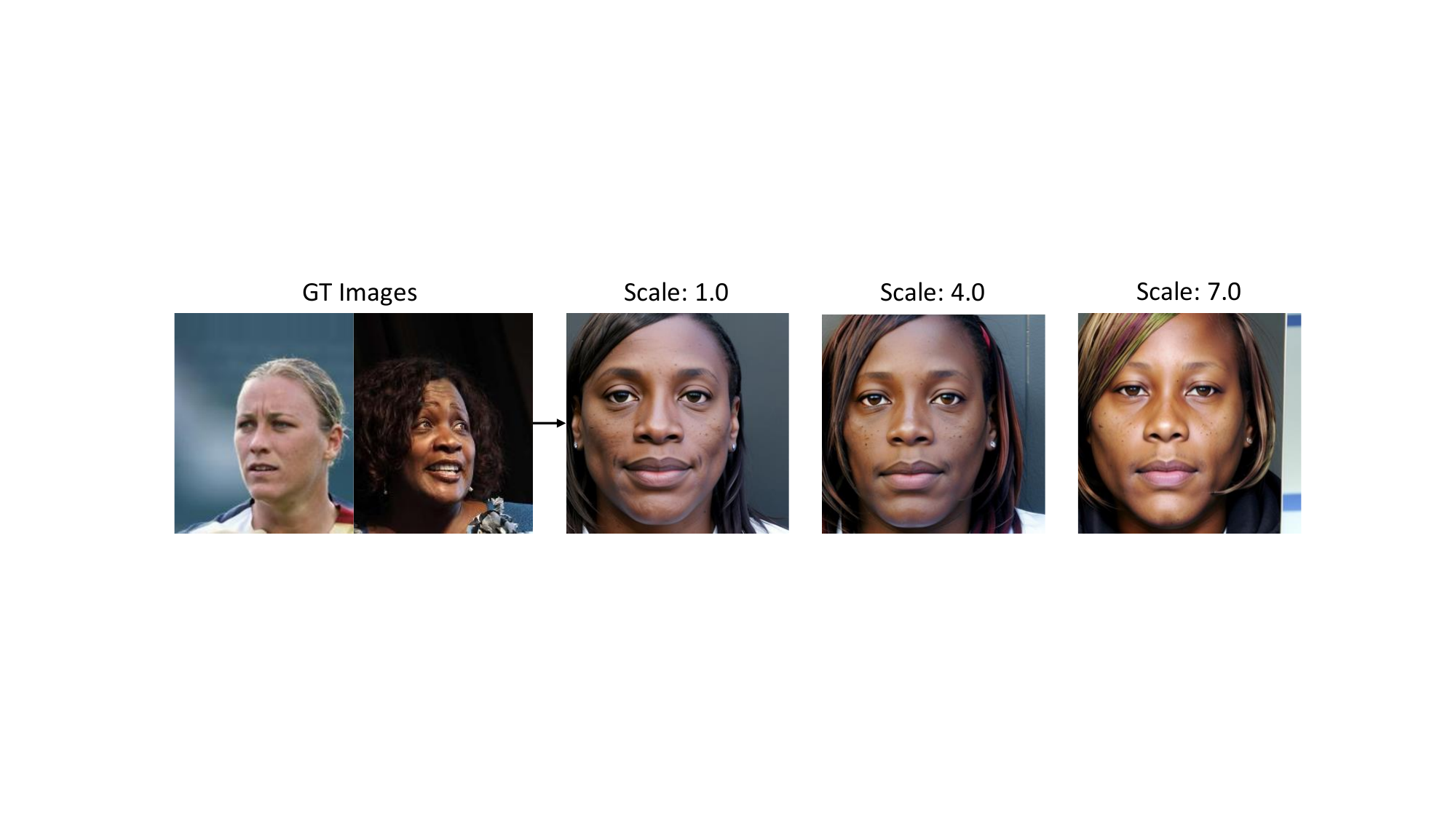}
    \caption{\textbf{Subject composition:} images generated with DMPVariation model prompts (right) for the composition of the two identities shown on the left with the guidance scale denoted at the top. See Fig.~\ref{fig:compose_mdpvariation_scales} for additional results at finer scale.}
    \label{fig:qual_syn_id_compose}
\end{minipage}

\end{figure*}

\textbf{Interpreting Variation Prompts.}
We computed the top-5 nearest-neighbor tokens in the CLIP vocabulary for the prompts sampled by DMPVariation model conditioned on a Textual Inversion embedding of a subject. For a random subject not in the training dataset, the TI prompt embedding returns \texttt{<w>karanjohar</w>}, \texttt{<w>conclude</w>}, \texttt{<w>leaked</w>}, \texttt{<w>prohibition</w>}, \texttt{<w>vijaysethu</w>}. The DMPVariation prompt returns \texttt{<w>karanjohar</w>}, \texttt{<w>pandoramusic</w>}, \texttt{episo}, \texttt{<w>leaked</w>}, \texttt{<w>refriger</w>}. It overlaps with two words out of five showing that the prompt is indeed a variation of the conditioned Textual Inversion prompt. 
Extending this analysis over 66 unseen identities, we found an average of \textbf{2.7 common words in the top-5 and 5.6 in the top-10} nearest-neighbor tokens, demonstrating that the DMP model effectively generalizes while retaining some of the subject-specific characteristics.

\begin{table*}[h]
\centering
\caption{\textbf{Prompts used for evaluating generalization.} Prompts were designed to explore style and concept variations of the subject \texttt{sks} and borrowed from DreamBooth~\citep{ruiz2022dreambooth}.}
\label{tab:prompt_variations_eval}
\begin{adjustbox}{max width=\textwidth}
\begin{tabular}{llll}
\toprule
\multicolumn{4}{c}{\textbf{Prompts}} \\
\midrule
a sks on the beach              & sks flower arrangement         & sks stained glass window    & sks as a witcher \\
A photo of two sks on a boat    & sks Funko Pop                  & sks latte art               & A cubism painting of sks person \\
Manga drawing of sks            & Pointillism painting of sks    & Ukiyo-e painting of sks     & A sks as a knight in plate armor \\
sks as a knight in plate        & Banksy art of sks              & sks piloting a fighter jet  & Greek sculpture of sks \\
Fauvism painting of sks         & Cave mural depicting sks       & sks by Andy Warhol          & sks in the style of Archer \\
Colorful graffiti of sks        & sks as Ziggy Stardust          & sks in a comic book         & Watercolor painting of sks \\
a sand sculpture of sks         & sks in a Santa hat             & sks as a wizard             & a photo of sks\\
\bottomrule
\end{tabular}
\end{adjustbox}
\end{table*}

\subsection{Ablation Studies}\label{sec:ablation_appendix}

The following section lists the setup for the ablation study on sensitivity of DMP to class-name conditions in Table~\ref{tab:replaced-names-steering-comparison}. Table~\ref{tab:class-name-replacement-conditions} shows the corresponding class names provided to DMP at each replacement level.

\paragraph{Original 0\% Condition.}

The original Caltech101 condition is:

\begin{quote}
\texttt{accordion; bonsai; chandelier; euphonium; menorah;}\\
\texttt{saxophone; stapler; yin\_yang; anchor; camera}
\end{quote}

It contains objects, instruments, and symbols drawn from Caltech101.

\paragraph{Replacement Pool.}

The replacement names are different ImageNet classes:

\begin{quote}
\texttt{brambling; goldfinch; house finch; junco;}\\
\texttt{indigo bunting; American robin; bulbul; jay}
\end{quote}

These names are inserted cumulatively and in a fixed order. This controls
which names change and ensures that the only systematic variable is the
replacement percentage.


\begin{table}[htbp]
\centering
\caption{Class-name conditions supplied to DMP at each replacement level.}
\label{tab:class-name-replacement-conditions}
\small
\begin{tabularx}{\textwidth}{
    >{\centering\arraybackslash}p{0.13\textwidth}
    >{\raggedright\arraybackslash}X
}
\toprule
\textbf{Replacement}
&
\textbf{Class-name condition supplied to DMP} \\
\midrule
0\%
&
accordion, bonsai, chandelier, euphonium, menorah, saxophone,
stapler, yin\_yang, anchor, camera
\\[0.5em]

20\%
&
brambling, goldfinch, chandelier, euphonium, menorah, saxophone,
stapler, yin\_yang, anchor, camera
\\[0.5em]

40\%
&
brambling, goldfinch, house finch, junco, menorah, saxophone,
stapler, yin\_yang, anchor, camera
\\[0.5em]

60\%
&
brambling, goldfinch, house finch, junco, indigo bunting,
American robin, stapler, yin\_yang, anchor, camera
\\[0.5em]

80\%
&
brambling, goldfinch, house finch, junco, indigo bunting,
American robin, bulbul, jay, anchor, camera
\\
\bottomrule
\end{tabularx}
\end{table}

\subsubsection{Ablation Study on Alternative methods for Meta-Prompting.}\label{app:ablation-autoregressive}
The first two rows of Table \ref{tab:supp_copromptcomparison} show the ablation study of using Transformer and GPT-2 for modeling the distribution of CoOp prompts across the 11 datasets.  We include the Transformer as a non-generative baseline and GPT-2 as an autoregressive generative model. For the Transformer, we use a pretrained RoBERTa-Base model, which is finetuned using LoRA to predict prompt embeddings from text conditions. A linear layer is added on top of the final layer to produce output embeddings of the target dimension (2048 for CoOp/CoPrompt), and training is done with MSE loss. The table shows that transformer tends to overfit to the base classes as it is able to closely match the performance of the baseline CoOp on the base classes while under-performing on the novel classes leading to a decrease in the overall performance. For GPT-2, we discretize the continuous prompt embeddings by identifying their top-5 nearest tokens in the CLIP embedding space. These tokens are then converted back to text and used to finetune a pretrained GPT-2 model with LoRA, trained to predict the top-5 tokens using standard cross-entropy loss. The model is conditioned on the same text inputs as used in the diffusion counterpart. Although each text condition has 40 associated prompts from different initializations (as described in Section~\ref{sec:coop-coprompt}), the resulting tokens after discretization are almost identical across seeds. This indicates that the variation captured in the continuous embedding space is lost during the discretization process-a known limitation, as discretization inherently reduces information. The table reflects this observation and shows that GPT-2 based modeling is inferior to diffusion since diffusion is much better for modeling continuous distribution of prompts. Moreover, unlike autoregressive methods, diffusion offers multiple benefits such as classifier-free guidance, negative prompting, inversion, editing and composition that are challenging or infeasible with models like GPT-2. These results show that modeling the prompt distribution with diffusion is more effective and flexible than using autoregressive methods.

\subsubsection{Ablation on the number of prompts per concept}
We already include the ablation study when using only 20 prompts per concept for identity synthesis in Table~\ref{tab:comp_ids}. Here, we include  an additional ablation study on the number of prompts for classification tasks in Table~\ref{tab:ablationpromptnumber}. It shows that DMP works well even with as few as 5 or 10 prompts per task/concept. For 5 prompts per class, DMPCoOp obtains +1.2\% gain on new classes and it increases as $n$ increases (+1.6/+2.5/+3.0 for $n=$10/20/40 prompts respectively). The results are consistent with our theoretical guarantee discussed in Appendix A.1 where higher $n$ corresponds to lower downstream risk and better generalization. 
\begin{figure*}[h]
\centering
\scriptsize
\setlength{\tabcolsep}{2pt}
\captionof{table}{Ablation study with different number of prompts per concept. Base2new generalization per dataset: performance for All, Base, and New classes, and HM (Harmonic Mean). The results reported are the average over three seed runs.}
\label{tab:ablationpromptnumber}
\begin{adjustbox}{max width=\textwidth}
\begin{tabular}{@{}l|cccc|cccc|cccc|cccc@{}}
\toprule
 & \multicolumn{4}{c|}{\textbf{(a) Average}} & \multicolumn{4}{c|}{\textbf{(b) ImageNet}} & \multicolumn{4}{c|}{\textbf{(c) Caltech101}} & \multicolumn{4}{c}{\textbf{(d) OxfordPets}} \\ 
\cmidrule{2-17}
\textbf{Method} & All & Base & New & HM & All & Base & New & HM & All & Base & New & HM & All & Base & New & HM \\ 
\midrule
CoOp~\citep{zhou2022coop} & 67.1 & 80.9 & 68.6 & 74.2 & 68.0 & 76.2 & 66.8 & 71.2 & 93.8 & 98.1 & 93.0 & 95.5 & 90.6 & 95.2 & 96.5 & 95.8 \\
DMPCoOp & \textbf{70.1} & 80.3 & \textbf{73.4} & 76.5 & 68.7 & 75.4 & \textbf{68.8} & 72.0 & \textbf{94.8} & \textbf{98.3} & \textbf{95.3} & \textbf{96.8} & 91.7 & 95.4 & 97.3 & 96.3 \\
DMPCoOp (5 prompts) & 69.1 & 81.4 & 71.6 & 75.9 & 68.9 & 76.6 & 68.0 & 72.0 & \textbf{94.8} & 97.9 & 95.2 & 96.5 & 91.1 & 95.4 & 97.1 & 96.2 \\
DMPCoOp (10 prompts) & 69.3 & \textbf{81.8} & 72.0 & 76.3 & 69.0 & \textbf{76.6} & 68.5 & 72.3 & 94.5 & 98.2 & 95.2 & 96.7 & 92.9 & \textbf{96.0} & \textbf{97.5} & \textbf{96.7} \\
DMPCoOp (20 prompts) & 69.1 & 81.2 & 72.9 & \textbf{76.6} & \textbf{69.3} & 76.4 & \textbf{68.8} & \textbf{72.4} & 94.3 & 98.2 & 95.1 & 96.6 & \textbf{93.0} & \textbf{96.0} & \textbf{97.5} & \textbf{96.7} \\
\midrule

 & \multicolumn{4}{c|}{\textbf{(e) StanfordCars}} & \multicolumn{4}{c|}{\textbf{(f) Flowers102}} & \multicolumn{4}{c|}{\textbf{(g) Food101}} & \multicolumn{4}{c}{\textbf{(h) FGVC Aircraft}} \\
\cmidrule{2-17}
\textbf{Method} & All & Base & New & HM & All & Base & New & HM & All & Base & New & HM & All & Base & New & HM \\ 
\midrule
CoOp~\citep{zhou2022coop} & 68.3 & \textbf{75.8} & 68.2 & 71.8 & 72.2 & 96.2 & 66.7 & 78.8 & 83.8 & 89.8 & 88.0 & 88.9 & 25.9 & 37.0 & 30.4 & 33.4 \\
DMPCoOp & \textbf{69.3} & 74.5 & \textbf{72.0} & \textbf{73.2} & \textbf{75.6} & 93.4 & \textbf{72.2} & 81.4 & \textbf{86.4} & 89.6 & 89.9 & 89.7 & 26.3 & 35.7 & 31.6 & 33.5 \\
DMPCoOp (5 prompts) & 68.8 & 75.2 & 70.8 & 72.9 & 74.5 & 96.0 & 71.1 & 81.7 & 84.5 & 89.5 & 89.1 & 89.3 & \textbf{27.9} & \textbf{37.5} & \textbf{33.3} & \textbf{35.3} \\
DMPCoOp (10 prompts) & 68.5 & \textbf{75.8} & 69.9 & 72.7 & 73.7 & 95.7 & 69.8 & 80.7 & 85.5 & 89.8 & \textbf{91.2} & \textbf{90.5} & 27.8 & 37.0 & 32.0 & 34.3 \\
DMPCoOp (20 prompts) & 68.7 & 75.0 & 70.8 & 72.8 & 75.9 & \textbf{96.4} & 72.1 & \textbf{82.5} & 85.4 & \textbf{90.0} & 91.1 & \textbf{90.5} & \textbf{27.9} & 37.3 & 31.6 & 34.2 \\
\midrule

 & \multicolumn{4}{c|}{\textbf{(i) SUN397}} & \multicolumn{4}{c|}{\textbf{(j) DTD}} & \multicolumn{4}{c|}{\textbf{(k) EuroSAT}} & \multicolumn{4}{c}{\textbf{(l) UCF101}} \\  
\cmidrule{2-17}
\textbf{Method} & All & Base & New & HM & All & Base & New & HM & All & Base & New & HM & All & Base & New & HM \\ 
\midrule
CoOp~\citep{zhou2022coop} & 67.2 & \textbf{80.7} & 71.8 & 76.0 & 51.0 & \textbf{78.5} & 50.5 & 61.5 & 49.7 & 78.3 & 58.3 & 66.8 & 68.1 & 84.0 & 64.2 & 72.8 \\
DMPCoOp & 67.3 & 80.4 & 72.9 & 76.5 & \textbf{53.7} & 75.1 & \textbf{55.8} & \textbf{64.0} & \textbf{65.9} & 85.6 & \textbf{80.7} & \textbf{83.1} & 71.7 & 79.6 & 70.7 & 74.9 \\
DMPCoOp (5 prompts) & 66.8 & 79.8 & 71.7 & 75.5 & 52.1 & 76.4 & 54.3 & 63.5 & 59.1 & 85.7 & 67.2 & 75.3 & 71.1 & \textbf{85.6} & 69.6 & 76.8 \\
DMPCoOp (10 prompts) & \textbf{67.6} & 80.3 & \textbf{73.4} & \textbf{76.7} & 52.1 & 75.6 & 53.4 & 62.6 & 60.1 & \textbf{89.0} & 71.7 & 79.4 & 70.7 & 85.3 & 69.2 & 76.4 \\
DMPCoOp (20 prompts) & \textbf{67.6} & 80.4 & 73.3 & \textbf{76.7} & 51.4 & 75.6 & 54.1 & 63.1 & 54.6 & 84.8 & 74.4 & 79.3 & \textbf{71.8} & 82.9 & \textbf{72.6} & \textbf{77.4} \\

\bottomrule
\end{tabular}
\end{adjustbox}
\end{figure*}

\subsubsection{Ablation on the text inputs to DMPCoOp model}\label{app:ablationtextinputs}

We conducted an ablation study using the dataset names as the prompt or text condition to the DMPCoOp model instead of the classnames. Table \ref{table:ablation_coopcomparison} shows that the performance of the model using dataset names is better than the baseline CoOp prompts by 0.3\% on average while it is 0.7\% lower as compared to the model using classnames. This shows that using class-specific names generates prompts with robust generalization than just using a single dataset name as the text condition.

\subsubsection{Ablation on the quality of training data}

To investigate this, we assessed the impact of the quality of training data by training DMP with noisy prompts.  We apply additive Gaussian noise ($n_i$) with a standard deviation of $\sigma = 0.01$ to the $i^\text{th}$ prompt $x(c)_i$ as 
\[
x(c)_i = (1-\sigma)x(c)_i + \sigma n_i.
\] 
The noisy prompt is applied to $x\%$ of the training dataset, where $x \in \{10, 40\}$. 

Table~\ref{tab:noisyprompt} summarizes the experiment where random noise is added to the prompts in the training repository. Meta-prompting is observed to be robust to noise levels ranging from 10\% to 40\% of the training prompts. DMPCoOp trained with 10\% noisy prompts still obtains +1.9\% improvement over the baseline. Further, the average H.M for the DMP model with 10\% noisy prompts is slightly better (76.7 vs 76.5 for DMPCoOp) than the DMPCoOp model trained on clean prompts suggesting that a small amount of noise can also help with generalization. This is similar to image diffusion models, which are also known to be robust to noise added during training.

\begin{figure*}[h]
\centering
\scriptsize
\setlength{\tabcolsep}{2pt}
\captionof{table}{Base2new generalization per dataset: performance for All, Base, and New classes, and HM (Harmonic Mean). The results reported are the average over three seed runs.}
\label{tab:noisyprompt}
\begin{adjustbox}{max width=\textwidth}
\begin{tabular}{@{}l|cccc|cccc|cccc|cccc@{}}
\toprule
 & \multicolumn{4}{c|}{\textbf{(a) Average}} & \multicolumn{4}{c|}{\textbf{(b) ImageNet}} & \multicolumn{4}{c|}{\textbf{(c) Caltech101}} & \multicolumn{4}{c}{\textbf{(d) OxfordPets}} \\ 
\cmidrule{2-17}
\textbf{Method} & All & Base & New & HM & All & Base & New & HM & All & Base & New & HM & All & Base & New & HM \\ 
\midrule
CoOp~\citep{zhou2022coop} & 67.1 & \textbf{80.9} & 68.6 & 74.2 & 68.0 & 76.2 & 66.8 & 71.2 & 93.8 & 98.1 & 93.0 & 95.5 & 90.6 & 95.2 & 96.5 & 95.8 \\ 
\textbf{DMPCoOp} & {\textbf{70.1}} & 80.3 & {\textbf{73.4}} & {76.5} & 68.7 & 75.4 & 68.8 & 72.0 & {\textbf{94.8}} & {\textbf{98.3}} & {\textbf{95.3}} & {\textbf{96.8}} & 91.7 & 95.4 & 97.3 & 96.3 \\
\textbf{DMPCoOp (10\% noise)} & 69.6 & 81.7 & 72.3 & \textbf{76.7} & {\textbf{69.2}} & 76.5 & {\textbf{69.0}} & {\textbf{72.6}} & 94.0 & 98.1 & 94.2 & 96.1 & 91.5 & 95.0 & 96.9 & 95.9 \\
\textbf{DMPCoOp (40\% noise)} & 68.9 & 81.2 & 70.3 & 75.0 & 68.5 & {\textbf{76.6}} & 67.3 & 71.6 & 94.1 & 97.9 & 95.2 & 96.5 & {\textbf{92.8}} & {\textbf{95.7}} & {\textbf{97.9}} & {\textbf{96.8}} \\
\midrule
 & \multicolumn{4}{c|}{\textbf{(e) StanfordCars}} & \multicolumn{4}{c|}{\textbf{(f) Flowers102}} & \multicolumn{4}{c|}{\textbf{(g) Food101}} & \multicolumn{4}{c}{\textbf{(h) FGVC Aircraft}} \\
\cmidrule{2-17}
\textbf{Method} & All & Base & New & HM & All & Base & New & HM & All & Base & New & HM & All & Base & New & HM \\ 
\midrule
CoOp~\citep{zhou2022coop} & 68.3 & 75.8 & 68.2 & 71.8 & 72.2 & 96.2 & 66.7 & 78.8 & 83.8 & 89.8 & 88.0 & 88.9 & 25.9 & 37.0 & 30.4 & 33.4 \\
\textbf{DMPCoOp} & \textbf{69.3} & 74.5 & \textbf{72.0} & \textbf{73.2} & 75.6 & 93.4 & 72.2 & 81.4 & \textbf{86.4} & 89.6 & 89.9 & 89.7 & 26.3 & 35.7 & 31.6 & 33.5 \\
\textbf{DMPCoOp (10\% noise)} & 69.1 & \textbf{76.7} & 69.4 & 72.9 & 77.1 & \textbf{96.4} & 72.8 & 83.0 & 85.6 & \textbf{89.9} & \textbf{91.1} & \textbf{90.5} & \textbf{27.7} & \textbf{37.6} & \textbf{33.8} & \textbf{35.6} \\
\textbf{DMPCoOp (40\% noise)} & 68.3 & 76.6 & 69.1 & 72.7 & \textbf{79.4} & 95.9 & \textbf{75.0} & \textbf{84.2} & 84.6 & 89.4 & 90.3 & 89.8 & 26.4 & 36.6 & 32.8 & 34.6 \\

\midrule
 & \multicolumn{4}{c|}{\textbf{(i) SUN397}} & \multicolumn{4}{c|}{\textbf{(j) DTD}} & \multicolumn{4}{c|}{\textbf{(k) EuroSAT}} & \multicolumn{4}{c}{\textbf{(l) UCF101}} \\  
\cmidrule{2-17}
\textbf{Method} & All & Base & New & HM & All & Base & New & HM & All & Base & New & HM & All & Base & New & HM \\ 
\midrule
CoOp~\citep{zhou2022coop} & 67.2 & \textbf{80.7} & 71.8 & 76.0 & 51.0 & 78.5 & 50.5 & 61.5 & 49.7 & 78.3 & 58.3 & 66.8 & 68.1 & \textbf{84.0} & 64.2 & 72.8 \\
\textbf{DMPCoOp} & \textbf{67.3} & 80.4 & \textbf{72.9} & \textbf{76.5} & \textbf{53.7} & 75.1 & \textbf{55.8} & \textbf{64.0} & \textbf{65.9} & \textbf{85.6} & \textbf{80.7} & \textbf{83.1} & 71.7 & 79.6 & 70.7 & 74.9 \\
\textbf{DMPCoOp (10\% noise)} & 66.2 & \textbf{80.7} & 70.5 & 75.3 & 53.5 & \textbf{79.9} & 51.9 & 62.9 & 59.3 & 84.5 & 71.6 & 77.5 & \textbf{72.5} & 83.0 & \textbf{74.0} & \textbf{78.2} \\
\textbf{DMPCoOp (40\% noise)} & 65.9 & 79.7 & 69.8 & 74.4 & 49.8 & 78.4 & 48.6 & 60.0 & 59.6 & 84.0 & 62.8 & 71.9 & 68.4 & 82.3 & 64.1 & 72.1 \\

\bottomrule
\end{tabular}
\end{adjustbox}

\end{figure*}

\subsection{Ablation on DMPMulti}

We trained separate DMP models for identity synthesis and slider synthesis to compare their performance with the DMPMulti model, which was trained to generate both prompt types simultaneously. Table \ref{tab:slidersynthetic_app} presents the results of the DMPSlider model, trained solely for slider prompt generation. Table \ref{tab:comp_ids_app} presents the results of the DMPIdentity model, trained solely for identity prompt generation. The results indicate that its performance is comparable to that of the DMPMulti model, demonstrating that multi-task training in DMPMulti does not compromise its effectiveness.

\subsubsection{Ablation on Identity Composition with Textual Inversion}

For identity composition, we perform an ablation study using Stable Diffusion with Equation \ref{eq:7}, generating new identities by combining prompts such as "a photo of id-1" and "a photo of id-2." Figure \ref{fig:supp_ablate_ti_id_compose} illustrates the results of this process using Textual Inversion prompts with the Stable Diffusion v1.5 model. The generated images are often noisy, distorted, and tend to replicate the input identities rather than effectively merging their attributes. Additionally, running a full forward diffusion process with multiple identities doubles the inference time from 4 to 8 seconds per image. In contrast, our DMPMulti achieves high-quality identity compositions in approximately 5 seconds, introducing only a 1-second overhead compared to standard Stable Diffusion.

\subsection{DMP for Personalization}\label{app:identitymetaprompts}

The DMPVariation model enables the generation of prompt variations conditioned on a given Textual Inversion prompt, making it possible to train a text-to-prompt meta-diffusion model as a replacement for personalization prompt repositories. Users only require access to existing prompt repositories, as the DMPVariation model can directly generate the necessary intermediate embeddings. We choose the top-k ($k=40$) embeddings generated from the DMPVariation model based on the cosine similarity with the available TI prompts. These embeddings can serve as training data for generating personalized prompts based on textual input. Once trained, this approach eliminates the need to search and retrieve prompts from a database, allowing for on-the-fly prompt generation.

\begin{table*}[t]
\centering
\scriptsize
\begin{tabular}{lrrr}
\hline
\textbf{Concept} & \textbf{DMP} & \textbf{TI} & \textbf{Delta} \\
\hline
ettblackteapot & 0.2905 & 0.2778 & +0.0127 \\
dicoo & 0.2837 & 0.2700 & +0.0137 \\
goku & 0.2830 & 0.2800 & +0.0030 \\
degodsheavy & 0.2827 & 0.2754 & +0.0073 \\
spider-gwen & 0.2825 & 0.2810 & +0.0015 \\
johnny-silverhand & 0.2812 & 0.2760 & +0.0052 \\
blue-haired-boy & 0.2812 & 0.2800 & +0.0012 \\
bullvbear & 0.2798 & 0.2683 & +0.0115 \\
black-waifu & 0.2790 & 0.2715 & +0.0075 \\
a-female-hero-from-the-legend-of-mir & 0.2788 & 0.2710 & +0.0078 \\
freddy-fazbear & 0.2786 & 0.2750 & +0.0036 \\
ldrs & 0.2783 & 0.2642 & +0.0141 \\
chonkfrog & 0.2778 & 0.2756 & +0.0022 \\
concept-art & 0.2770 & 0.2715 & +0.0055 \\
degods & 0.2769 & 0.2686 & +0.0083 \\
hanfu-anime-style & 0.2766 & 0.2703 & +0.0063 \\
joemad & 0.2764 & 0.2673 & +0.0091 \\
dog & 0.2761 & 0.2780 & -0.0019 \\
stuffed-penguin-toy & 0.2760 & 0.2756 & +0.0004 \\
fox-purple & 0.2760 & 0.2734 & +0.0026 \\
colossus & 0.2756 & 0.2634 & +0.0122 \\
chungus-poodl-pet & 0.2754 & 0.2637 & +0.0117 \\
anya-forger & 0.2754 & 0.2737 & +0.0017 \\
furrpopasthetic & 0.2751 & 0.2737 & +0.0014 \\
bob-dobbs & 0.2751 & 0.2664 & +0.0087 \\
eddie & 0.2751 & 0.2607 & +0.0144 \\
arthur1 & 0.2750 & 0.2666 & +0.0084 \\
dragonborn & 0.2750 & 0.2556 & +0.0194 \\
kay & 0.2747 & 0.2605 & +0.0142 \\
tesla-bot & 0.2747 & 0.2634 & +0.0113 \\
borderlands & 0.2747 & 0.2637 & +0.0110 \\
lavko & 0.2744 & 0.2588 & +0.0156 \\
gim & 0.2740 & 0.2659 & +0.0081 \\
hubris-oshri & 0.2740 & 0.2659 & +0.0081 \\
tubby & 0.2737 & 0.2673 & +0.0064 \\
finn-token & 0.2737 & 0.2734 & +0.0003 \\
moxxi & 0.2730 & 0.2722 & +0.0008 \\
altvent & 0.2730 & 0.2676 & +0.0054 \\
omlettehaai & 0.2730 & 0.2600 & +0.0130 \\
jos-de-kat & 0.2730 & 0.2651 & +0.0079 \\
loab-style & 0.2730 & 0.2573 & +0.0157 \\
crinos-form-garou & 0.2727 & 0.2693 & +0.0034 \\
blue-zombie & 0.2727 & 0.2693 & +0.0034 \\
cgdonny1 & 0.2725 & 0.2610 & +0.0115 \\
amogus & 0.2725 & 0.2551 & +0.0174 \\
lucky-luke & 0.2725 & 0.2751 & -0.0026 \\
captain-haddock & 0.2725 & 0.2725 & +0.0000 \\
manga-nov-23 & 0.2725 & 0.2551 & +0.0174 \\
button-eyes & 0.2722 & 0.2683 & +0.0039 \\
bruma & 0.2722 & 0.2632 & +0.0090 \\
ouroboros & 0.2720 & 0.2734 & -0.0014 \\
fursona & 0.2720 & 0.2646 & +0.0074 \\
kanovt & 0.2720 & 0.2522 & +0.0198 \\
doc & 0.2717 & 0.2693 & +0.0024 \\
warhammer-40k-drawing-style & 0.2715 & 0.2727 & -0.0012 \\
nard-style & 0.2715 & 0.2698 & +0.0017 \\
baluchitherian & 0.2715 & 0.2617 & +0.0098 \\
insidewhale & 0.2715 & 0.2630 & +0.0085 \\
irasutoya & 0.2715 & 0.2598 & +0.0117 \\
devonm & 0.2712 & 0.2617 & +0.0095 \\
edgerunners-style-v2 & 0.2712 & 0.2660 & +0.0052 \\
nixeu & 0.2710 & 0.2660 & +0.0050 \\
shek-9-12-opening & 0.2710 & 0.2708 & +0.0002 \\
loab-character & 0.2710 & 0.2656 & +0.0054 \\
ldr & 0.2710 & 0.2666 & +0.0044 \\
malika-favre-art-style & 0.2710 & 0.2632 & +0.0078 \\
drive-scorpion-jacket & 0.2710 & 0.2686 & +0.0024 \\
apulian-rooster-v0-1 & 0.2710 & 0.2351 & +0.0359 \\
fftstyle & 0.2708 & 0.2593 & +0.0115 \\
alf & 0.2708 & 0.2664 & +0.0044 \\
wheelchair & 0.2708 & 0.2730 & -0.0022 \\
obama-self-2 & 0.2705 & 0.2705 & +0.0000 \\
dog-chip & 0.2703 & 0.2676 & +0.0027 \\
cheburashka & 0.2700 & 0.2705 & -0.0005 \\
ihylc & 0.2700 & 0.2580 & +0.0120 \\
cat-toy & 0.2695 & 0.2600 & +0.0095 \\
\hline
\textbf{Average} & 0.2731 & 0.2663 & +0.0068 \\
\hline
\end{tabular}
\caption{HPSv2 comparison for 75 random concepts where each concept is evaluated for 6 different prompts.}
\label{tab:hps_topk}
\end{table*}

\begin{figure*}[t]
\centering
\includegraphics[width=0.8\textwidth, trim= 20 100 20 100, clip]{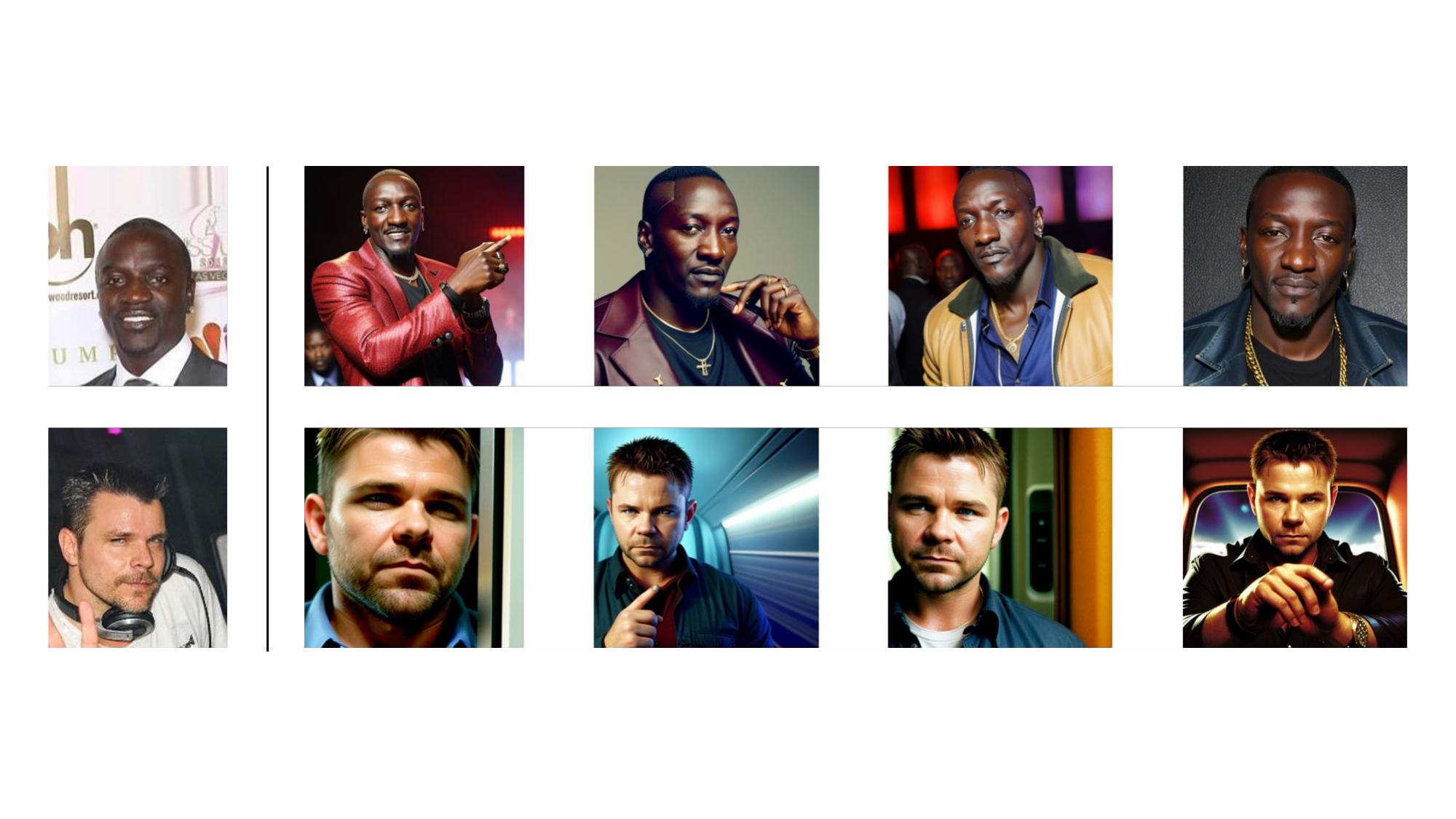}
\includegraphics[width=0.8\textwidth, trim= 20 100 20 100, clip]{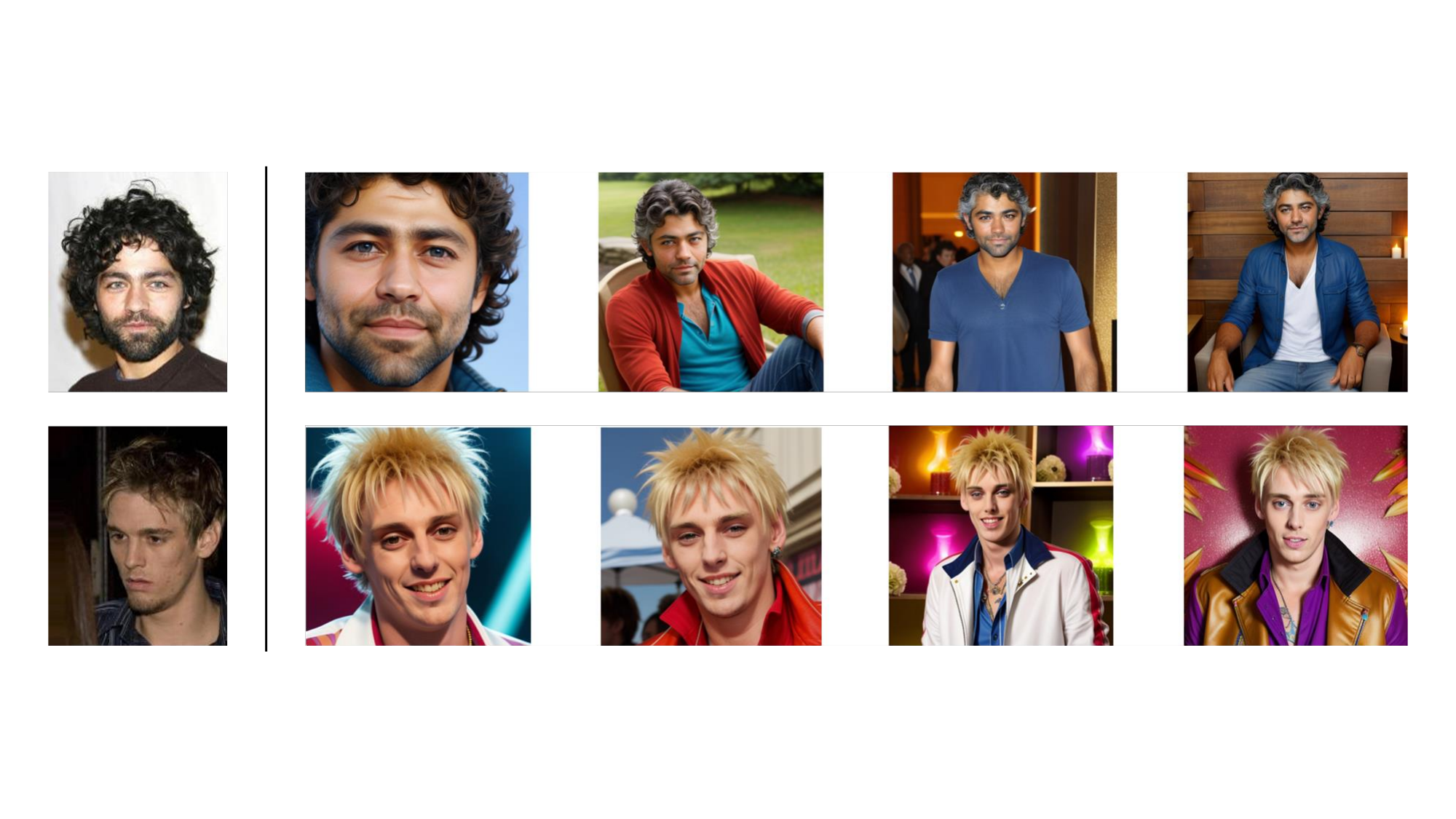}
\includegraphics[width=0.8\textwidth, trim= 20 100 20 100, clip]{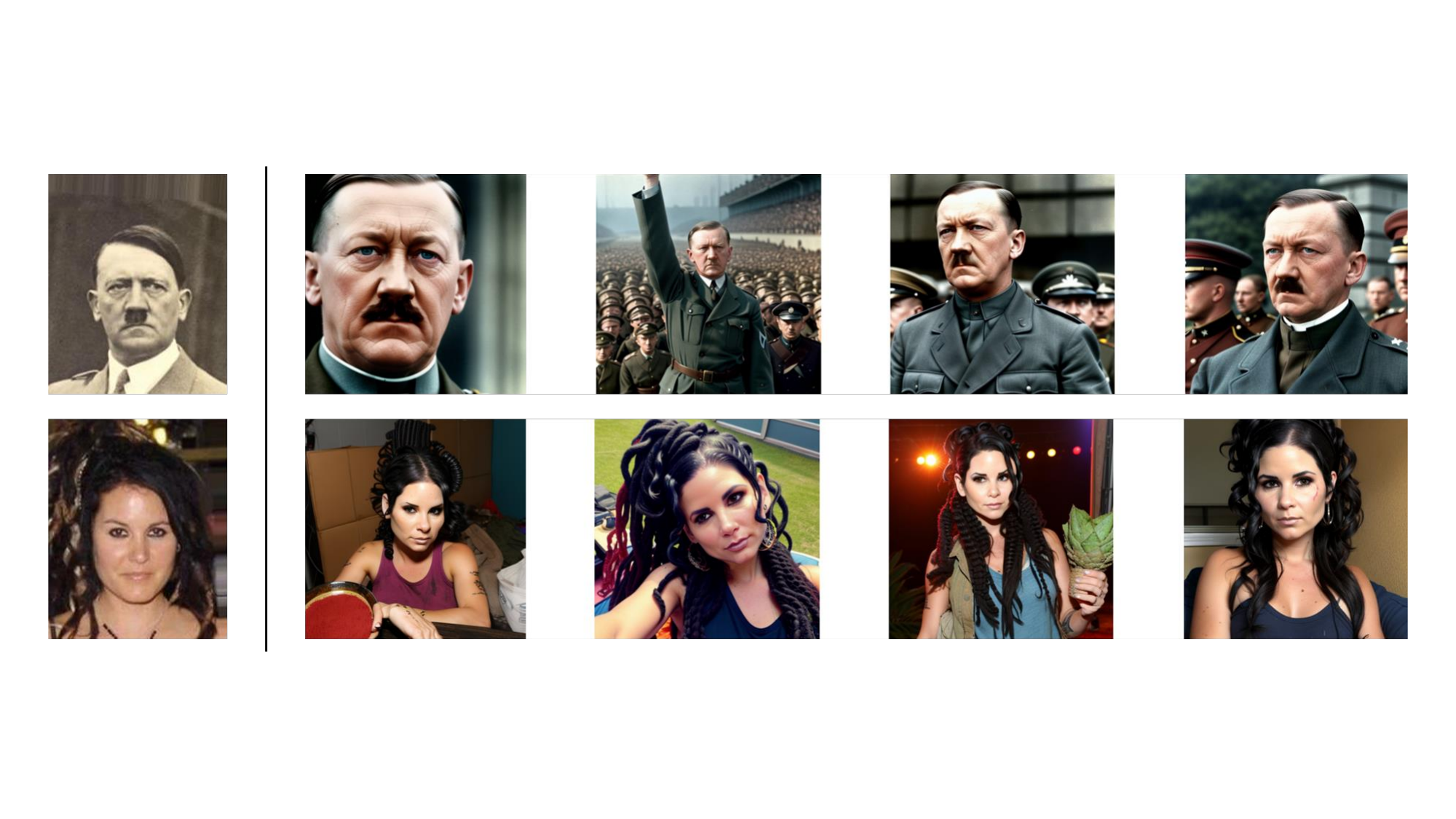}
\includegraphics[width=0.8\textwidth, trim= 20 100 20 100, clip]{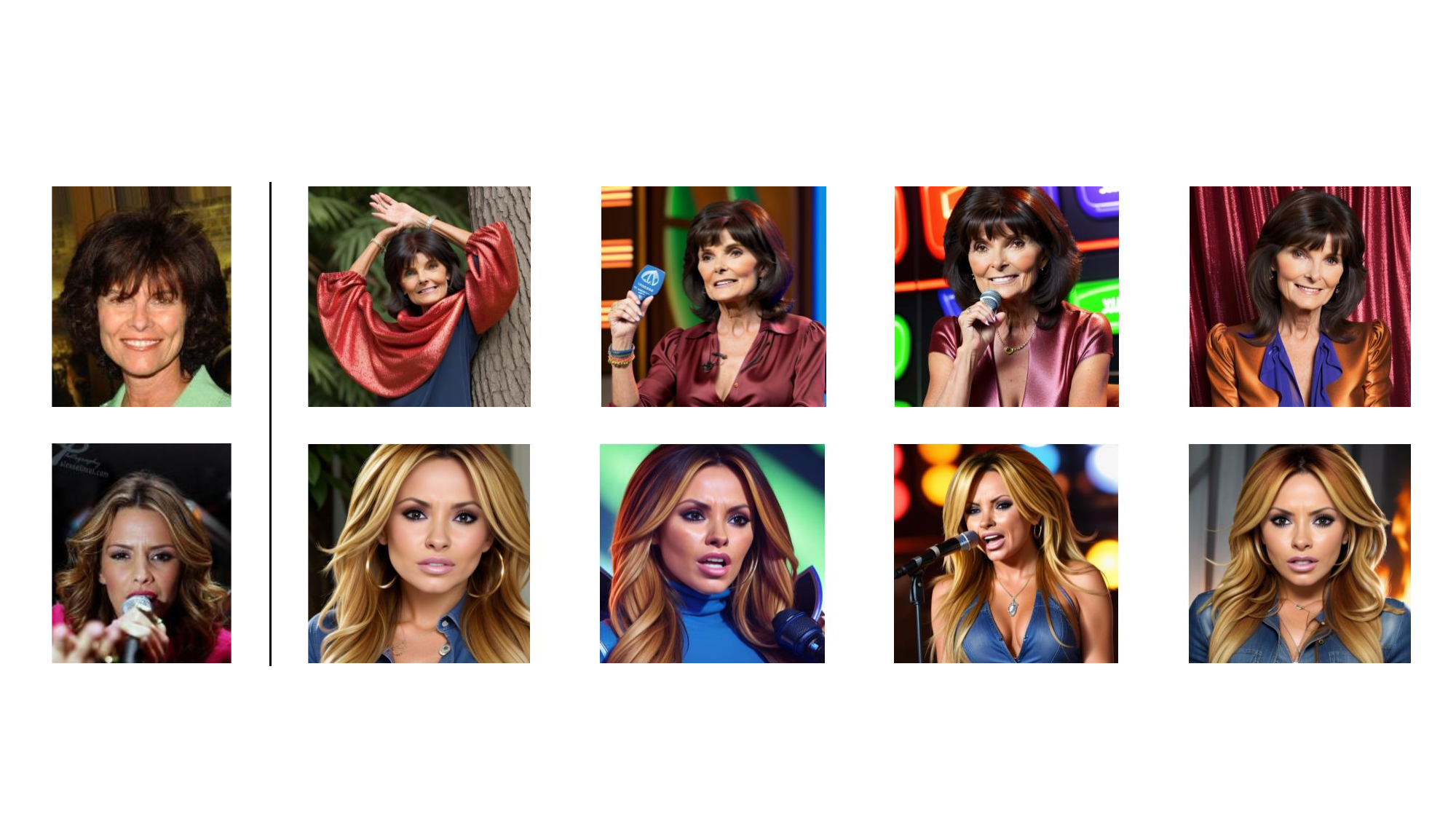}
\caption{\label{fig:supp_qual_syn_id_1} \textbf{Qualitative results} of image synthesis with DMPMulti prompts. Left: groundtruth images. Right: images synthesized by SD with prompts sampled from the DMPMulti  model.}
\end{figure*}

\begin{figure*}[t]
\centering
\begin{minipage}[t]{\columnwidth}
\centering
\includegraphics[width=\linewidth, trim=40 100 40 100, clip]
{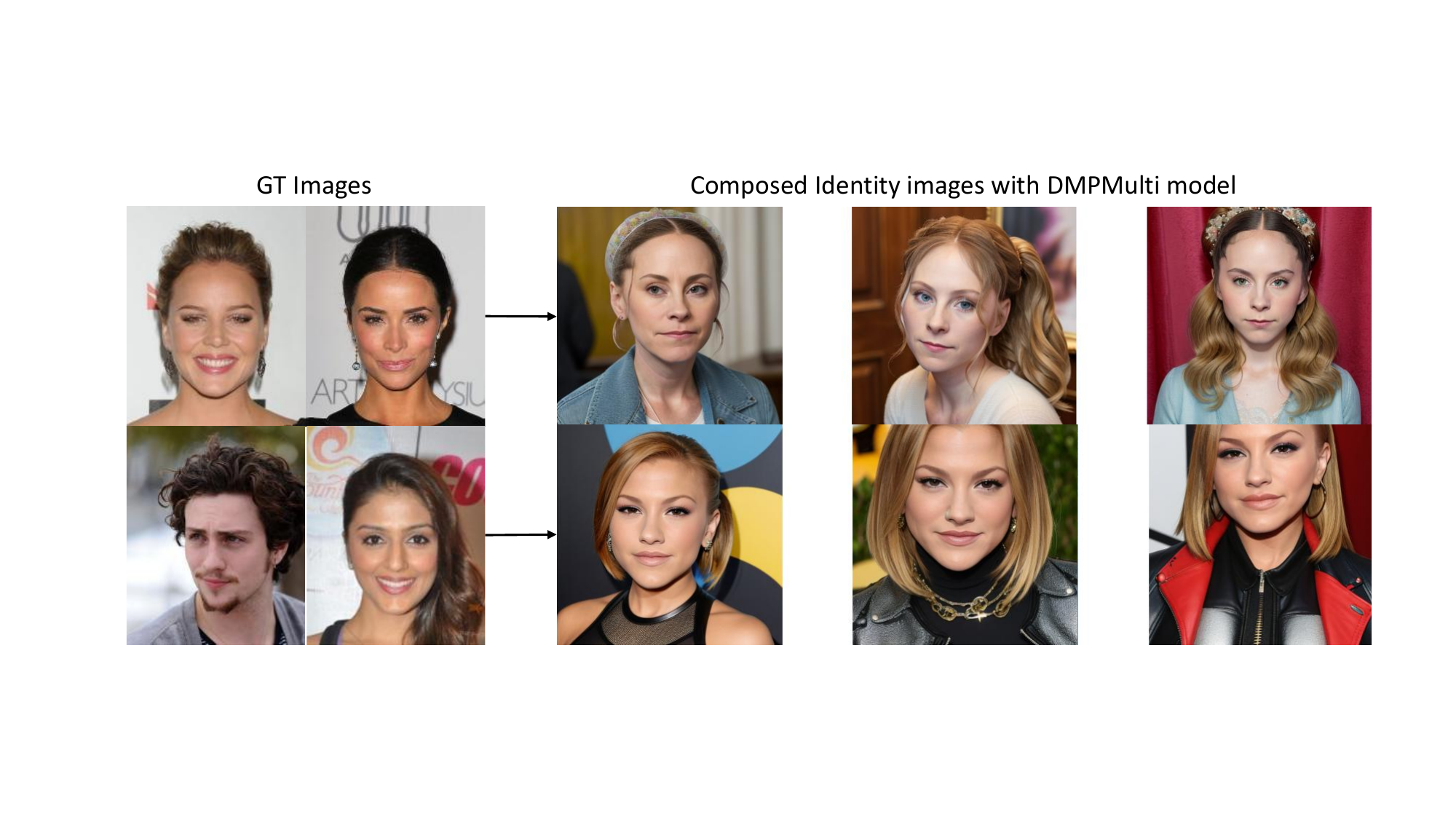}
\caption{\label{fig:supp_qual_syn_id_compose}
\textbf{DMPMulti composition:}
Images generated with DMPMulti model prompts (right) for the composition of the two identities shown on the left.}
\end{minipage}
\hfill
\begin{minipage}[t]{\columnwidth}
\centering
\includegraphics[width=\linewidth, trim=40 100 40 100, clip]
{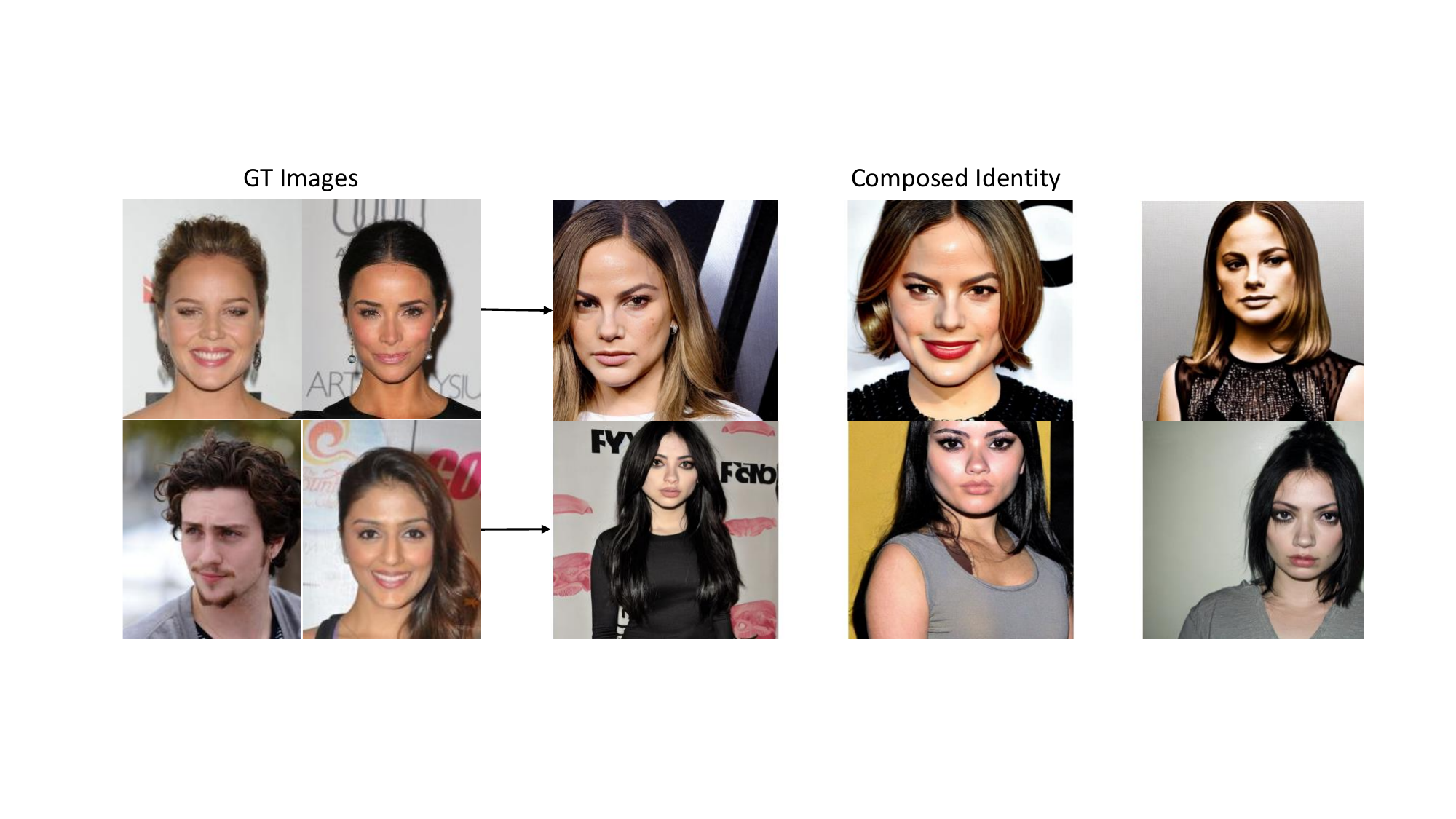}
\caption{\label{fig:supp_qual_syn_id_compose_sd15}
\textbf{DMPMulti composition with SDv1.5 model:}
Prompts synthesized by \textbf{DMP} generalize well to different downstream models sharing the same CLIP text encoder without any re-training.}
\end{minipage}
\end{figure*}

\begin{figure*}[t]
\centering
\includegraphics[width=\textwidth, trim= 0 200 0 200, clip]{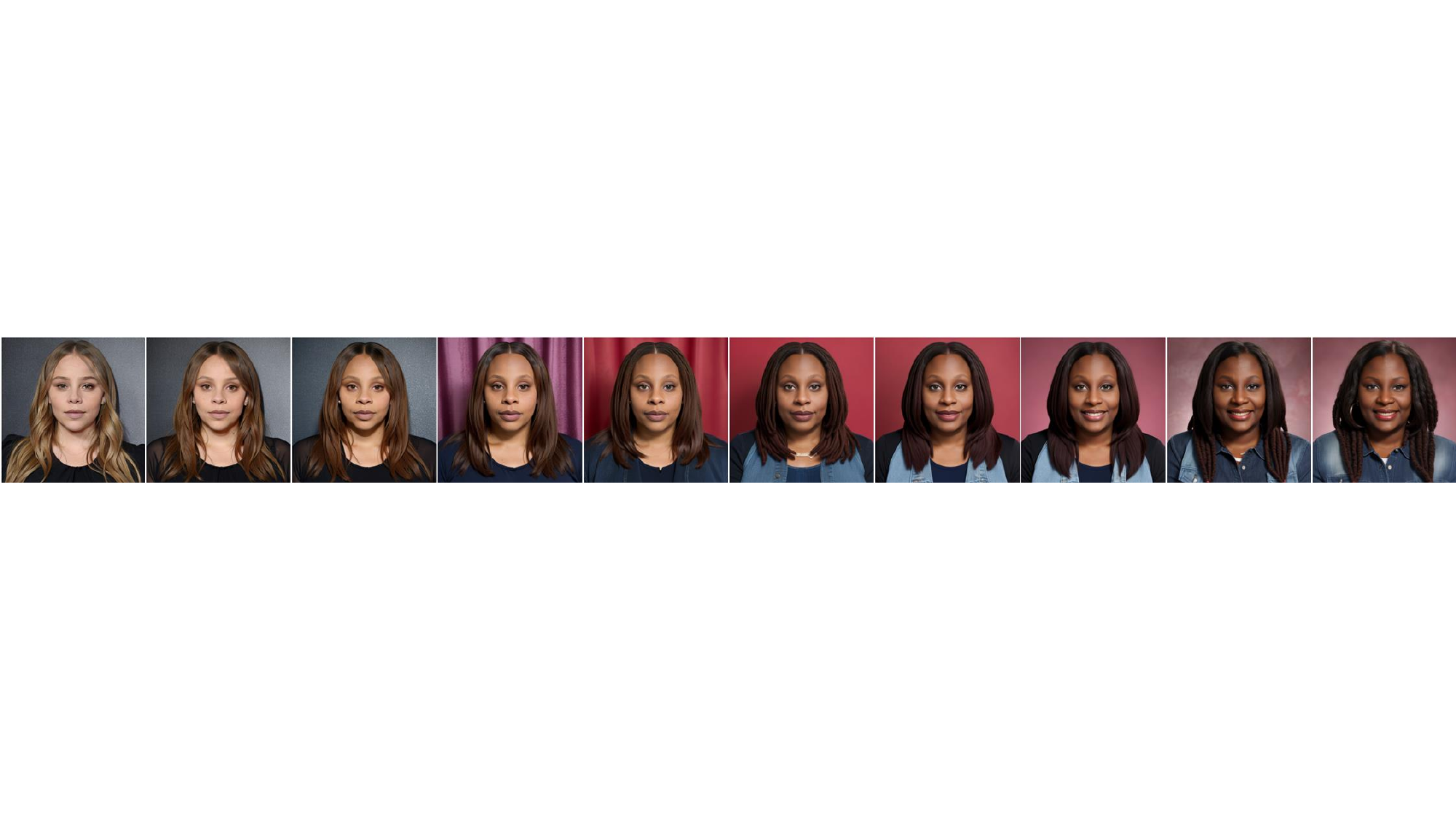}
\caption{\label{fig:supp_qual_syn_id_interpolate} \textbf{Interpolating} between two faces using DMPMulti with concept composition.}
\end{figure*}

\begin{figure*}[t]
\begin{center}
\captionof{table}{\label{tab:hyperparams}Hyperparameter Settings for the DMP models trained across three different tasks namely personalization, concepts and classification. }
\setlength{\tabcolsep}{4pt}
\begin{tabular}{ l| l | l | l | l}
\toprule
 &
{\centering DMPVariation} &
{\centering DMPMulti} &
{\centering DMPCoOp} &
{\centering Autoencoder} 
\\
\midrule
$z$-shape & - & - & 16 x 8 & 16 x 8 \\
$x$-shape & 768 x 1 & 768 x 1 & 2048 x 1 & 2048 x 1 \\
$|Z|$ & - & - & 128 & 128 \\
$|X|$ & 768 & 768 & 2048 & 768 \\
Diffusion steps & 1000 & 1000 & 1000 & 1000 \\
Optimizer & AdamW & AdamW & AdamW & AdamW\\
Noise Schedule & linear & linear & linear & linear\\
Nparams & 33M & 33M & 33M & 1M\\
Channels & 128 & 128 & 128 & 128\\
Depth & 1 & 1 & 1 & 1\\
Channel Multiplier & 1,2,2,2 & 1,2,2,2 & 1,2,2,2  & 1,1,1,1,2,2,2,2\\
Attention resolutions & 16, 8, 4 & 16, 8, 4 & 8, 4, 2 & -\\
Head Channels & 8 & 8 & 8 & 8\\
Batch Size & 320 & 320 & 256 & 128\\
Iterations & 100k & 100k & 100k & 100k \\
Learning Rate & 1e-6 & 1e-6 & 2.0e-7 & 4.5e-6 \\ 
\hline
\end{tabular}

\end{center}
\end{figure*}

\begin{figure*}[h]
\centering
\includegraphics[width=\textwidth, trim= 200 0 200 0, clip]{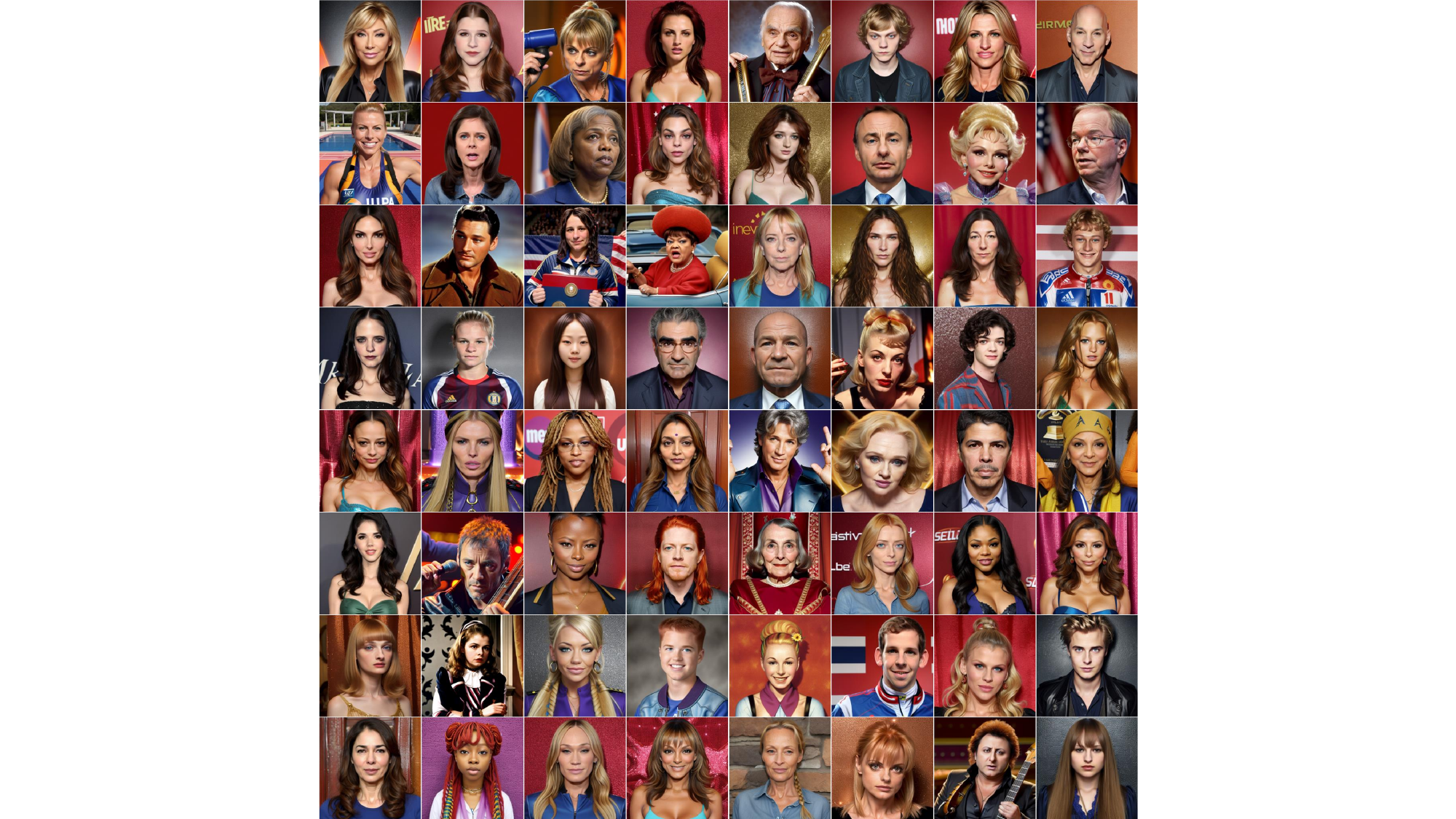}
\caption{\label{fig:qual_syn_id_novel} \textbf{Qualitative results of generating novel identities during inference} using random identity conditioning with DMPMulti  model. The prompt used for the Stable Diffusion Realistic Vision model is \textit{"A photo of a id-x"} where x is the id not present in the training set. The identities do not overfit (\textbf{mean face ID similarity of 0.0102 across training images}). Note that all the images use a fixed seed to the diffusion model.}
\end{figure*}

Figure \ref{fig:qual_syn_id} presents a comparison of the images generated by DMPMulti model for the identity labeled "id-38" and "id-96" respectively. The figure presents three classes of images: groundtruth on the left, synthesized by SD prompted by the original TI prompts in the first and third row of the right side,  and synthesized by SD prompted by the DMPMulti model, itself prompted for``identity-c." The images synthesized using DMPMulti model have quality comparable to those synthesized with TI prompts. 

\textbf{Negative Text Guidance.}  
Figure \ref{fig:qual_syn_id_neg} illustrates the effect of negative prompting by displaying images synthesized when DMPMulti  model is prompted with identity-21 (leftmost) as positive and identity-24 (second from left) as negative prompt. The generated identity exhibits contrasting characteristics, such as fuller cheeks, smaller eyes, and a broader nose—features to those of the negative identity (id-24).

\textbf{Novel Identities.}
Figure \ref{fig:supp_novel_closest_training_images} shows qualitative results of novel identities sampled by DMP and their closest training images. Figure \ref{fig:qual_syn_id_novel} shows additional qualitative results of novel identities sampled by DMPMulti.

\textbf{Identity Prompt Diffusion.}
Figure \ref{fig:supp_qual_syn_id_1} presents the qualitative results of various identities sampled by DMPMulti  model. The figure contains the groundtruth on the left, and synthesized by SD prompted by the DMPMulti, itself prompted for``identity-c" on the right. The images synthesized using both models reflect the original identities in the groundtruth images.

\textbf{Identity Composition.}  
Figure \ref{fig:supp_qual_syn_id_compose} demonstrates additional results of combining two identities, displayed on the left, using DMPMulti to synthesize identity prompts with (\ref{eq:7}). The synthesized identity clearly incorporates prominent features from both original faces, such as the nose and chin, resulting in a cohesive blend of attributes.

Figure \ref{fig:supp_qual_syn_id_compose_sd15} shows the results of identity composition using SDv1.5 checkpoint that uses the same CLIP text encoder as SD-Realistic Vision checkpoint. It shows that DMP performs effectively  without requiring retraining for this version. Since DMP was trained in CLIP text space, it generalizes to all models sharing the CLIP text encoder, eliminating the need for retraining on specific model versions.

\noindent\textbf{Interpolation.}
Figure \ref{fig:supp_qual_syn_id_interpolate} shows the qualitative results of interpolating between two faces using the DMPMulti model with classifier-free guidance scale between 0 to 5. The results show that DMPMulti enables fine-grained interpolation by simply manipulating the guidance scale.

\subsection{Limitations and Future Work}\label{app:limitations}
While the DMP framework unifies and improves prompt generation and generalization, simplifying deployment, the effectiveness of DMP is fundamentally constrained by the quality and expressiveness of the underlying prompt learning method used to construct the training repository. Second, DMP inherits the limitations of text prompts such as lack of fine-grained control and is sensitive to noise seeds similar to image diffusion models with a small variance between the sampled prompts. In our experiments, DMP maintained high accuracy even at different noise seeds with only a standard deviation of 0.7 \% from the mean accuracy for classification tasks.

Future research directions can address these limitations by exploring joint training of the meta-model and the downstream foundation models, potentially overcoming the performance ceiling imposed by existing prompt learning techniques. Other directions for future work can explore ways for distilling LoRA~\citep{hu2022lora} adapters into prompts and the design of a Meta-LoRA model, which synthesizes weight matrices instead of prompts (a more complex problem due to the large parameter cardinality of LoRA weights). 

\subsection{Implementation details}\label{app:implementation}
In this section, we describe the evaluation setup followed in our experiments and the rationale behind choosing the setup. We then describe the hyperparameter settings used to train all DMP models and finally the prompt format used in DMPMulti model.

\textbf{Evaluation Setup.}
For downstream model, we use stable diffusion~\citep{rombach2022high} \texttt{Realistic-Vision-v4} checkpoint using classifier-free guidance with a scale of $4.5$ and $30$ DDIM steps for the image synthesis. For SD-XL~\citep{sdxllorainference} model, we use a scale of $7.5$ with $20$ DDIM steps. Note that, because DMP prompts are introduced in the CLIP text encoder, they can be interchangeably used with any diffusion model using this encoder. Our choice of downstream diffusion model follows the original prompting methods. 

For models other than CoOp, additional weights or head layers are optimized to prompt the deeper layers of the CLIP encoders. Due to the large number of parameters associated with these weights, it is infeasible to train a diffusion model to synthesize these parameters. For example, the projection matrices of MaPLe have 3.55 million parameters while CoPrompt and TAC have 4.65 million parameters each. This is much larger than even the images produced by Stable diffusion (65536 parameter latent). In contrast, all text prompts have only 2048 or fewer parameters (only 256 parameters in the latent space). So, we only synthesize the text prompts attached to the input of CLIP model with DMP. During inference, we replace the learned textual prompt from the baseline method with the prompt sampled with DMP while keeping the other weights of the baseline method fixed. As followed in prompt learning literature, we pick three random seeds and report the average results. We note that different initialization seeds result in minor performance differences which explains the performance difference from the original paper reported results.

\subsubsection{Hyperparameter Settings}
Table \ref{tab:hyperparams} summarizes the detailed hyperparameter settings of the DMP models trained from scratch reported in the main paper.

\subsubsection{Evaluation Prompts for DMPMulti}\label{app:slidereval}

The example format of the prompts used for evaluation is shown below for the concept "Age",

\begin{itemize}
    \item A portrait of a woman with a warm smile, \{\}
    \item A person's face, \{\}
    \item A man sitting on a park bench, reminiscing about his youth, \{\}
    \item A couple of friends enjoying a picnic together, \{\}
    \item A photo of a person, \{\}
\end{itemize}

\end{document}